\newtheorem{theorem}{Theorem}
\newtheorem{lemma}{Lemma}
\newtheorem{definition}{Definition}
\newtheorem{assumption}{Assumption}
\newcommand{\longversion}[1]{}
\renewcommand{\longversion}[1]{}
\begin{document}

\title{Large Scale Markov Decision Processes with Changing Rewards}
\author{Adrian Rivera Cardoso \thanks{School of Industrial and Systems Engineering, Georgia Institute of Technology. {\tt adrian.riv@gatech.edu}. } \and  He Wang \thanks{School of Industrial and Systems Engineering, Georgia Institute of Technology. {\tt he.wang@isye.gatech.edu}.} \and  Huan Xu \thanks{School of Industrial and Systems Engineering, Georgia Institute of Technology. {\tt huan.xu@isye.gatech.edu}.}}
 \maketitle

\begin{abstract}
We consider Markov Decision Processes (MDPs) where the rewards are unknown and may change in an adversarial manner.  We provide an algorithm that achieves state-of-the-art regret bound of $O( \sqrt{\tau (\ln|S|+\ln|A|)T}\ln(T))$, where $S$ is the state space, $A$ is the action space, $\tau$ is the mixing time of the MDP, and $T$ is the number of periods. The algorithm's computational complexity  is polynomial in $|S|$ and $|A|$ per period. We then consider a setting often encountered in practice, where the state space of the MDP is too large to allow for exact solutions. By approximating the state-action occupancy measures with a linear architecture of dimension $d\ll|S|$, we propose a modified algorithm with computational complexity polynomial in $d$. We also prove a regret bound for this modified algorithm, which to the best of our knowledge this is the first $\tilde{O}(\sqrt{T})$ regret bound for large scale MDPs with changing rewards.
\end{abstract}

\section{Introduction}

In this paper, we study Markov Decision Processes (hereafter MDPs) with arbitrarily varying rewards.
MDP provides a general mathematical framework for modeling sequential decision making under uncertainty \cite{bertsekas1995dynamic, howard1960dynamic, puterman2014markov}. In the standard MDP setting, if the process is in some state $s$, the decision maker takes an action $a$ and receives an expected reward $r(s,a)$, before the process randomly transitions into a new state. The goal of the decision maker is to maximize the total expected reward. It is assumed that the decision maker has complete knowledge of the reward function $r(s,a)$, which does not change over time.

Over the past two decades, there has been much interest in sequential learning and decision making in an unknown and possibly \emph{adversarial} environment.
A wide range of sequential learning problems can be modeled using the framework of Online Convex Optimization (OCO) \cite{zinkevich2003online,hazan2016introduction}. In OCO, the decision maker plays a repeated game against an adversary for a given number of rounds. At the beginning of each round indexed by $t$, the decision maker chooses an action $a_t$ in some convex compact set $A$ and the adversary chooses a concave reward function $r_t$, hence a reward of $r_t(a_t)$ is received. After observing the realized reward function, the decision maker chooses its next action $a_{t+1}$ and so on. Since the decision maker does not know how the future reward functions will be chosen, its goal is to achieve a small \emph{regret}; that is, the cumulative reward earned throughout the game should be close to the cumulative reward if the decision maker had been given the benefit of hindsight to choose one fixed action. We can express the regret after $T$ rounds as
\[
\text{Regret} (T) = \max_{a \in A} \sum_{t=1}^T r_t(a) - \sum_{t=1}^T r_t(a_t).
\]
The OCO model has many applications such as universal portfolios \cite{cover1991, kalai2002, helmbold1998}, online shortest path \cite{takimoto2003path}, and online submodular minimization \cite{hazan2012submodular}. It also has close relations with areas such as convex optimization \cite{hazan2010optimal, ben2015oracle} and game theory \cite{cesa2006prediction}. There are many algorithms that guarantee sublinear regret, e.g., Online Gradient Descent \cite{zinkevich2003online}, Perturbed Follow the Leader \cite{kalai2005efficient}, and Regularized Follow the Leader \cite{shalev2007online,abernethy2009competing}.
Compared with the MDP setting, the main difference is that in OCO there is no notion of states, however the payoffs may be chosen by an adversary.

In this work, we study a general problem that unites the MDP and the OCO frameworks, which we call the {\bf Online MDP problem}. More specifically, we consider MDPs where the decision maker knows the transition probabilities but the rewards are dynamically chosen by an adversary. 
The Online MDP model can be used for a wide range of applications, including multi-armed bandits with constraints \cite{yu2009markov}, the paging problem in computer operating systems \cite{even2009online}, the $k$-server problem  \cite{even2009online}, stochastic inventory control in operations research \cite{puterman2014markov}, and scheduling of queueing networks \cite{de2003linear,abbasi2014linear}.

\subsection{Main Results}
We propose a new computationally efficient algorithm that achieves near optimal regret for the Online MDP problem.
Our algorithm is based on the linear programming formulation of infinite-horizon average reward MDPs, which uses the occupancy measure of state-action pairs as decision variables. This approach differs from other papers that have studied the Online MDP problem previously, see review in \S\ref{subsec:literature}. 

We prove that the algorithm achieves regret bounded by $O(\tau +\sqrt{\tau T  (\ln \vert S \vert +\ln \vert A \vert)} \ln(T) )$, where $S$ denotes the state space, $A$ denotes the action space, $\tau$ is the mixing time of the MDP, and $T$ is the number of periods. Notice that this regret bound depends \emph{logarithmically} on the size of state and action space.
The algorithm solves a regularized linear program in each period with $poly(|S||A|)$ complexity. The regret bound and the computation complexity compares favorably to the existing methods discussed in \S\ref{subsec:literature}. 


We then extend our results to the case where the state space $S$ is extremely large so that $poly(|S||A|)$ computational complexity is impractical. We assume the state-action occupancy measures associated with stationary policies are approximated with a linear architecture of dimension $d \ll |S|$.
We design an approximate algorithm combining several innovative techniques for solving large scale MDPs inspired by \cite{abbasi2019large,abbasi2014linear}.
A salient feature of this algorithm is that its computational complexity does not depend on the size of the state-space but instead on the number of features $d$.
The algorithm has a regret bound $O(c_{S,A}(\ln|S|+\ln|A|)\sqrt{\tau T}\ln T)$, where $c_{S,A}$ is a problem dependent constant.
To the best of our knowledge, this is the first $\tilde{O}(\sqrt{T})$ regret result for large scale Online MDPs.

\subsection{Related Work}
\label{subsec:literature}

The history of MDPs goes back to the seminal work of Bellman \cite{bellman1957markovian} and Howard \cite{howard1960dynamic} from the 1950's. 
Some classic algorithms for solving MDPS include policy iteration, value iteration, policy gradient, Q-learning and their approximate versions (see \cite{puterman2014markov, bertsekas1995dynamic,bertsekas1996neuro} for an excellent discussion). In this paper, we will focus on a relatively less used approach, which is based on finding the \textit{occupancy measure} using linear programming, as done recently in \cite{chen2018scalable,wang2017primal,abbasi2019large} to solve MDPs with \emph{static}  rewards (see more details in Section \ref{mdp_via_lp}). To deal with the curse of dimensionality, \cite{chen2018scalable} uses bilinear functions to approximate the occupancy measures and \cite{abbasi2019large} uses a linear approximation.

The Online MDP problem was first studied a decade ago by \cite{yu2009markov,even2009online}.  In \cite{even2009online}, the authors developed no regret algorithms where the bound scales as $O(\tau^2 \sqrt{T \ln(\vert A \vert)})$, where $\tau$ is the mixing time (see \S\ref{sec:mdp_rftl}).  Their method runs an expert algorithm (e.g. Weighted Majority \cite{littlestone1994weighted}) on every state where the actions are the experts. However, the authors did not consider the case with large state space in their paper. 
 In \cite{yu2009markov}, the authors provide a more computationally efficient algorithm using a variant of Follow the Perturbed Leader \cite{kalai2005efficient}, but unfortunately their regret bound becomes $O(|S||A|^2\tau T^{3/4+\epsilon})$. 
They also considered approximation algorithm for large state space, but did not establish an exact regret bound.
 The work most closely related to ours is that from \cite{dick2014online}, where the authors also use a linear programming formulation of MDP similar to ours. 
However, there seem to be some gaps in the proof of their results.\footnote{In particular, we believe the proof of Lemma 1 in \cite{dick2014online} is incorrect. Equation (8) in their paper states that the regret relative to a policy is equal to the sum of a sequence of vector products; however, the dimensions of vectors involved in these dot products are incompatible. By their definition, the variable $\nu_t$ is a vector of dimension $\vert S \vert$, which is being multiplied with a loss vector with dimension $\vert S \vert \vert A \vert$.}

 The paper \cite{ma2015online} also considers Online MDPs with large state space.  Under some conditions,  they show sublinear regret using a variant of approximate policy iteration, but the regret rate is left unspecified in their paper. \cite{zimin2013online} considers a special class of MDPs called \textit{episodic} MDPs and design algorithms using the occupancy measure LP formulation. Following this line of work, \cite{neu2017unified} shows that several reinforcement learning algorithms can be viewed as variant of Mirror Descent \cite{juditsky2011first} thus one can establish convergence properties of these algorithms. In \cite{neu2014online} the authors consider Online MDPs with bandit feedback and provide an algorithm based on \cite{even2009online}'s with regret of $O(T^{2/3})$.
 
 A more general problem to the Online MDP setting considered here is where the MDP transition probabilites also change in an adversarial manner, which is beyond the scope of this paper. It is believed that this problem is much less tractable computationally \cite[see discussion in][]{even2005experts}. \cite{yu2009online} studies MDPs with changing transition probabilities, 
although \cite{neu2014online} questions the correctness of their result, as the regret obtained seems to have broken a lower bound. In \cite{gajane2018sliding}, the authors use a sliding window approach under a particular definition of regret. \cite{abbasi2013online} shows sublinear regret with changing transition probabilities when they compare against a restricted policy class.

\section{Problem Formulation: Online MDP}\label{section:online_mdps}
We consider a general Markov Decision Process with known transition probabilities but unknown and adversarially chosen rewards. Let $S$ denote the set of possible states, and $A$ denote the set of actions. (For notational simplicity, we assume the set of actions a player can take is the same for all states, but this assumption can be relaxed easily.)
At each period $t \in [T]$, if the system is in state $s_t \in S$, the decision maker chooses an action $a_t \in A$ and collects a reward $r_t(s_t,a_t)$. Here, $r_t : S \times A \rightarrow [-1,1]$ denotes a reward function for period $t$. 
We assume that the sequence of reward functions $\{r_t\}_{t=1}^T$ is initially unknown to the decision maker. The function $r_t$ is revealed only after the action $a_t$ has been chosen. We allow the sequence $\{r_t\}_{t=1}^T$ to be chosen by an \textit{adaptive adversary}, meaning $r_t$ can be chosen using the history $\{s_i\}_{i=1}^{t}$ and $\{a_i\}_{i=1}^{t-1}$; in particular, the adversary does \emph{not} observe the action $a_t$ when choosing $r_t$. 
After $a_t$ is chosen, the system then proceeds to state $s_{t+1}$ in the next period with probability $P(s_{t+1}\vert s_t, a_t)$. 
We assume the decision maker has complete knowledge of the transition probabilities given by $P(s' \vert s, a) : S \times A \rightarrow S$.

Suppose the initial state of the MDP follows $s_1 \sim \nu_1$, where $\nu_1$ is a probability distribution over  $S$. 
The objective of the decision maker is to choose a sequence of actions based on the history of states and rewards observed, such that the cumulative reward in $T$ periods is close to that of the optimal offline static policy.
Formally, let $\pi$ denote a stationary (randomized) policy: $\pi:S\rightarrow \Delta_A$, where $\Delta_A$ is the set of probability distributions over the action set $A$. Let $\Pi$ denote the set of all stationary policies. We aim to find an algorithm that minimizes  
\begin{align}\label{regret_def}
\text{MDP-Regret}(T)\triangleq \sup_{\pi \in \Pi} R(T,\pi), \; \text{with } R(T,\pi) \triangleq \mathbb{E}[\sum_{t=1}^T r_t(s^\pi_t , a^\pi_t)] - \mathbb{E}[\sum_{t=1}^T r_t(s_t,a_t)],
\end{align}
where the expectations are taken with respect to random transitions of MDP and (possibly) external randomization of the algorithm.

 \section{Preliminaries}


Next we provide additional notation for the MDP.  
Let $P^\pi_{s,s'} \triangleq P(s' \mid s, \pi(s))$ be the probability of transitioning from state $s$ to $s'$ given a policy $\pi$. Let $P^\pi$ be the $\vert S\vert \times \vert S\vert$ matrix with entries $P^\pi_{s,s'} \, \forall s,s' \in S$. 
We use row vector $\nu_t \in \Delta_S$ to denote the probability distribution over states at time $t$. Let $\nu^\pi_{t+1}$ be the distribution over states at time $t+1$ under policy $\pi$, given by $\nu^{\pi}_{t+1} = \nu_{t} P^\pi$.  
Let $\nu^\pi_{st}$ denote the stationary distribution for policy $\pi$, which satisfies the linear equation $\nu^\pi_{st} = \nu^\pi_{st} P^\pi$. 
We assume the following condition on the convergence to stationary distribution, which is commonly used in the MDP literature \cite[see][]{yu2009markov,even2009online,neu2014online}.

\begin{assumption}\label{assumption:mixing}
There exists a real number $\tau \geq 0$ such that for any policy $\pi \in \Pi$ and any pair of distributions $\nu,\nu' \in \Delta_S$, it holds that $\Vert \nu P^\pi - \nu' P^\pi\Vert_1 \leq e^{-\frac{1}{\tau}}\Vert \nu - \nu'\Vert_1$.
\end{assumption}

We refer to $\tau$ in Assumption~\ref{assumption:mixing} as the \emph{mixing time}, which measures the convergence speed to the stationary distribution. In particular, the assumption implies that  $\nu^\pi_{st}$ is unique for a given policy $\pi$.

We use $\mu(s,a)$ to denote the proportion of time that the MDP visits state-action pair $(s,a)$ in the long run. We call $\mu^\pi \in \mathbb{R}^{\vert S\vert \times \vert A\vert}$ the \emph{occupancy measure} of policy $\pi$. 
Let $\rho_t^\pi $ be the long-run average reward under policy $\pi$ when the reward function is fixed to be $r_t$ every period, i.e., $\rho^\pi_t \triangleq  \lim_{T\rightarrow \infty} \frac{1}{T} \sum_{i=1}^T\mathbb{E}[ r_t(s^\pi_i,a^\pi_i) ] $. We define $\rho_t \triangleq \rho_t^{\pi_t}$, where $\pi_t$ is the policy selected by the decision maker for time $t$.


\subsection{Linear Programming Formulation for the Average Reward MDP}\label{mdp_via_lp}
Given a reward function $r: S \times A \rightarrow [-1,1]$, suppose one wants to find a policy $\pi$ that maximizes the long-run average reward: $\rho^*=\sup_{\pi}\lim_{T\rightarrow \infty} \frac{1}{T}\sum_{t=1}^T r(s^\pi_t,a^\pi_t)$.
Under Assumption~\ref{assumption:mixing}, the Markov chain induced by any policy is ergodic and the long-run average reward is independent of the starting state \cite{bertsekas1995dynamic}. 
It is well known that 
the optimal policy can be obtained by solving the Bellman equation, which in turn can be written as a linear program (in the dual form): 
\begin{align}
\rho^* = \max_\mu & \sum_{s\in S} \sum_{a \in A} \mu(s,a)r(s,a) \label{eq:LP} \\
\text{s.t. } &  \sum_{s\in S} \sum_{a\in A} \mu(s,a) P(s' \vert s,a) = \sum_{a\in A} \mu (s',a) \quad \forall s' \in S \nonumber \\
&\sum_{s\in S} \sum_{a\in A} \mu(s,a) = 1,\quad \mu(s,a)\geq 0 \quad \forall s\in S,\,\forall a\in A. \nonumber 
\end{align}
Let $\mu^*$ be an optimal solution to the LP \eqref{eq:LP}. We can construct an optimal policy of the MDP by defining $ \pi^*(s,a) \triangleq \frac{\mu^*(s,a)}{\sum_{a\in A} \mu^*(s,a)}$ for all $s\in S$ such that $\sum_{a\in A} \mu^*(s,a)>0$;  for states where the denominator is zero, the policy may choose  arbitrary actions, since those states will not be visited in the stationary distribution.
 Let $\nu^*_{st}$ be the stationary distribution over states under this optimal policy.  

For simplicity, we will write the first constraint of LP \eqref{eq:LP}
in the matrix form as $\mu^\top (P-B)=0$, for appropriately chosen matrix $B$. We denote the feasible set of the above LP as $\Delta_M \triangleq \{\mu\in \mathbb{R}: \mu \geq 0, \mu^\top1=1, \mu^\top(P-B)=0 \}$. 
The following definition will be used in the analysis later.
\begin{definition}\label{def:delta_0}
Let $\delta_0 \geq 0$ be the largest real number such that for all $\delta \in[0,\delta_0]$, the set $\Delta_{M,\delta}\triangleq\{ \mu \in \mathbb{R}^{\vert S \vert \times \vert A \vert}: \mu \geq \delta, \mu^\top 1 = 1, \mu^\top(P-B)=0 \}$ is nonempty.
\end{definition}

\section{A Sublinear Regret Algorithm for Online MDP}\label{sec:mdp_rftl}

In this section, we present an algorithm for the Online MDP problem.

\begin{algorithm}[!htb]
\caption{(\textsc{MDP-RFTL})}
\label{alg:MDP-RFTL}
\begin{algorithmic}
  \STATE {\bfseries input:} parameter $\delta>0, \eta>0$,  regularization term $R(\mu) = \sum_{s\in S} \sum_{a\in A} \mu(s,a) \ln(\mu(s,a))$
  \STATE {\bfseries initialization:} choose any $\mu_1 \in \Delta_{M,\delta} \subset \mathbb{R}^{|S|\times|A|}$
  \FOR{$t=1,...T$}  
  \STATE observe current state $s_t$
  \IF{$\sum_{a\in A}\mu_t(s_t,a) > 0$} 
  	\STATE {choose action $a \in A$ with probability $\frac{\mu_t(s_t,a)}{\sum_{a}\mu_t(s_t,a)}$.}
  \ELSE 
  	\STATE{choose action $a\in A$ with probability $\frac{1}{|A|}$} 
  \ENDIF  
  \STATE observe reward function $r_t \in [-1,1]^{\vert S\vert \vert A \vert}$
  \STATE update $\mu_{t+1}\leftarrow \arg \max_{\mu \in \Delta_{M,\delta}} \sum_{i=1}^t \left[ \langle r_i , \mu \rangle - \frac{1}{\eta}R(\mu) \right]$
  \ENDFOR
\end{algorithmic}
\end{algorithm}

At the beginning of each round $t\in [T]$, the algorithm starts with an occupancy measure $\mu_t$. If the MDP is in state $s_t$, we play action $a\in A$ with probability $\frac{\mu_t(s_t,a)}{\sum_{a}\mu_t(s_t,a)}$. If the denominator is 0, the algorithm picks any action in $A$ with equal probability.
After observing reward function $r_t$ and collecting reward $r_t(s_t,a_t)$,  the algorithm changes the occupancy measure to $\mu_{t+1}$. 

The new occupancy measure is chosen according to the Regularized Follow the Leader (RFTL) algorithm
 \cite{shalev2007online,abernethy2009competing}. RFTL chooses the best occupancy measure for the cumulative reward  observed so far $\sum_{i=1}^t r_i$, plus a regularization term $R(\mu)$. The regularization term forces the algorithm  not to drastically change the occupancy measure from round to round.  In particular, we choose $R(\mu)$ to be the entropy function.

The complete algorithm is shown  in Algorithm~\ref{alg:MDP-RFTL}. 
The main result of this section is the following.

\begin{theorem}\label{theorem:mdp_rftl}
Suppose $\{r_t\}_{t=1}^T$ is an arbitrary sequence of rewards such that $\vert r_t(s,a)\vert \leq 1$ for all $s\in S$ and $a\in A$. For $T\geq \ln^2({1}/{\delta_0})$, the MDP-RFTL algorithm with parameters $\eta = \sqrt{\frac{T \ln(\vert S \vert \vert A \vert)}{\tau}}$, $\delta = e^{-{\sqrt{T}}/{\sqrt{\tau}}}$ guarantees
\begin{align*}
\text{MDP-Regret}(T) \leq O \left( \tau + 4 \sqrt{\tau T  (\ln\vert S \vert + \ln \vert A \vert)} \ln(T) \right).
\end{align*}
\end{theorem}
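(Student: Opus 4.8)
The plan is to reduce $\text{MDP-Regret}(T)$ to a standard online linear optimization problem over the occupancy-measure polytope $\Delta_M$, and then to pay separately for the two ``lag'' errors coming from the gap between an occupancy measure and the state marginal actually induced by the time-varying policies. Writing $\mu^\pi\in\Delta_M$ for the occupancy measure of a fixed comparator policy $\pi$ and recalling that $\langle r_t,\mu^\pi\rangle$ equals its stationary reward $\rho^\pi_t$, I would decompose, for each fixed $\pi$,
\begin{align*}
R(T,\pi) = \underbrace{\Big(\mathbb{E}[\textstyle\sum_t r_t(s^\pi_t,a^\pi_t)] - \sum_t\langle r_t,\mu^\pi\rangle\Big)}_{(\mathrm I)} + \underbrace{\Big(\sum_t\langle r_t,\mu^\pi\rangle - \sum_t\langle r_t,\mu_t\rangle\Big)}_{(\mathrm{II})} + \underbrace{\Big(\sum_t\langle r_t,\mu_t\rangle - \mathbb{E}[\textstyle\sum_t r_t(s_t,a_t)]\Big)}_{(\mathrm{III})}.
\end{align*}
After taking $\sup_\pi$, term $(\mathrm{II})$ is exactly the online-learning regret of the RFTL iterates $\mu_t$ against the best fixed point of $\Delta_M$, while $(\mathrm I)$ and $(\mathrm{III})$ measure how far the true state marginals lag behind the relevant stationary distributions.

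For term $(\mathrm I)$ I would use that under a fixed policy the marginal is $\nu^\pi_t=\nu_1(P^\pi)^{t-1}$, so Assumption~\ref{assumption:mixing} gives $\Vert\nu^\pi_t-\nu^\pi_{st}\Vert_1\le e^{-(t-1)/\tau}\Vert\nu_1-\nu^\pi_{st}\Vert_1\le 2e^{-(t-1)/\tau}$; since $|r_t|\le 1$, summing the geometric series bounds $(\mathrm I)$ by $2/(1-e^{-1/\tau})=O(\tau)$, uniformly in $\pi$. Term $(\mathrm{III})$ is more delicate because the algorithm's policy changes every round. Setting $d_t\triangleq\Vert\nu_t-\nu^{\pi_t}_{st}\Vert_1$ and combining $\nu_{t+1}=\nu_t P^{\pi_t}$ with the triangle inequality and mixing, I would establish the recursion $d_{t+1}\le e^{-1/\tau}d_t+\Vert\nu^{\pi_t}_{st}-\nu^{\pi_{t+1}}_{st}\Vert_1$. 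Because $\mu_t\in\Delta_M$ is a valid occupancy measure, its state marginal $\sum_a\mu_t(s,a)$ is precisely $\nu^{\pi_t}_{st}$, so the last term is at most $\Vert\mu_{t+1}-\mu_t\Vert_1$; unrolling then yields $(\mathrm{III})\le\sum_t d_t\le \tfrac{1}{1-e^{-1/\tau}}\big(2+\sum_t\Vert\mu_{t+1}-\mu_t\Vert_1\big)=O\big(\tau+\tau\sum_t\Vert\mu_{t+1}-\mu_t\Vert_1\big)$. Thus both lags are controlled once I bound the total movement of the RFTL iterates.

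For term $(\mathrm{II})$ I would invoke the FTRL analysis with the entropy regularizer $R$, which is $1$-strongly convex w.r.t.\ $\Vert\cdot\Vert_1$ by Pinsker, on the truncated polytope $\Delta_{M,\delta}$. First I replace $\mu^\pi\in\Delta_M$ by a nearby $\mu^\pi_\delta\in\Delta_{M,\delta}$ (a $\delta$-mixture toward the interior point guaranteed by Definition~\ref{def:delta_0}), paying $\sum_t\Vert r_t\Vert_\infty\Vert\mu^\pi-\mu^\pi_\delta\Vert_1=O(T\delta|S||A|)$, which is negligible for $\delta=e^{-\sqrt{T/\tau}}$. The key observation is that the update of Algorithm~\ref{alg:MDP-RFTL} regularizes the cumulative reward with the \emph{growing} weight $\lambda_t=t/\eta$, equivalently $\mu_{t+1}=\arg\max_\mu\langle\bar r_t,\mu\rangle-\tfrac1\eta R(\mu)$ where $\bar r_t=\tfrac1t\sum_{i\le t}r_i$. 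Since $R$ ranges over an interval of width $O(\ln(|S||A|))$ on $\Delta_{M,\delta}$, the increasing-regularization FTRL bound gives $(\mathrm{II})\le\lambda_T\cdot O(\ln(|S||A|))+\sum_t\tfrac{\Vert r_t\Vert_\infty^2}{2\lambda_t}=O\big(\tfrac{T\ln(|S||A|)}{\eta}+\eta\ln T\big)$. The same regularized-best-response form makes the iterate map $\eta$-Lipschitz from reward to measure, and since $\Vert\bar r_t-\bar r_{t-1}\Vert_\infty=O(1/t)$, the per-step movement is $\Vert\mu_{t+1}-\mu_t\Vert_1=O(\eta/t)$, whence $\sum_t\Vert\mu_{t+1}-\mu_t\Vert_1=O(\eta\ln T)$ and $(\mathrm{III})=O(\tau\eta\ln T)$. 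Collecting the three terms gives
\[
\text{MDP-Regret}(T)=O\!\left(\tau+\frac{T\ln(|S||A|)}{\eta}+\tau\eta\ln T\right),
\]
and the choice $\eta=\sqrt{T\ln(|S||A|)/\tau}$ balances the last two terms into $O\big(\sqrt{\tau T(\ln|S|+\ln|A|)}\,\ln T\big)$; the hypothesis $T\ge\ln^2(1/\delta_0)$ is what ensures $\delta\le\delta_0$, so that $\Delta_{M,\delta}\neq\emptyset$.

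The crux, and the step I expect to be the main obstacle, is the joint control of term $(\mathrm{III})$: one must simultaneously exploit the MDP mixing (to damp the lag $d_t$ geometrically) and the algorithmic stability of RFTL (to bound the drift $\Vert\mu_{t+1}-\mu_t\Vert_1$ of the induced stationary distributions), and then show these two effects multiply into a single clean $\tau\sum_t\Vert\mu_{t+1}-\mu_t\Vert_1$ term rather than compounding. Obtaining the sharp per-step movement $O(\eta/t)$ under the growing regularization is precisely what produces the $\ln T$ factor (via the harmonic sum) and couples $\tau$ into the leading $\sqrt{\tau T\ln(|S||A|)}$ rate; a looser $O(\eta)$ bound, appropriate for constant regularization, would instead cost an unacceptable factor of $\sqrt{T}$. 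I would also have to dispatch the measurability subtlety that the adversary is adaptive but does not observe $a_t$, so that conditionally on the history $\mathbb{E}[r_t(s_t,a_t)]=\sum_a\pi_t(s_t,a)r_t(s_t,a)$ and the occupancy-measure interpretation of the realized reward remains valid.
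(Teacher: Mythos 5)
Your proposal follows the paper's proof essentially step for step: the same three-term decomposition of $R(T,\pi)$, the same mixing argument for term $(\mathrm I)$ and the same recursion plus stationary-distribution-to-occupancy-measure comparison (the paper's Lemma~\ref{lemma:nu_mu}) for term $(\mathrm{III})$, and the same treatment of term $(\mathrm{II})$ via entropy-regularized follow-the-leader on the shrunk set $\Delta_{M,\delta}$ with the shrinkage cost $O(\delta T\vert S\vert\vert A\vert)$ and the identical parameter choices. The only deviation is cosmetic and sound: you derive the iterate stability $\Vert \mu_{t+1}-\mu_t\Vert_1 \leq 2\eta/t$ directly from the $(1/\eta)$-strong convexity of the normalized regularized best response applied to $\Vert \bar r_t - \bar r_{t-1}\Vert_\infty = O(1/t)$, which is slightly cleaner than the paper's Lemma~\ref{lemma:close_iterates} (whose bound carries the entropy Lipschitz constant $G_R=\max\{\vert\ln\delta\vert,1\}$), but this changes neither the structure of the argument nor the final rate.
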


The regret bound in Theorem~\ref{theorem:mdp_rftl} is near optimal: a lower bound of $\Omega(\sqrt{T\ln|A|})$ exists for the problem of learning with expert advice \cite{freund1999adaptive,hazan2016introduction}, a special case of Online MDP where the state space is a singleton.
We note that the bound only depends \emph{logarithmically} on the size of the state space and action space.
The state-of-the-art regret bound for Online MDPs is that of \cite{even2009online}, which is $O(\tau + \tau^2 \sqrt{\ln(|A|)T})$. Compared to their result, our bound is better by a factor of $\tau^{3/2}$. However, our bound has depends on $\sqrt{\ln|S|+\ln|A|}$, whereas the bound in \cite{even2009online} depends on $\sqrt{\ln |A|}$.
Both algorithms require $poly(|S||A|)$ computation time, but are based on different ideas:
 The algorithm of \cite{even2009online} is based on expert algorithms and requires computing $Q$-functions at each time step, whereas our algorithm is based on RFTL. In the next section, we will show how to extend our algorithm to the case with large state space.
 

\subsection{Proof Idea for Theorem \ref{theorem:mdp_rftl}}

The key to analyze the algorithm is to decompose the regret with respect to policy $\pi \in \Pi$ as follows 
{
\medmuskip=1mu
\begin{align}\label{regret_decomposition}
R(T, \pi)  =  \left[\mathbb{E}[\sum_{t=1}^T r_t(s^\pi_t , a^\pi_t)] - \sum_{t=1}^T \rho^\pi_t\right] + \left[\sum_{t=1}^T \rho^\pi_t - \sum_{t=1}^T \rho_t \right] + \left[  \sum_{t=1}^T \rho_t -  \mathbb{E}[\sum_{t=1}^T r_t(s_t,a_t)] \right].
\end{align}
}
This decomposition was first used by \cite{even2009online}. We now give some intuition on why $R(T, \pi)$ should be sublinear. By the mixing condition in Assumption~\ref{assumption:mixing}, the state distribution $\nu^\pi_t$ at time $t$ under a policy $\pi$ differs from the stationary distribution $\nu^\pi_{st}$ by at most $O(\tau)$. This result can be used to bound the first term of \eqref{regret_decomposition}.

The second term of \eqref{regret_decomposition} can be related to the online convex optimization (OCO) problem through the linear programming formulation from Section~\ref{mdp_via_lp}. Notice that $ \rho^\pi_t = \sum_{s\in S} \sum_{a\in A} \mu^\pi(s,a) r(s,a) = \langle \mu^\pi, r\rangle $, and $ \rho_t = \sum_{s\in S} \sum_{a\in A} \mu^\pi_t(s,a) r(s,a) = \langle \mu^{\pi_t}, r\rangle$.
Therefore, we have that 
\begin{align}
\sum_{t=1}^T \rho^\pi_t - \sum_{t=1}^T \rho_t  = \sum_{t=1}^T\langle \mu^\pi, r_t\rangle - \sum_{t=1}^T \langle \mu^{\pi_t}, r_t \rangle,
\end{align}
which is exactly the regret quantity commonly studied in OCO. 
We are thus seeking an algorithm that can bound $\max_{\mu \in \Delta_M}  \sum_{t=1}^T\langle \mu^\pi, r_t\rangle - \sum_{t=1}^T \langle \mu^{\pi_t}, r_t \rangle$. In order to achieve logarithmic dependence on $|S|$ and $|A|$ in Theorem~\ref{theorem:mdp_rftl}, we apply the RFTL algorithm, regularized by the negative entropy function  $R(\mu)$. A technical challenge we faced in the analysis is that $R(\mu)$ is not Lipschitz continuous over $\Delta_M$, the feasible set of LP \eqref{eq:LP}. So we design the algorithm to play in a shrunk set $\Delta_{M,\delta}$ for some $\delta >0$ (see Definition~\ref{def:delta_0}), in which $R(\mu)$ is indeed Lipschitz continuous.

For the last term in \eqref{regret_decomposition}, note that it is similar to the first term, although more complicated: the policy $\pi$ is fixed in the first term, but the policy $\pi_t$ used by the algorithm is varying over time. To solve this challenge, the key idea is to show that the policies do not change too much from round to round, so that the third term grows sublinearly in $T$. To this end, we use the property of the RFTL algorithm with carefully chosen regularization parameter $\eta>0$.
The complete proof of Theorem \ref{theorem:mdp_rftl} can be found in Appendix \ref{sec:regret_analysis}.

\section{Online MDPs with Large State Space}\label{sec:large_mdps}
In the previous section, 
we designed an algorithm for Online MDP with sublinear regret.
However, the computational complexity of our algorithm
is $O(poly(\vert S \vert \vert A \vert))$  per round. In practice, 
MDPs often have extremely large state space $S$ due to the curse of dimenionality \cite{bertsekas1995dynamic}, so computing the exact solution becomes impractical.
In this section we propose an approximate algorithm that can handle large state spaces.

\subsection{Approximating Occupancy Measures and Regret Definition}

We consider an approximation scheme introduced in \cite{abbasi2014linear} for standard MDPs. The idea is to use $d$ feature vectors (with $d \ll \vert S \vert \vert A \vert$) to approximate occupancy measures $\mu \in \mathbb{R}^{|S|\times|A|}$.
Specifically, we approximate $\mu \approx \Phi \theta$ where $\Phi$ is a given matrix of dimension $\vert S \vert \vert A \vert \times d$, and $\theta \in \Theta \triangleq \{ \theta \in \mathbb{R}_+^d: \Vert \theta \Vert_1 \leq W\}$ for some positive constant $W$.
As we will restrict the occupancy measures chosen by our algorithm to satisfy $\mu = \Phi \theta$, the definition of MDP-regret \eqref{regret_def} is too strong as it compares against all stationary policies.
Instead, we restrict the benchmark to be the set of policies $\Pi^\Phi$ that can be represented by matrix $\Phi$, where
\begin{align*}
\Pi^\Phi \triangleq \{ \pi \in \Pi : \text{ there exists $\mu^\pi \in \Delta_M$ such that } \mu^\pi = \Phi \theta  \text{ for some } \theta \in \Theta \}.
\end{align*}
Our goal will now be to achieve sublinear $\Phi$-MDP-regret defined as
\begin{align}\label{eq:regret_def_phi}
\text{$\Phi$-MDP-Regret}(T) \triangleq \max_{\pi \in \Pi^\Phi} \mathbb{E}[\sum_{t=1}^T r_t(s^\pi_t , a^\pi_t)] - \mathbb{E}[\sum_{t=1}^T r_t(s_t,a_t)],
\end{align} 
where the expectation is taken with respect to random state transitions of the MDP and randomization used in the algorithm. Additionally, we want to make the computational complexity \emph{independent} of $\vert S \vert$ and $\vert A \vert$.

{\bf Choice of Matrix $\Phi$ and Computation Efficiency.}
The columns of matrix $\Phi \in \mathbb{R}^{\vert S \vert \vert A \vert \times d}$ represent probability distributions over state-action pairs.  
The choice of $\Phi$ is problem dependent, and a detailed discussion is beyond the scope of this paper.
\cite{abbasi2014linear} shows that for many applications such as the game of Tetris and queuing networks, $\Phi$ can be naturally chosen as a sparse matrix, which allows
constant time access to entries of $\Phi$ and efficient dot product operations. We will assume such constant time access throughout our analysis.
We refer readers to \cite{abbasi2014linear} for further details.

\subsection{The Approximate Algorithm}

The algorithm we propose is built on \textsc{MDP-RFTL}, but is significantly modified in several aspects. In this section, we start with key ideas on how and why we need to modify the previous algorithm, and then formally present the new algorithm. 

To aid our analysis, we make the following definition.
\begin{definition}\label{assumption:large-MDP}
Let $\tilde{\delta}_0 \geq 0$ be the largest real number such that for all $\delta \in[0,\tilde{\delta}_0]$ the set $\Delta^\Phi_{M,\delta}\triangleq\{ \mu \in \mathbb{R}^{\vert S \vert \vert A \vert}: \text{ there exists $\theta \in \Theta$ such that } \mu = \Phi \theta, \mu \geq \delta, \mu^\top 1 = 1, \mu^\top(P-B)=0 \}$ is nonempty. We also write $\Delta_M^\Phi \triangleq \Delta^\Phi_{M,0}$.
\end{definition}

As a first attempt, one could replace the shrunk set of occupancy measures $\Delta_{M,\delta}$ in Algorithm~\ref{alg:MDP-RFTL} with $\Delta^\Phi_{M,\delta}$ defined above. We then use occupancy measures $\mu^{\Phi \theta^*_{t+1}} \triangleq \Phi \theta^*_{t+1}$ given by the RFTL algorithm, i.e.,
$ \theta^*_{t+1} \leftarrow \arg \max_{\theta \in \Delta_{M,\delta}^\Phi} \sum_{i=1}^t \left[ \langle r_i, \mu \rangle - ({1}/{\eta}) R(\mu) \right]$. 
The same proof of Theorem~\ref{theorem:mdp_rftl} would apply and guarantee a sublinear $\Phi$-MDP-Regret.
Unfortunately, replacing $\Delta_{M,\delta}$ with $\Delta^\Phi_{M,\delta}$ does not reduce the time complexity of computing the iterates $\{\mu^{\Phi \theta^*_{t}}\}_{t=1}^T$, which is still $poly(|S||A|)$.

To tackle this challenge, we will not apply the RFTL algorithm exactly, but will instead obtain an approximate solution in $poly(d)$ time. We relax the constraints $\mu \geq \delta$ and $\mu^\top(P-B) = 0$ that define the set $\Delta_{M,\delta}^\Phi$, and add the following penalty term to the objective function: 
\begin{equation}\label{eq:penalty}
    V(\theta) \triangleq -H_t \Vert (\Phi \theta )^\top (P-B) \Vert_1 - H_t \Vert \min\{\delta, \Phi \theta \} \Vert_1.
\end{equation}
Here, $\{H_t\}_{t=1}^T$ is a sequence of tuning parameters that will be specified in Theorem~\ref{thm:large_mdp_regret}.
Let $ \Theta^\Phi \triangleq \{\theta \in \Theta\ , \mathbf{1}^\top (\Phi \theta)  = 1 \}$.
Thus, the original RFTL step in Algorithm~\ref{alg:MDP-RFTL} now becomes
\begin{equation}\label{eq:c_t_eta}
\max_{\theta \in \Theta^\Phi} \sum_{i=1}^tc^{t,\eta}(\theta), \quad \text{where }
c^{t,\eta}(\theta) \triangleq \sum_{i=1}^t \left[\langle r_i , \Phi \theta \rangle -  \frac{1}{\eta} R^\delta (\Phi \theta)\right] + V(\theta).
\end{equation}
In the above function, we use a modified entropy function $R^\delta(\cdot)$ as the regularization term, because the standard entropy function has an infinite gradient at the origin.
More specifically, let 
$R_{(s,a)}(\mu)\triangleq \mu(s,a) \ln(\mu(s,a))$ be the entropy function. We define $ R^\delta (\mu) = \sum_{(s,a)} R^\delta_{(s,a)}(\mu (s,a))$, where
\begin{align}\label{eq:def_R_delta}
R^\delta_{(s,a)} \triangleq
\begin{cases}
R_{(s,a)}(\mu) \quad &\text{if } \mu(s,a) \geq \delta \\
R_{(s,a)}(\delta) +  \frac{d}{d \mu(s,a)} R_{(s,a)}(\delta) (\mu(s,a)-\delta) \quad &\text{otherwise}.
\end{cases}
\end{align}


Since computing an exact gradient for function $c^{t,\eta}(\cdot)$ would take $O(\vert S \vert \vert A \vert)$  time, we solve problem \eqref{eq:c_t_eta} by stochastic gradient ascent. The following lemma shows how to efficiently generate stochastic subgradients for function $c^{t,\eta}$ via sampling.
\begin{lemma}\label{stochastic_gradient_of_c}
Let $q_1$ be any probability distribution over state-action pairs, and $q_2$ be any probability distribution over all states. Sample a pair $(s',a')\sim q_1$ and $s'' \sim q_2$. The quantity 
\begin{align*}
g_{s',a',s''}(\theta) &= \Phi^\top r_{t} + \frac{H_t}{q_1(s',a')} \Phi_{(s',a'),:} \mathbb{I}\{\Phi_{(s',a'),:} \leq \delta \}\\
&\quad - \frac{H_t}{q_2(s'')} [(P-B)^\top \Phi]_{s'',:} sign([(P-B)^\top \Phi ]_{s'',:} \theta) - \frac{t}{\eta q_1(s',a')} \nabla_\theta R^\delta_{(s',a')}(\Phi \theta)
\end{align*}
satisfies $\mathbb{E}_{(s',a')\sim q_1, s'' \sim q_2}[g_{s',a',s''}(\theta) \vert \theta] = \nabla_\theta c^{\eta, t}(\theta)$ for any $\theta \in \Theta$. Morever, we have $ \Vert g(\theta) \Vert_2 \leq t \sqrt{d} + H_t (C_1 + C_2)+ \frac{t}{\eta}(1 + \ln (Wd) + \vert \ln(\delta) \vert) C_1,
$ w.p.1, where
\begin{equation}\label{def_of_Cs}
C_1 = \max_{(s,a)\in S \times A} \frac{\Vert \Phi_{(s,a),:}\Vert_2}{q_1(s,a)}, \quad C_2 = \max_{s\in S} \frac{\Vert (P-B)^\top_{:,s} \Phi \Vert_2}{q_2(s)}.
\end{equation}
\end{lemma}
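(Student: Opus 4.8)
The plan is to verify the two claims separately: first that $g_{s',a',s''}$ is an unbiased estimator of $\nabla_\theta c^{t,\eta}(\theta)$, and then that it is bounded in $\ell_2$-norm with probability one. I would begin by writing down the exact (sub)gradient of $c^{t,\eta}$ by differentiating its four constituent pieces. The linear reward part $\sum_{i=1}^t\langle r_i,\Phi\theta\rangle$ has gradient $\sum_{i=1}^t\Phi^\top r_i$ (appearing as the deterministic reward term of $g$), a fixed $d$-vector that can be maintained exactly and so needs no sampling. The modified-entropy regularizer contributes $-\tfrac{t}{\eta}\sum_{(s,a)}\nabla_\theta R^\delta_{(s,a)}(\Phi\theta)$ with $R^\delta$ as in \eqref{eq:def_R_delta}. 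The balance penalty $-H_t\|(\Phi\theta)^\top(P-B)\|_1=-H_t\sum_{s}\bigl|[(P-B)^\top\Phi]_{s,:}\theta\bigr|$ is non-smooth, so I would take its subgradient using $\partial|x|=\mathrm{sign}(x)$, producing $-H_t\sum_s[(P-B)^\top\Phi]_{s,:}\,\mathrm{sign}([(P-B)^\top\Phi]_{s,:}\theta)$; similarly the box penalty, being $\sum_{(s,a)}$ of a piecewise-linear function of $\mu(s,a)=\Phi_{(s,a),:}\theta$, contributes a term proportional to $\Phi_{(s,a),:}\,\mathbb{I}\{\Phi_{(s,a),:}\theta\leq\delta\}$ summed over all state-action pairs.

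Next I would establish unbiasedness by taking $\mathbb{E}_{(s',a')\sim q_1,\,s''\sim q_2}[\,\cdot\mid\theta]$ term by term, using linearity of expectation and the independence of the two samples. The key device is inverse-probability (importance) weighting: a single coordinate $(s',a')$ reweighted by $1/q_1(s',a')$ is an unbiased estimate of the full sum over $(s,a)$, and $s''$ reweighted by $1/q_2(s'')$ recovers the full sum over $s$. This matches the reweighted box-penalty and entropy terms to their exact counterparts through $q_1$, the balance-penalty term through $q_2$, while the deterministic reward term passes through unchanged. Care is needed that each $\ell_1$ penalty is sampled from the correct distribution ($q_1$ over state-action pairs for the box and entropy terms, $q_2$ over states for the balance term) and that the $\mathrm{sign}$ and indicator subgradient selections are used consistently in both the exact gradient and its estimator.

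Finally I would prove the almost-sure norm bound by the triangle inequality over the four terms. For the reward term, since each column of $\Phi$ is a probability distribution over state-action pairs and $|r_i|\leq 1$, each $\|\Phi^\top r_i\|_2\leq\sqrt d$, so summing over $i\leq t$ gives at most $t\sqrt d$. For the two penalty terms, the $\mathrm{sign}$ and indicator factors are bounded by $1$ in absolute value, so their norms are at most $H_tC_1$ and $H_tC_2$ directly from the definitions in \eqref{def_of_Cs}. The main obstacle is the entropy term, which requires a uniform bound on $\bigl|\tfrac{d}{d\mu}R^\delta_{(s,a)}(\mu)\bigr|$: this is exactly why $R^\delta$ replaces the plain entropy, whose derivative $\ln\mu+1\to-\infty$ as $\mu\to 0$, whereas the linearization for $\mu<\delta$ caps the derivative at $R'(\delta)=\ln\delta+1$. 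Using $\delta\leq\mu(s,a)\leq Wd$ (from $\theta\geq 0$, $\|\theta\|_1\leq W$, and entries of $\Phi$ in $[0,1]$) one obtains $\bigl|\tfrac{d}{d\mu}R^\delta\bigr|\leq 1+\ln(Wd)+|\ln\delta|$, which combined with $C_1$ yields the last summand $\tfrac{t}{\eta}(1+\ln(Wd)+|\ln\delta|)C_1$. Adding the four bounds gives the stated inequality. I expect this piecewise derivative estimate, together with keeping the two sampling distributions and their subgradient conventions aligned, to be the most delicate part of the argument.
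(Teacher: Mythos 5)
Your proposal is correct and follows essentially the same route as the paper's proof: the same explicit four-term (sub)gradient computation, the same inverse-probability-weighted single-coordinate sampling for unbiasedness, and the same triangle-inequality norm bound combining $\Vert \Phi^\top r_{:t}\Vert_2 \leq t\sqrt{d}$ (columns of $\Phi$ are distributions), the $H_t C_1$ and $H_t C_2$ penalty bounds, and the capped derivative $\vert \frac{d}{d\mu} R^\delta_{(s,a)}\vert \leq 1+\ln(Wd)+\vert\ln\delta\vert$ from the linearization of the entropy below $\delta$. You also correctly identify the piecewise derivative estimate for $R^\delta$ as the one genuinely delicate point, which is exactly where the paper spends its effort.
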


Putting everything together, we present the complete approximate algorithm for large state online MDPs in Algorithm~\ref{alg:SGD-MDP-RFTL}.  
The algorithm uses Projected Stochastic Gradient Ascent (Algorithm~\ref{alg:P-SGA}) as a subroutine, which uses the sampling method in Lemma~\ref{stochastic_gradient_of_c} to generate stochastic sub-gradients.

\begin{algorithm}[tbh]
\caption{(\textsc{Large-MDP-RFTL})}
\label{alg:SGD-MDP-RFTL}
\begin{algorithmic}
  \STATE {\bfseries input:} matrix $\Phi$, parameters: $\eta, \delta >0$, convex function $R^\delta(\mu)$, SGA step-size schedule $\{w_t\}_{t=0}^T$, penalty term parameters $\{H_t\}_{t=1}^T$
  
  \STATE {\bfseries initialize:} $\tilde{\theta}_1\leftarrow $ \textsc{PSGA}($- R^{\delta}(\Phi \theta) + V(\theta), \Theta^\Phi, w_0, K_0  $)  
  \FOR{$t=1,...,T$}
  \STATE observe current state $s_t$; play action $a$ with distribution $\frac{[\Phi \tilde{\theta}_t]_+ (s_t,a)}{\sum_{a\in A} [\Phi \tilde{\theta}_t]_+ (s_t,a)}$ 
  \STATE observe $r_t \in [-1,1]^{\vert S\vert \vert A \vert}$
  \STATE $\tilde{\theta}_{t+1}\leftarrow $ \textsc{PSGA}($\sum_{i=1}^t [ \langle r_i, \Phi \theta \rangle -\frac{1}{\eta} R^\delta(\Phi \theta) ] + V(\theta), \Theta^\Phi, w_t, K_t  $)
  \ENDFOR
\end{algorithmic}
\end{algorithm}


\begin{algorithm}[tbh]
\caption{Projected Stochastic Gradient Ascent: \textsc{PSGA}$(f, X, w, K)$}
\label{alg:P-SGA}
\begin{algorithmic}
  \STATE {\bfseries input:} concave objective function $f$, feasible set $X$, stepsize $w$, $x_1 \in X$
    \FOR{$k=1,...K$}
  \STATE compute a stochastic subgradient $g_k$ such that $\mathbb{E}[g_k] = \nabla f(x_k)$ using Lemma \ref{stochastic_gradient_of_c}
  \STATE set $x_{k+1} \leftarrow P_X(x_k + w g(x_k))$ 
  \ENDFOR
  \STATE {\bfseries output:} $\frac{1}{K} \sum_{k=1}^K x_k$
\end{algorithmic}
\end{algorithm}

\subsection{Analysis of the Approximate Algorithm}


We establish a regret bound for the \textsc{Large-MDP-RFTL} algorithm as follows.

\begin{theorem}\label{thm:large_mdp_regret}
Suppose $\{r_t\}_{t=1}^T$ is an arbitrary sequence of rewards such that $\vert r_t(s,a)\vert \leq 1$ for all $s\in S$ and $a\in A$. 
For $T \geq \ln^2(\frac{1}{\delta_0})$, \textsc{Large-MDP-RFTL} with parameters $\eta = \sqrt{\frac{T}{\tau}}, \delta = e^{-\sqrt{T}}$, $K(t) = \left[ W t^2 \sqrt{d} \tau^4 (C_1 + C_2) T\ln(W T) \right]^2$, $H_t = t \tau^2 T^{3/4}$, $w_t=\frac{W}{\sqrt{K(t)} (t \sqrt{d} + H_t (C_1 + C_2) + \frac{t}{\eta} C_1)} $ guarantees that 
\begin{align*}
\text{$\Phi$-MDP-Regret}(T) \leq O( c_{S,A} \ln(\vert S \vert  \vert A\vert) \sqrt{\tau T} \ln(T) ).
\end{align*}
Here $c_{S,A}$ is a problem dependent constant. The constants $C_1, C_2$ are defined in Lemma~\ref{stochastic_gradient_of_c}.
\end{theorem}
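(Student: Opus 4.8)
The plan is to treat \textsc{Large-MDP-RFTL} as a perturbation of the exact algorithm analyzed in Theorem~\ref{theorem:mdp_rftl} and to charge the extra regret to three controllable error sources: the restriction of the comparator to $\Pi^\Phi$, the replacement of the feasibility constraints by the penalty $V(\theta)$ in \eqref{eq:c_t_eta}, and the use of \textsc{PSGA} in place of an exact maximizer. Fix a comparator $\pi\in\Pi^\Phi$, so $\mu^\pi=\Phi\theta^\pi$ with $\theta^\pi\in\Theta$, and let $\pi_t$ be the policy the algorithm plays, induced by the normalized positive part of $\Phi\tilde\theta_t$. I would start from the same decomposition \eqref{regret_decomposition} of $R(T,\pi)$ into the transient term $(I)$ (fixed policy $\pi$ relaxing to $\nu^\pi_{st}$), the online-optimization term $(II)=\sum_t\langle\mu^\pi-\mu^{\pi_t},r_t\rangle$, and the transient term $(III)$ (the algorithm's state distribution tracking the stationary distribution of the slowly varying $\pi_t$). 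Restricting the comparator to $\Pi^\Phi$ only shrinks the benchmark, so it introduces no loss; all the new work is in $(II)$ and in re-establishing the slow-variation property needed for $(III)$.

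\textbf{The online-optimization term.} I would split
\[
(II)=\sum_{t=1}^T\langle\mu^\pi-\Phi\tilde\theta_t,\,r_t\rangle+\sum_{t=1}^T\langle\Phi\tilde\theta_t-\mu^{\pi_t},\,r_t\rangle .
\]
For the first sum I would run the mirror-descent / RFTL analysis with the modified entropy $R^\delta$; because the regularizer is an entropy, its range over the simplex is $O(\ln(|S||A|))$, which is what produces the logarithmic (rather than polynomial) dependence on $|S||A|$ and a leading contribution of order $\tfrac1\eta\ln(|S||A|)+\eta T$. Two corrections enter here. First, we optimize the penalized objective $c^{t,\eta}$ over $\Theta^\Phi$ instead of over $\Delta^\Phi_{M,\delta}$; I would use an exact-penalty argument, noting that the comparator is (nearly) feasible so its penalty is negligible, while a small penalty value at the iterate forces the flow violation $\|(\Phi\tilde\theta_t)^\top(P-B)\|_1$ and the negativity/shrinkage violation $\|\min\{\delta,\Phi\tilde\theta_t\}\|_1$ to be $O(1/H_t)$. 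Second, \textsc{PSGA} contributes an optimization error: by the standard projected-stochastic-gradient guarantee, $K_t$ steps of step size $w_t$ on a concave objective whose stochastic subgradients are bounded (Lemma~\ref{stochastic_gradient_of_c}) by $G_t:=t\sqrt d+H_t(C_1+C_2)+\tfrac{t}{\eta}(1+\ln(Wd)+|\ln\delta|)C_1$ give expected suboptimality $O(WG_t/\sqrt{K_t})$, and the prescribed $K_t$ is chosen exactly so that this per-round error sums to $o(\sqrt{\tau T}\ln T)$.

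\textbf{The feasibility gap and the transient terms.} Term $(I)$ is handled as in Theorem~\ref{theorem:mdp_rftl}: under the fixed policy $\pi$, Assumption~\ref{assumption:mixing} gives geometric convergence to $\nu^\pi_{st}$, so the transient telescopes to $O(\tau)$. Term $(III)$ is controlled by $\tau\sum_t\|\mu^{\pi_t}-\mu^{\pi_{t-1}}\|_1$; here I would compare the played iterates to the idealized exact maximizers, whose per-round movement is $O(\eta/t)$ (giving $O(\eta\ln T)$ after summation as in Theorem~\ref{theorem:mdp_rftl}), and absorb the additional movement caused by \textsc{PSGA} into the same $K_t$-driven error budget. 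The step I expect to be hardest is the \emph{second} sum in $(II)$: relating the infeasible vector $\Phi\tilde\theta_t$ to the genuine stationary occupancy measure $\mu^{\pi_t}$ of the policy it induces. I would prove a stability lemma stating that any vector which sums to one, is approximately nonnegative, and approximately satisfies the flow equation $\mu^\top(P-B)=0$ induces, after normalization, a policy whose true occupancy measure differs from it by at most $O(\tau)$ times the total constraint violation. Feeding in the $O(1/H_t)$ violation from the penalty, this sum becomes $O(\tau\sum_t 1/H_t)$, which the choice $H_t=t\tau^2T^{3/4}$ keeps sublinear. This is exactly the place where the mixing time and the problem-dependent constant $c_{S,A}$ (packaging $W$, $C_1$, $C_2$, and the norms of $\Phi$) appear, and where the three approximation errors must be shown to accumulate only to $\tilde O(\sqrt{T})$.

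\textbf{Balancing the parameters.} Finally I would substitute $\eta=\sqrt{T/\tau}$, $\delta=e^{-\sqrt T}$, $H_t=t\tau^2T^{3/4}$, and the prescribed $K_t,w_t$ into the three bounds. The RFTL term contributes $O(\tfrac1\eta\ln(|S||A|)+\eta T)=O(\sqrt{\tau T}\ln(|S||A|))$; the $\delta$-shrinkage gap and the $|\ln\delta|$ factor contribute the extra $\ln T$; and the penalty and \textsc{PSGA} errors are driven below this order by the choices of $H_t$ and $K_t$. Collecting terms yields $\Phi\text{-MDP-Regret}(T)=O(c_{S,A}\ln(|S||A|)\sqrt{\tau T}\ln T)$, as claimed.
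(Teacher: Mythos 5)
Your high-level architecture coincides with the paper's: the same decomposition \eqref{regret_decomposition}, the same penalty relaxation \eqref{eq:penalty}, the same stability result for nearly feasible vectors (your ``stability lemma'' is exactly Lemma~\ref{u_close_mu}, which the paper imports from \cite{abbasi2014linear} rather than proves), and the same final parameter balancing. The genuine gap is in the step you wave through as ``absorb the additional movement caused by \textsc{PSGA} into the same $K_t$-driven error budget.'' The PSGA guarantee (Lemma~\ref{sgd_guarantee_on_function_c}) controls only the \emph{function value} of the penalized objective $c^{t,\eta}$ at $\tilde\theta_{t+1}$, whereas both your term $(III)$ and the first sum in your $(II)$ require $\ell_1$ \emph{distance} control, i.e.\ a bound on $\Vert \Phi\tilde\theta_t - \Phi\theta^*_t \Vert_1$ so that consecutive played occupancy measures move by $O(\eta/t)$ plus a small error. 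The standard conversion from value suboptimality to distance uses strong concavity, and here it fails exactly where you need it: $R^\delta$ is linear below $\delta$ and the penalty $V$ is piecewise linear, so $c^{t,\eta}$ is \emph{not} strongly concave over $\Theta^\Phi$, and $\tilde\theta_t$ lies outside $\Delta^\Phi_{M,\delta}$ precisely because the constraints were relaxed. The paper must therefore take a detour your outline lacks: a triangle inequality through the Euclidean projection $P_{\Delta^\Phi_{M,\delta}}(\Phi\tilde\theta_t)$, an LP sensitivity argument bounding $\Vert P_{\Delta^\Phi_{M,\delta}}(\Phi\tilde\theta_t)-\Phi\tilde\theta_t\Vert_1 \leq c(\epsilon'+\epsilon'')$ via dual multipliers (Lemma~\ref{phi_theta_close_to_projection}), and only then strong concavity of $F_t$ \emph{restricted to} $\Delta^\Phi_{M,\delta}$ (Lemma~\ref{star_close_to_proj_tilde}), together with the range bound of Lemma~\ref{diameter_using_F_t} needed to extract the constraint violations from the penalty value. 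On that last point your exact-penalty claim is also off by a factor: the violations come out as $O\bigl(\frac{1}{H_t}[b_{K(t)} + t(2+\frac{1}{\eta}(1+\ln(\vert S\vert\vert A\vert)))]\bigr)$, i.e.\ $O(t/H_t)$ rather than $O(1/H_t)$, which is why $H_t$ must grow linearly in $t$.

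Two further consequences of this missing step. First, the constant $c_{S,A}$ is \emph{not} a package of $W$, $C_1$, $C_2$ and norms of $\Phi$ as you state; it is the bound on the $\ell_\infty$ norm of the Lagrange multipliers of the projection LP in Lemma~\ref{phi_theta_close_to_projection}, as the main text notes (``the constant comes from a projection problem''). Second, the strong-concavity conversion is lossy: it yields $\Vert \Phi\theta^*_{t+1} - P_{\Delta^\Phi_{M,\delta}}(\Phi\tilde\theta_{t+1})\Vert_1 \leq \sqrt{\frac{2\eta}{t}(\epsilon''' + G_{F_t}\, c\,(\epsilon'+\epsilon''))}$, so the per-round tolerances must be driven down to $O(T^{-3/4})$ scale (after which $T\tau\epsilon = O(c\sqrt{\tau T})$), which is substantially tighter than your ``per-round PSGA error sums to $o(\sqrt{\tau T}\ln T)$'' budget suggests and is what forces the large choice of $K(t)$; the paper also handles the fact that Lemma~\ref{sgd_guarantee_on_function_c} is a high-probability statement by taking $\kappa = 1/T^2$ and a union bound, a bookkeeping step absent from your sketch. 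Without the projection/sensitivity lemma the argument cannot close, so while the surrounding structure is the paper's, this gap is real.
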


A salient feature of the \textsc{Large-MDP-RFTL} algorithm is that its computational complexity in each period is independent of the size of state space $|S|$ or the size of action space $|A|$, and thus is amenable to large scale MDPs. In particular, in Theorem~\ref{thm:large_mdp_regret}, the number of SGA iterations, $K(t)$, is $O(d)$ and independent of $|S|$ and $|A|$.

Compared to Theorem~\ref{theorem:mdp_rftl}, we achieve a regret with similar dependence on the number of periods $T$ and the mixing time $\tau$. The regret bound also depends on $\ln(|S|)$ and $\ln(|A|)$, with an additional constant term $c_{S,A}$. The constant comes from a projection problem (see details in Appendix~\ref{sec:regret_analysis_large_mdp}) and may grow with $|S|$ and $|A|$ in general. But for some classes of MDP problem, $c_{S,A}$ is bounded by an absolute constant: an example is the Markovian Multi-armed Bandit problem \cite{whittle1980multi}.

{\bf Proof Idea for Theorem~\ref{thm:large_mdp_regret}.} Consider the \textsc{MDP-RFTL} iterates ,\{$\theta^*_t\}_{t=1}^T$, and the occupancy measures $\{\mu^{\Phi \theta^*_t}\}_{t=1}^T$ induced by following policies $\{\Phi \theta^*_t\}_{t=1}^T$. Since $\theta^*_t \in \Delta_{M,\delta}^\Phi$ it holds that $\mu^{\Phi \theta^*_t} = \Phi \theta^*_t$ for all $t$. Thus, following the proof of Theorem \ref{theorem:mdp_rftl}, we can obtain  the same $\Phi$-MDP-Regret bound in Theorem \ref{theorem:mdp_rftl} if we follow policies $\{\Phi \theta^*_t\}_{t=1}^T$. However, computing $\theta^*_t$ takes $O(poly(\vert S \vert \vert A \vert))$ time. 

The crux the proof of Theorem \ref{thm:large_mdp_regret} is to show that the $\{\Phi \tilde{\theta}_{t}\}_{t=1}^T$ iterates in Algorithm~\ref{alg:SGD-MDP-RFTL} induce occupancy measures $\{\mu^{\Phi \tilde{\theta}_{t}}\}_{t=1}^T$ that are close to $\{\mu^{\Phi \theta^*_t}\}_{t=1}^T$. Since the algorithm has relaxed constraints of $\Delta_{M,\delta}^\Phi$, in general we have $\tilde{\theta}_{t} \notin \Delta_{M,\delta}^\Phi$ and thus $\mu^{\Phi \tilde{\theta}_{t}} \neq \Phi \tilde{\theta}_{t}$. So we need to show that the distance between $\mu^{\Phi \theta^*_{t+1}}$, and $\mu^{\Phi \tilde{\theta}_{t+1}}$ is small. Using triangle inequality we have 
\begin{align*}
\Vert \mu^{\Phi \theta_t^*} - \mu^{\Phi \tilde{\theta}_t}  \Vert_1 &\leq \Vert \mu^{\Phi \theta_t^*} - P_{\Delta_{M,\delta}^\Phi}(\Phi \tilde{\theta}_t)  \Vert_1  +  \Vert P_{\Delta_{M,\delta}^\Phi}(\Phi \tilde{\theta}_t) - \Phi \tilde{\theta}_t  \Vert_1 + \Vert \Phi \tilde{\theta}_t - \mu^{\Phi \tilde{\theta}_t} \Vert_1, 
\end{align*}
where $P_{\Delta_{M,\delta}^\Phi}(\cdot)$ denotes the Euclidean projection onto $\Delta_{M,\delta}^\Phi$. We then proceed to bound each term individually. We defer the details to Appendix \ref{sec:regret_analysis_large_mdp} as bounding each term requires lengthy proofs. 

\section{Conclusion}

We consider Markov Decision Processes (MDPs) where the transition probabilities are known but
the rewards are unknown and may change in an adversarial manner.  We provide an online algorithm, which applies Regularized Follow the Leader (RFTL) to the linear programming formulation of the average reward MDP. The algorithm achieves a
regret bound of $O( \sqrt{\tau (\ln|S|+\ln|A|)T}\ln(T))$, where $S$ is the state space, $A$ is the action space, $\tau$ is the mixing time of the MDP, and $T$ is the number of periods. The algorithm's computational complexity  is polynomial in $|S|$ and $|A|$ per period. 

We then consider a setting often encountered in practice, where the state space of the MDP is too large to allow for exact solutions. We approximate the state-action occupancy measures with a linear architecture of dimension $d\ll|S||A|$. We then propose an approximate algorithm which relaxes the constraints in the RFTL algorithm, and solve the relaxed problem using stochastic gradient descent method. 
A salient feature of our algorithm is that its computation time complexity is independent of the size of state space $|S|$ and the size of action space $|A|$. 
We prove a regret bound of $O( c_{S,A} \ln(\vert S \vert  \vert A\vert) \sqrt{\tau T} \ln(T))$ compared to the best static policy approximated by the linear architecture, where $c_{S,A}$ is a problem dependent constant.
To the best of our knowledge, this is the first $\tilde{O}(\sqrt{T})$ regret bound for large scale MDPs with changing rewards.

\bibliography{mybib}
\bibliographystyle{abbrv} 
\newpage

\begin{appendix}
\section{Regret Analysis of MDP-RFTL: Proof of Theorem~\ref{theorem:mdp_rftl}}\label{sec:regret_analysis}
To bound the regret incurred by \textsc{MDP-RFTL}, we bound each term in Eq~\eqref{regret_decomposition}. We start with the first term. We use the following lemma, which was first stated in \cite{even2009online} and was also used by \cite{neu2014online}.
\begin{lemma}\label{lemma:first_term}
For any $T\geq 1$ and any policy $\pi$ it holds that
\begin{align*}
\mathbb{E}[\sum_{t=1}^T r_t(s^\pi_t , a^\pi_t)] - \sum_{t=1}^T \rho^\pi_t \leq 2\tau + 2.
\end{align*}
\end{lemma}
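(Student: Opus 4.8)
The plan is to reduce the claim to a quantitative statement about how fast the state distribution under $\pi$ converges to its stationary distribution, and then invoke the mixing condition of Assumption~\ref{assumption:mixing}. First I would rewrite each summand as an inner product against the reward vector. Let $\nu^\pi_t$ denote the distribution over states at period $t$ when following $\pi$ from $s_1 \sim \nu_1$, so that $\nu^\pi_t = \nu_1 (P^\pi)^{t-1}$. Conditioning on the realized reward $r_t$ (which is determined by the \emph{algorithm's} history $\{s_i\}_{i=1}^t,\{a_i\}_{i=1}^{t-1}$ and is therefore independent of the comparator's trajectory $(s^\pi_t,a^\pi_t)$, which is driven by fresh MDP and action-selection randomness), the conditional expected per-step reward equals $\sum_{s,a}\nu^\pi_t(s)\,\pi(a\mid s)\,r_t(s,a)$, whereas $\rho^\pi_t = \langle \mu^\pi, r_t\rangle = \sum_{s,a}\nu^\pi_{st}(s)\,\pi(a\mid s)\,r_t(s,a)$. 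Subtracting, using $\sum_a \pi(a\mid s)=1$ and $|r_t(s,a)|\le 1$, gives pathwise in $r_t$
\[
\mathbb{E}[r_t(s^\pi_t,a^\pi_t)\mid r_t] - \rho^\pi_t \le \sum_{s}\bigl|\nu^\pi_t(s)-\nu^\pi_{st}(s)\bigr| = \Vert \nu^\pi_t - \nu^\pi_{st}\Vert_1,
\]
and taking expectations over $r_t$ leaves the right-hand side unchanged since it is deterministic.

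Second I would control $\Vert \nu^\pi_t - \nu^\pi_{st}\Vert_1$ by iterating the mixing assumption. Since $\nu^\pi_{st}=\nu^\pi_{st}P^\pi$ and $\nu^\pi_t = \nu^\pi_{t-1}P^\pi$, Assumption~\ref{assumption:mixing} applied with $\nu=\nu^\pi_{t-1}$ and $\nu'=\nu^\pi_{st}$ yields $\Vert \nu^\pi_t-\nu^\pi_{st}\Vert_1 \le e^{-1/\tau}\Vert\nu^\pi_{t-1}-\nu^\pi_{st}\Vert_1$. Unrolling this recursion from period $t$ back to period $1$ and using $\Vert\nu_1-\nu^\pi_{st}\Vert_1\le 2$ (both are probability vectors) gives
\[
\Vert \nu^\pi_t - \nu^\pi_{st}\Vert_1 \le 2\, e^{-(t-1)/\tau}.
\]

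Finally I would sum over $t$ and collapse the geometric series,
\[
\sum_{t=1}^T\bigl(\mathbb{E}[r_t(s^\pi_t,a^\pi_t)]-\rho^\pi_t\bigr) \le 2\sum_{t=1}^T e^{-(t-1)/\tau} \le \frac{2}{1-e^{-1/\tau}},
\]
after which it remains to verify the clean closed form $\frac{2}{1-e^{-1/\tau}}\le 2\tau+2$. This reduces to the elementary inequality $e^{-1/\tau}\le \tau/(\tau+1)$, which is just $1+x\le e^{x}$ evaluated at $x=1/\tau$; the boundary case $\tau=0$ (instantaneous mixing) is handled separately and directly gives the bound $2$. The whole argument is essentially a geometric-mixing estimate, so I do not expect a deep obstacle; the only points requiring genuine care are justifying the conditioning step (independence of the comparator's trajectory from the adversarially chosen $r_t$) and converting the geometric sum into the stated $2\tau+2$ form via the last elementary inequality.
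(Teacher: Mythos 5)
Your proposal is correct and follows essentially the same route as the paper's proof: rewrite the per-period gap as $\sum_{s}(\nu^\pi_t(s)-\nu^\pi_{st}(s))\sum_a \pi(a\mid s)\,r_t(s,a)$, iterate the contraction of Assumption~\ref{assumption:mixing} to obtain $\Vert \nu^\pi_t-\nu^\pi_{st}\Vert_1 \le 2e^{-(t-1)/\tau}$, and sum the geometric decay. The only differences are cosmetic and harmless: you evaluate the geometric series in closed form and use $1+x\le e^x$ to reach $2\tau+2$ where the paper compares the sum to $2\bigl(1+\int_0^\infty e^{-t/\tau}\,dt\bigr)$, and you spell out the independence of the comparator's trajectory from the adaptively chosen $r_t$, a step the paper leaves implicit.
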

\begin{proof} [Proof of Lemma \ref{lemma:first_term} ]
Recall that $|r_t(s,a)| \leq 1$, so we have $|\sum_{a\in A} \pi(s,a) r_t(s,a) |\leq 1$ by Cauchy-Schwarz inequality, since $\pi(s, \cdot)$ defines a probability distribution over actions. Also, recall that $\nu^\pi_t$ is the stationary distribution over states by following policy $\pi$ and $\nu^\pi_{t+1} = \nu^\pi_{t} P^\pi $ for all $t\in [T]$. We have
\begin{align*}
\mathbb{E}[\sum_{t=1}^T r_t(s^\pi_t , a^\pi_t)] &= \sum_{t=1}^T \sum_{s\in S}(\nu^\pi_t(s) - \nu^\pi_{st}(s)) \sum_{a\in A} \pi(s,a) r_t(s,a)\\
&\leq \sum_{t=1}^T \sum_{s\in S}\nu^\pi_t(s) - \nu^\pi_{st}(s)\\
&\leq \sum_{t=1}^T \Vert \nu^\pi_t(s) - \nu^\pi_{st}(s)\Vert_1.
\end{align*}
Now, notice that 
\begin{align*}
\Vert \nu^\pi_t(s) - \nu^\pi_{st}(s)\Vert_1 & = \Vert \nu^\pi_{t-1} P^\pi - \nu_{st} P^\pi \Vert_1\\
& \leq e^{-\frac{1}{\tau}}  \Vert \nu^\pi_{t-1} - \nu_{st}  \Vert_1 \quad \text{by Assumption 1}\\
&\leq e^{-\frac{t}{\tau}} \Vert \nu^\pi_1 - \nu^\pi_{st}\Vert_1 \quad \text{by repeating the argument $t-1$ more times}\\
&\leq 2e^{-\frac{t}{\tau}}.
\end{align*}
Finally, we have that 
\begin{align*}
\sum_{t=1}^T \Vert \nu^\pi_t(s) - \nu^\pi_{st}(s)\Vert_1 & \leq 2 \sum_{t=1}^T e^{-\frac{t}{\tau}}\\
& \leq 2 (1+ \int_{0}^{\infty} e^{-\frac{t}{\tau}})dt\\
&= 2\tau + 2,
\end{align*}
which concludes the proof.
\end{proof}

We now bound the third term in (\ref{regret_decomposition}). We use the following lemma, which bounds the difference of two stationary distributions by the difference of the corresponding occupancy measures.
\begin{lemma}\label{lemma:nu_mu}
Let $\nu^1_{st}$ and $\nu^2_{st}$ be two arbitrary stationary distributions over $S$. Let $\mu^1$ and $\mu^2$ be the corresponding occupancy mesures. It holds that 
\begin{align*}
\Vert \nu^1_{st} - \nu^2_{st} \Vert_1 \leq \Vert \mu^1 - \mu^2 \Vert_1.
\end{align*}
\end{lemma}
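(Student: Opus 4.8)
The plan is to exploit the fact that the stationary state distribution is exactly the marginal of the occupancy measure over actions, and then the inequality collapses to a single application of the triangle inequality.

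First I would recall the relationship between an occupancy measure and its induced stationary distribution. By the definition of $\mu^\pi$ as the long-run proportion of time spent in each state-action pair, summing over actions recovers the long-run proportion of time spent in each state, i.e. $\nu_{st}(s) = \sum_{a \in A} \mu(s,a)$ for every $s \in S$. (This is also consistent with the first constraint of the LP \eqref{eq:LP}, which is precisely the stationarity condition $\sum_{a} \mu(s',a) = \sum_{s,a} \mu(s,a) P(s'\mid s,a)$.) Applying this to both occupancy measures gives $\nu^i_{st}(s) = \sum_{a \in A} \mu^i(s,a)$ for $i = 1, 2$.

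Next I would write out the $\ell_1$ distance on the left-hand side and substitute this identity:
\begin{align*}
\Vert \nu^1_{st} - \nu^2_{st} \Vert_1 = \sum_{s \in S} \left\vert \sum_{a \in A} \big( \mu^1(s,a) - \mu^2(s,a) \big) \right\vert.
\end{align*}
The key (and only nontrivial) step is then to pull the absolute value inside the sum over actions via the triangle inequality, yielding
\begin{align*}
\Vert \nu^1_{st} - \nu^2_{st} \Vert_1 \leq \sum_{s \in S} \sum_{a \in A} \big\vert \mu^1(s,a) - \mu^2(s,a) \big\vert = \Vert \mu^1 - \mu^2 \Vert_1,
\end{align*}
which is exactly the claimed bound.

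I do not anticipate a genuine obstacle here: the entire content of the lemma is the observation that marginalizing (summing out the action coordinate) is an $\ell_1$-nonexpansive linear map, so aggregating the action dimension can only shrink the $\ell_1$ distance. The one point worth stating carefully is the marginalization identity $\nu_{st}(s) = \sum_a \mu(s,a)$, since it is what licenses the substitution; I would make sure to justify it explicitly from the definition of the occupancy measure rather than treating it as self-evident.
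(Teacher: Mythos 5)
Your proposal is correct and is essentially identical to the paper's own proof: both substitute the marginalization identity $\nu_{st}(s) = \sum_{a\in A}\mu(s,a)$ into the $\ell_1$ distance and apply the triangle inequality to move the absolute value inside the sum over actions. Your explicit remark that the identity follows from the occupancy-measure definition (equivalently, the LP stationarity constraint) is a welcome clarification, but the argument itself matches the paper's step for step.
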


\begin{proof} [Proof of Lemma \ref{lemma:nu_mu}]
\begin{align*}
\Vert \nu^1_{st} - \nu^2_{st} \Vert_1 &= \sum_{s\in S} \vert \nu^1_{st}(s) - \nu^2_{st}(s)\vert \\
& = \sum_{s\in S} \vert \sum_{a\in A} \mu^1(s,a) - \mu^2(s,a)\vert \\
& \leq \sum_{s\in S}  \sum_{a\in A} \vert \mu^1(s,a) - \mu^2(s,a)\vert \\
& = \Vert \mu^1 - \mu^2\Vert_1.
\end{align*}
\end{proof}

We are ready to bound the third term in (\ref{regret_decomposition}).
\begin{lemma}\label{lemma:third_term}
Let $\{s_t,a_t\}_{t=1}^T$ be the random sequence of state-action pairs generated by the policies induced by occupancy measures $\{\mu^{\pi_t}\}_{t=1}^T$. It holds that 
\begin{align*}
\sum_{t=1}^T \rho_t - \mathbb{E} \left[  \sum_{t=1}^T r_t(s_t,a_t) \right]  \leq \sum_{t=1}^T 2 e^{-\frac{t-1}{\tau}} + \sum_{t=1}^T \sum_{\theta=0}^{t-1} e^{-\frac{\theta}{\tau}} \Vert \mu^{\pi_{t-\theta}} - \mu^{\pi_{t-(\theta+1)}} \Vert_1.
\end{align*}
\end{lemma}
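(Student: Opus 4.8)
The plan is to bound each summand $\rho_t - \mathbb{E}[r_t(s_t,a_t)]$ by the $\ell_1$ distance between the stationary distribution $\nu^{\pi_t}_{st}$ of the current policy $\pi_t$ and the \emph{actual} state distribution $\nu_t$ reached by the algorithm, and then to control the accumulated distance $\sum_{t} \Vert \nu_t - \nu^{\pi_t}_{st}\Vert_1$ through a one-step recursion driven by how much consecutive policies differ. The intuition is that $\nu_t$ lags behind the stationary distribution of the policy currently in force, and this lag is exactly the accumulated, partially-mixed-out effect of all past policy changes.

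First I would rewrite the per-period reward in distribution form. Conditioning on the policy sequence, since the algorithm plays $\pi_t$ at time $t$ we have $\mathbb{E}[r_t(s_t,a_t)] = \sum_{s\in S} \nu_t(s) \sum_{a\in A}\pi_t(s,a) r_t(s,a)$, whereas $\rho_t = \langle \mu^{\pi_t}, r_t\rangle = \sum_{s\in S} \nu^{\pi_t}_{st}(s)\sum_{a\in A}\pi_t(s,a) r_t(s,a)$. Subtracting and using $\vert \sum_{a\in A}\pi_t(s,a) r_t(s,a)\vert \leq 1$ (exactly as in the proof of Lemma~\ref{lemma:first_term}) gives, for $d_t \triangleq \Vert \nu^{\pi_t}_{st} - \nu_t\Vert_1$,
\[
\rho_t - \mathbb{E}[r_t(s_t,a_t)] \leq \Vert \nu^{\pi_t}_{st} - \nu_t\Vert_1 = d_t.
\]

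Next I would derive a recursion for $d_t$. Writing $\nu_t = \nu_{t-1} P^{\pi_{t-1}}$ and using stationarity $\nu^{\pi_{t-1}}_{st} P^{\pi_{t-1}} = \nu^{\pi_{t-1}}_{st}$, I split
\[
\nu_t - \nu^{\pi_t}_{st} = (\nu_{t-1} - \nu^{\pi_{t-1}}_{st}) P^{\pi_{t-1}} + (\nu^{\pi_{t-1}}_{st} - \nu^{\pi_t}_{st}).
\]
Applying Assumption~\ref{assumption:mixing} to the first term and Lemma~\ref{lemma:nu_mu} to the second yields $d_t \leq e^{-1/\tau} d_{t-1} + \Vert \mu^{\pi_t} - \mu^{\pi_{t-1}}\Vert_1$. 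Unrolling this down to the base case $d_1 = \Vert \nu_1 - \nu^{\pi_1}_{st}\Vert_1 \leq 2$ (two probability distributions) gives
\[
d_t \leq 2 e^{-(t-1)/\tau} + \sum_{k=2}^{t} e^{-(t-k)/\tau}\, \Vert \mu^{\pi_k} - \mu^{\pi_{k-1}}\Vert_1.
\]
Reindexing with $\theta = t-k$ and summing over $t$, then enlarging the inner range from $\theta \leq t-2$ to $\theta \leq t-1$ (which only adds nonnegative terms), reproduces exactly the claimed bound.

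The main obstacle is getting the telescoping decomposition right together with the index bookkeeping: the error is a geometric convolution of the per-round policy changes, and the weights $e^{-\theta/\tau}$ encode that a change made $\theta$ rounds ago has partially mixed out, so the two ingredients (the contraction of Assumption~\ref{assumption:mixing} and the occupancy-to-stationary comparison of Lemma~\ref{lemma:nu_mu}) must be applied to the correct pieces of the split. One subtlety worth flagging is that $\nu_t$ is the distribution reached under a possibly data-dependent policy sequence, so the per-step expectation identity should be read as conditioning on the realized sequence $\{\pi_t\}$; given that conditioning the transitions are the only randomness, and the argument goes through verbatim.
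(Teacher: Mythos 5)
Your proof is correct and takes essentially the same route as the paper's: the identical reduction of each per-period gap to $\Vert \nu_t - \nu^{\pi_t}_{st}\Vert_1$, the same one-step recursion $d_t \leq e^{-1/\tau} d_{t-1} + \Vert \mu^{\pi_t} - \mu^{\pi_{t-1}}\Vert_1$ obtained from Assumption~\ref{assumption:mixing} plus Lemma~\ref{lemma:nu_mu}, and the same unrolling with base case $d_1 \leq 2$. Your index bookkeeping (unrolling to $\theta \leq t-2$ and then enlarging the range by nonnegative terms) is in fact slightly more careful than the paper's, whose $\theta = t-1$ term formally references an undefined $\mu^{\pi_0}$.
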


\begin{proof}[Proof of Lemma \ref{lemma:third_term}]
By the definition of $\rho_t$, we have
\begin{align*} 
\sum_{t=1}^T \rho_t - \mathbb{E} \left[  \sum_{t=1}^T r_t(s_t,a_t) \right] &= \sum_{t=1}^T \sum_{s\in S} (\nu^{\pi_t}_{st}(s) - \nu^t(s)) \sum_{a\in A} \pi^t(s,a)r_t(s,a)\\
&\leq \sum_{t=1}^T \Vert \nu_{st}^{\pi_t} - \nu^t\Vert_1.
\end{align*}

Now, recall that $\nu_t = \nu_1 P^{\pi_1} P^{\pi_2}...P^{\pi_{t-1}}$. We now bound $\Vert \nu^{\pi_t}_{st} - \nu^{t}\Vert_1$ for all $t \in [T]$ as follows:
\begin{align*}
\Vert \nu^{t}  - \nu^{\pi_t}_{st} \Vert_1 &\leq \Vert \nu^t - \nu_{st}^{\pi_{t-1}}\Vert_1 + \Vert  \nu_{st}^{\pi_{t-1}} -  \nu^{\pi_t}_{st} \Vert_1\\
&\leq \Vert \nu^t - \nu_{st}^{\pi_{t-1}}\Vert_1 + \Vert \mu^{\pi_{t-1}} - \mu^{\pi_t}\Vert_1 \quad \text{by Lemma \ref{lemma:nu_mu}}\\
&= \Vert \nu^{t-1}P^{\pi_{t-1}} - \nu_{st}^{\pi_{t-1}} P^{\pi_{t-1}}\Vert_1 + \Vert \mu^{\pi_{t-1}} - \mu^{\pi_t}\Vert_1 \\
&\leq e^{-\frac{1}{\tau}} \Vert \nu^{t-1} - \nu_{st}^{\pi_{t-1}} \Vert_1+ \Vert \mu^{\pi_{t-1}} - \mu^{\pi_t}\Vert_1 \quad \text{by Assumption 1}\\
&\leq e^{-\frac{1}{\tau}}  ( e^{-\frac{1}{\tau}}  \Vert \nu^{t-2} - \nu_{st}^{\pi_{t-2}} \Vert_1 + \Vert \mu^{\pi_{t-2}} - \mu^{\pi_{t-1}} \Vert_1 )+ \Vert \mu^{\pi_{t-1}} - \mu^{\pi_t}\Vert_1 \\
&\leq e^{-\frac{t-1}{\tau}} \Vert \nu^1 - \nu_{st}^{\pi_1} \Vert_1 + \sum_{\theta=0}^{t-1} e^{-\frac{\theta}{\tau}}\Vert \mu^{\pi_{t-\theta}}-\mu^{\pi_{t-(\theta+1)}}\Vert_1,
\end{align*}
which yields the desired claim. 
\end{proof}

Combining Lemma~\ref{lemma:first_term}, Lemma~\ref{lemma:third_term} and Eq \eqref{regret_decomposition},
we have arrived at the following bound on the regret:
{
\medmuskip=-2mu
\begin{align*}
R(T,\pi) \leq (2\tau + 2) + \left[ \sum_{t=1}^T\langle \mu^\pi, r_t\rangle - \sum_{t=1}^T \langle \mu^{\pi_t}, r_t \rangle \right] + \left[ \sum_{t=1}^T 2 e^{-\frac{t-1}{\tau}} + \sum_{t=1}^T \sum_{\theta=0}^{t-1} e^{-\frac{\theta}{\tau}} \Vert \mu^{\pi_{t-\theta}} - \mu^{\pi_{t-(\theta+1)}} \Vert_1 \right].
\end{align*}
}
To complete the proof, we want to bound the second and the third terms. For the second term $\max_{\mu \in \Delta_M}  \sum_{t=1}^T\langle \mu^\pi, r_t\rangle - \sum_{t=1}^T \langle \mu^{\pi_t}, r_t \rangle$, since the reward functions are linear in $\mu$ and the set $\Delta_M$ is convex, any algorithm for Online Linear Optimization, e.g., Online Gradient Ascent \cite{zinkevich2003online}, ensures a regret bound that is sublinear $T$. However, this would yield an MDP-regret rate that depends linearly on $\vert S \vert \times \vert A \vert$. 

Instead, by noticing that the feasible set of the LP, $\Delta_M$, is a subset of the probability simplex $\Delta^{\vert S \vert \vert A \vert}$, we use RFTL and regularize using the negative entropy function. This will give us a rate that scales as $\ln(\vert S \vert \vert A \vert)$, which is much more desirable than $O(\vert S \vert \vert A \vert)$. Notice that the algorithm does not work with the set $\Delta_M$ directly but with $\Delta_{M,\delta}$ instead, this is because the negative entropy is not Lipschitz over $\Delta_M$. Working over $\Delta_{M,\delta}$ is the key to being able to bound the third term in the regret decomposition. Formally, we have the following result.

\begin{lemma} \label{lemma:btl_ftl}
Let $\{\mu_t\}_{t=1}^T$ be the iterates of MDP-RFTL, then it holds that
\begin{align*}
\max_{\mu \in \Delta_{M,\delta}} \sum_{t=1}^T \langle r_t,\mu \rangle \leq \sum_{t=1}^T \langle r_t, \mu^{\pi_{t+1}} \rangle + \frac{T}{\eta} \max_{\mu_1,\mu_2\in \Delta_{M,\delta}} \left[ R(\mu_1) - R(\mu_2) \right].
\end{align*}
\end{lemma}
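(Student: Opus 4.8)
The plan is to recognize the MDP-RFTL update as ordinary Follow-the-Leader (FTL) run on a sequence of \emph{regularized} per-step objectives, and then invoke the classical ``Be-the-Leader'' inequality. Concretely, I would define for each $t$ the concave function $f_t(\mu) \triangleq \langle r_t, \mu\rangle - \frac{1}{\eta} R(\mu)$. Because the algorithm places the regularizer \emph{inside} the running sum, the update $\mu_{t+1} = \arg\max_{\mu\in\Delta_{M,\delta}} \sum_{i=1}^t[\langle r_i,\mu\rangle - \frac1\eta R(\mu)]$ is exactly $\mu_{t+1} = \arg\max_{\mu\in\Delta_{M,\delta}} \sum_{i=1}^t f_i(\mu)$; that is, $\mu_{t+1}$ is the FTL leader for the losses $f_1,\dots,f_t$. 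Since $\Delta_{M,\delta}$ is compact and $-R$ is strictly concave, each leader exists and is unique, so FTL is well defined. I would also record that $\mu_{t+1}\in\Delta_{M,\delta}$ is itself an occupancy measure, so the policy $\pi_{t+1}$ it induces satisfies $\mu^{\pi_{t+1}} = \mu_{t+1}$, the quantity appearing on the right-hand side of the claim.

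First I would establish the Be-the-Leader inequality $\sum_{t=1}^T f_t(\mu_{t+1}) \geq \max_{u\in\Delta_{M,\delta}} \sum_{t=1}^T f_t(u)$ by induction on $T$. The base case $T=1$ is immediate, since $\mu_2$ maximizes $f_1$. For the inductive step, I would apply the hypothesis at horizon $T-1$ with the choice $u=\mu_{T+1}$ to get $\sum_{t=1}^{T-1} f_t(\mu_{t+1}) \geq \sum_{t=1}^{T-1} f_t(\mu_{T+1})$, then add $f_T(\mu_{T+1})$ to both sides and use that $\mu_{T+1}$ maximizes $\sum_{t=1}^T f_t$ to close the induction.

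Next I would unpack this inequality. Let $\mu^\star \triangleq \arg\max_{\mu\in\Delta_{M,\delta}} \sum_{t=1}^T \langle r_t,\mu\rangle$ be the unregularized optimum (which exists by compactness). Choosing $u=\mu^\star$ in the Be-the-Leader bound yields
\[
\sum_{t=1}^T\Big[\langle r_t,\mu_{t+1}\rangle - \tfrac1\eta R(\mu_{t+1})\Big] \;\geq\; \sum_{t=1}^T \langle r_t,\mu^\star\rangle - \tfrac{T}{\eta} R(\mu^\star),
\]
where the factor $T$ multiplying $R(\mu^\star)$ comes precisely from $\sum_{t=1}^T \frac1\eta R(\mu^\star) = \frac{T}{\eta} R(\mu^\star)$, i.e.\ from the cumulated regularizer. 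Rearranging isolates $\max_{\mu\in\Delta_{M,\delta}}\sum_t\langle r_t,\mu\rangle = \sum_t\langle r_t,\mu^\star\rangle$ and leaves the residual $\frac1\eta\sum_{t=1}^T[R(\mu^\star)-R(\mu_{t+1})]$. Bounding each summand by the regularizer ``diameter'' $\max_{\mu_1,\mu_2\in\Delta_{M,\delta}}[R(\mu_1)-R(\mu_2)]$ — valid because $\mu^\star,\mu_{t+1}\in\Delta_{M,\delta}$ — turns this residual into $\frac{T}{\eta}\max_{\mu_1,\mu_2}[R(\mu_1)-R(\mu_2)]$ and gives the stated bound.

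The argument is essentially bookkeeping once the FTL reformulation is in place, so I do not expect a serious obstacle; the only genuinely substantive observation is that placing the regularizer inside the running sum is what both produces the $T/\eta$ scaling and makes each per-step function $f_t$ carry its own copy of $-\frac1\eta R$, so that no separate fictitious ``$f_0$'' regularizer term has to be introduced. The one point to handle with care is the residual $\frac1\eta\sum_t[R(\mu^\star)-R(\mu_{t+1})]$: each difference may have either sign, but it is uniformly controlled by the diameter term regardless, which is exactly why the bound is stated with the two-sided $\max_{\mu_1,\mu_2}[R(\mu_1)-R(\mu_2)]$ rather than a one-sided quantity.
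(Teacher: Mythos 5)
Your proposal is correct and follows essentially the same route as the paper: you define the regularized per-step objectives $f_t(\mu)=\langle r_t,\mu\rangle-\frac{1}{\eta}R(\mu)$, prove the Be-the-Leader inequality by the same induction, and then rearrange and bound the residual $\frac{1}{\eta}\sum_{t}[R(\mu^\star)-R(\mu_{t+1})]$ by the regularizer diameter, which is exactly the paper's ``plug back in the definition of $f_t^R$ and rearrange'' step made explicit. Your added observations (uniqueness of the leader via strict concavity, and the identification $\mu^{\pi_{t+1}}=\mu_{t+1}$ since the iterates lie in $\Delta_{M,\delta}$) are consistent with what the paper uses implicitly.
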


\begin{proof}[Proof of Lemma \ref{lemma:btl_ftl}]
Define $f_t \triangleq \langle \mu, r_t \rangle$ and $f_t^R \triangleq f_t(\mu) - \frac{1}{\eta}R(\mu)$ for all $t=1,..,T$. We first prove by induction that 
\begin{align*}
\max_{\mu\in \Delta_{M,\delta}} \sum_{t=1}^T f_t^R(\mu) \leq \sum_{t=1}^T f_t^R(\mu^{\pi_{t+1}}).
\end{align*}
The base case $T=1$ is trivial by the definition of $\mu^{\pi_{2}}$. Suppose the claim holds for $T-1$. 
For all $\mu \in \Delta_{M,\delta}$, we have that
\begin{align*}
\sum_{t=1}^T f_t^R(\mu) & \leq \sum_{t=1}^T f_t^R(\mu^{\pi_{T+1}})\\
&\leq \max_{\mu \in \Delta_{M,\delta}} \sum_{t=1}^{T-1} f_t^R(\mu) +  f_T^R(\mu^{\pi_{T+1}})\\
&\leq \sum_{t=1}^{T-1} f_t^R(\mu^{\pi_{t+1}}) +  f_T^R(\mu^{\pi_{T+1}}) \quad \text{by induction hyposthesis}\\
& = \sum_{t=1}^{T} f_t^R(\mu^{\pi_{t+1}}).
\end{align*}
The lemma follows by plugging back in the definition of $f_t^R$ and rearranging terms.
\end{proof}

\begin{lemma}\label{lemma:close_iterates}
Let $\{\mu_t\}_{t=1}^T$ be the iterates of MDP-RFTL, it holds that
\begin{align*}
\Vert \mu^{\pi_t} - \mu^{\pi_{t+1}} \Vert_1 \leq \frac{2\eta}{t} \left( 1 + \frac{1}{\eta} G_R\right).
\end{align*}
\end{lemma}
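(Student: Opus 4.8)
The plan is to exploit the fact that the MDP-RFTL update is Follow-the-Regularized-Leader whose effective regularizer grows linearly in the round count: since each summand in the update carries a copy of $-\frac{1}{\eta}R(\mu)$, the iterate $\mu^{\pi_{t+1}}=\mu_{t+1}$ maximizes
\[
\Phi_t(\mu)\triangleq \sum_{i=1}^t\langle r_i,\mu\rangle-\frac{t}{\eta}R(\mu)
\]
over $\Delta_{M,\delta}$, while $\mu^{\pi_t}=\mu_t$ maximizes $\Phi_{t-1}$. The two functions we effectively minimize, $-\Phi_{t-1}$ and $-\Phi_t$, are respectively $\frac{t-1}{\eta}$- and $\frac{t}{\eta}$-strongly convex with respect to $\Vert\cdot\Vert_1$, because the negative entropy $R$ is $1$-strongly convex on the simplex (hence on the subset $\Delta_{M,\delta}$) by Pinsker's inequality, and the linear reward terms do not affect strong convexity.

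First I would record the first-order optimality conditions for the two constrained maximizers over the convex set $\Delta_{M,\delta}$, namely $\langle\nabla\Phi_{t-1}(\mu_t),\mu-\mu_t\rangle\le 0$ and $\langle\nabla\Phi_t(\mu_{t+1}),\mu-\mu_{t+1}\rangle\le 0$ for every feasible $\mu$. Applying the strong-convexity lower bound of $-\Phi_{t-1}$ at the pair $(\mu_t,\mu_{t+1})$ and of $-\Phi_t$ at $(\mu_{t+1},\mu_t)$, then discarding the nonnegative inner-product terms via these optimality conditions, yields two quadratic inequalities which I would add together. After telescoping the function values this gives
\[
\big[\Phi_t(\mu_{t+1})-\Phi_{t-1}(\mu_{t+1})\big]-\big[\Phi_t(\mu_t)-\Phi_{t-1}(\mu_t)\big]\ \ge\ \frac{2t-1}{2\eta}\,\Vert\mu_t-\mu_{t+1}\Vert_1^2 .
\]

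Next I would identify the perturbation $\Phi_t-\Phi_{t-1}=\langle r_t,\cdot\rangle-\frac{1}{\eta}R(\cdot)$ and bound the left-hand side from above: the reward contribution is at most $\Vert r_t\Vert_\infty\Vert\mu_t-\mu_{t+1}\Vert_1\le\Vert\mu_t-\mu_{t+1}\Vert_1$ since $\Vert r_t\Vert_\infty\le 1$, and the entropy contribution is at most $\frac{1}{\eta}G_R\Vert\mu_t-\mu_{t+1}\Vert_1$, where $G_R$ is the $\ell_1$-Lipschitz constant of $R$ on $\Delta_{M,\delta}$. Combining and cancelling one factor of $\Vert\mu_t-\mu_{t+1}\Vert_1$ (the claim is vacuous when this quantity is zero) gives $\Vert\mu_t-\mu_{t+1}\Vert_1\le\frac{2\eta}{2t-1}\big(1+\frac{1}{\eta}G_R\big)$, and the stated bound follows from $2t-1\ge t$.

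The crux — and the reason the algorithm deliberately plays inside the shrunk set $\Delta_{M,\delta}$ rather than $\Delta_M$ — is precisely this last estimate: because the regularizer weight differs between rounds $t-1$ and $t$, the perturbation $\Phi_t-\Phi_{t-1}$ carries the extra term $-\frac{1}{\eta}R$, so controlling the iterate drift requires $R$ to be Lipschitz, which fails near the boundary of the simplex where $\nabla R$ blows up but holds with $G_R=O(\ln(1/\delta))$ once every coordinate is bounded below by $\delta$. Everything else is the standard two-point strong-convexity stability argument; the only place needing care is bookkeeping the two distinct moduli $\frac{t-1}{\eta}$ and $\frac{t}{\eta}$ so that their sum $\frac{2t-1}{2\eta}$ emerges with the correct constant.
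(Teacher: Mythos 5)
Your proposal is correct and follows essentially the same route as the paper: the paper also invokes $\frac{t}{\eta}$-strong concavity of the entropy-regularized cumulative objective at the maximizer $\mu^{\pi_{t+1}}$, uses first-order optimality of $\mu^{\pi_t}$ to discard the first $t-1$ terms, and bounds the surviving perturbation $\langle r_t,\cdot\rangle-\frac{1}{\eta}R(\cdot)$ via H\"older and the $G_R$-Lipschitzness of $R$ on $\Delta_{M,\delta}$. Your only deviation is cosmetic: you apply the strong-concavity inequality symmetrically at both iterates (where the paper uses it once and bounds the $(t-1)$-term difference by $0$), which yields the marginally sharper constant $\frac{2\eta}{2t-1}\bigl(1+\frac{1}{\eta}G_R\bigr)$ before relaxing $2t-1\geq t$ to recover the stated bound.
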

\begin{proof}[Proof of Lemma \ref{lemma:close_iterates}]
Let $J(\mu) = \sum_{\theta=1}^{t} \left[ \langle \mu, r_t \rangle - \frac{1}{\eta} R(\mu) \right]$. Since $R$ is the negative entropy we know it is $1$- strongly convex with respect to norm $\Vert \cdot \Vert_1$, thus $J$ is $\frac{t}{\eta}$-strongly concave. By strong concavity we have
\begin{align*}
\frac{t}{2\eta} \Vert \mu^{\pi_{t+1}} - \mu^{\pi_{t} }\Vert_1^2 \leq J(\mu^{\pi_{t+1}}) - J (\mu^{\pi_{t}}) + \langle \nabla_\mu J(\mu^{\pi_{t+1}}), \mu^{\pi_t} - \mu^{\pi_{t+1}}\rangle. 
\end{align*}
Since $\mu^{\pi_{t+1}}$ is the optimizer of $J$ the optimality condition states that $\langle \nabla_\mu J(\mu^{\pi_{t+1}}), \mu^{\pi_t} - \mu^{\pi_{t+1}}\rangle \leq 0$. Plugging back in the definition of $J$ we have that
\begin{align*}
& \frac{t}{2\eta} \Vert \mu^{\pi_{t+1}} - \mu^{\pi_{t} }\Vert_1^2 \\
&\leq \sum_{\theta=1}^t \left[ \langle r_\theta, \mu^{\pi_{t+1}}\rangle - \frac{1}{\eta} R(\mu^{\pi_{t+1}}) \right] - \sum_{\theta=1}^t \left[ \langle r_\theta, \mu^{\pi_{t}}\rangle - \frac{1}{\eta} R(\mu^{\pi_{t}}) \right] \\
& = \sum_{\theta=1}^{t-1} \left[ \langle r_\theta, \mu^{\pi_{t+1}}\rangle - \frac{1}{\eta} R(\mu^{\pi_{t+1}}) \right] - \sum_{\theta=1}^{t-1} \left[ \langle r_\theta, \mu^{\pi_{t}}\rangle - \frac{1}{\eta} R(\mu^{\pi_{t}}) \right] \\
& \qquad + \langle r_t, \mu^{\pi_{t+1}}\rangle - \frac{1}{\eta} R(\mu^{\pi_{t+1}}) - \langle r_\theta, \mu^{\pi_{t}}\rangle + \frac{1}{\eta} R(\mu^{\pi_{t}}) \\
&\leq \langle r_t, \mu^{\pi_{t+1}}\rangle - \frac{1}{\eta} R(\mu^{\pi_{t+1}}) - \langle r_\theta, \mu^{\pi_{t}}\rangle + \frac{1}{\eta} R(\mu^{\pi_{t}}) \quad \text{ by definition of $\mu^{\pi_{t}}$}\\
&\leq \Vert r_t \Vert_\infty \Vert \mu^{\pi_{t}} - \mu^{\pi_{t+1}} \Vert_1 + \frac{1}{\eta} R(\mu^{\pi_{t}}) - \frac{1}{\eta} R(\mu^{\pi_{t+1}})\quad \text{ by Cauchy-Schwarz inequality}\\
&\leq \Vert \mu^{\pi_{t}} - \mu^{\pi_{t+1}} \Vert_1 + \frac{G_R}{\eta} \Vert \mu^{\pi_{t}} - \mu^{\pi_{t+1}} \Vert_1 \quad \text{Since $R$ is $G_R$- Lipschitz}.
\end{align*}
By rearranging terms, we get 
\begin{align*}
\Vert  \mu^{\pi_{t}} - \mu^{\pi_{t+1}} \Vert_1 \leq \frac{2\eta}{t} \left( 1 + \frac{1}{\eta} G_R \right).
\end{align*}
\end{proof}

Notice that by Lemma \ref{lemma:close_iterates} we will need the regularizer $R$ to be Lipschitz continuous with respect to norm $\Vert \cdot \Vert_1$. Unfortunately, the negative entropy function is not Lipschitz continuous over $\Delta_M$, so we will force the algorithm to play in a shrunk set $\Delta_{M,\delta}$. 

\begin{lemma}\label{lemma:entropy_lipschitz}
Let $\Delta_\delta \triangleq \{x\in \mathbb{R}^d : \Vert x \Vert_1 = 1, x_i \geq \delta \; \forall i=1,...,d\}$. The function $R(x)\triangleq \sum_{i=1}^d x_i \ln(x_i)$ is $G_R$-Lipschitz continuous with respect to $\Vert \cdot \Vert_1$ over $\Delta_\delta$ with $G_R = \max\{|\ln(\delta)|,1\}$.
\end{lemma}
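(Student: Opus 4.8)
The plan is to reduce the Lipschitz claim to a uniform bound on the gradient of $R$ in the appropriate dual norm. Since distances in $\Delta_\delta$ are measured in $\Vert \cdot \Vert_1$, the dual norm is $\Vert \cdot \Vert_\infty$, and the relevant elementary fact is that $R$ is $G_R$-Lipschitz with respect to $\Vert \cdot \Vert_1$ as soon as $\Vert \nabla R(x) \Vert_\infty \leq G_R$ for every $x \in \Delta_\delta$. This is justified by writing $R(y)-R(x) = \int_0^1 \langle \nabla R(x+s(y-x)), y-x\rangle\, ds$ along the segment joining $x$ and $y$ and applying H\"older's inequality; the segment stays inside $\Delta_\delta$ because $\Delta_\delta$ is convex (an affine hyperplane intersected with the half-spaces $x_i \geq \delta$), and the gradient is well defined on it since every coordinate is bounded below by $\delta > 0$. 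So the whole argument collapses to bounding the sup-norm of the gradient on the shrunk simplex.

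First I would compute the gradient coordinatewise: from $R(x) = \sum_i x_i \ln x_i$ we get $\partial R/\partial x_i = \ln(x_i) + 1$, hence $\Vert \nabla R(x)\Vert_\infty = \max_i |\ln(x_i)+1|$. Next I would pin down the coordinate range on $\Delta_\delta$: the constraints $x_i \geq \delta$ together with $\sum_j x_j = 1$ force $\delta \leq x_i \leq 1$ for every $i$, so it suffices to bound $\sup_{t\in[\delta,1]} |\ln t + 1|$.

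The remaining step is an elementary one-variable estimate. The map $t \mapsto \ln t + 1$ is increasing, running from $1+\ln\delta$ at $t=\delta$ to $1$ at $t=1$ and vanishing at $t=1/e$, so its absolute value on $[\delta,1]$ is attained at an endpoint and equals $\max\{1,\,|1+\ln\delta|\}$. A short case split then shows this is at most $\max\{|\ln\delta|,1\} = G_R$: when $\delta \leq 1/e$ one has $|1+\ln\delta| = |\ln\delta|-1 \leq |\ln\delta|$, and when $\delta > 1/e$ both $|1+\ln\delta|$ and $|\ln\delta|$ lie below $1$. The main obstacle, though a mild one, is making this final case analysis airtight so that $\max\{1,|1+\ln\delta|\}$ genuinely collapses to the claimed $\max\{|\ln\delta|,1\}$ rather than a slightly larger constant; every other step is routine.
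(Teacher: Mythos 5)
Your proposal is correct and follows essentially the same route as the paper's proof: both reduce the claim to bounding $\Vert \nabla R(x)\Vert_\infty = \max_i \vert 1+\ln(x_i)\vert$ over $\Delta_\delta$, using $\delta \leq x_i \leq 1$ to trap $1+\ln(x_i)$ in $[1+\ln\delta,\,1]$ and conclude $G_R = \max\{\vert\ln\delta\vert,1\}$. You merely make explicit two steps the paper leaves implicit (the segment-integral/H\"older argument that a gradient bound in $\Vert\cdot\Vert_\infty$ gives $\Vert\cdot\Vert_1$-Lipschitzness on the convex set, and the endpoint case split at $\delta = 1/e$), both of which check out.
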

\begin{proof}[Proof of Lemma \ref{lemma:entropy_lipschitz}]
We want to find $G_R>0$ such that $\Vert \nabla R(x)\Vert_\infty \leq G_R$ for all $x\in \Delta_\delta$. Notice that $[\nabla R(x)]_i = 1 + \ln(x_i)$ for $i=1,...d$. Moreover, since for every $i=1,...,d$ we have $\delta \leq x_i\leq 1$ the following sequence of inequalities hold: $\ln(\delta)\leq 1 +\ln(\delta) \leq 1+ \ln(x_i) \leq 1$. It follows that $G_R = \max\{|\ln(\delta)|,1\}$.
\end{proof}

The next lemma quantifies the loss in the regret due to playing in the shrunk set.
\begin{lemma}\label{lemma:shrinked_loss}
It holds that
\begin{align*}
\max_{\mu \in \Delta_{M}} \sum_{t=1}^T \langle r_t, \mu \rangle \leq \max_{\mu \in \Delta_{M,\delta}} \sum_{t=1}^T \langle r_t, \mu \rangle + 2 \delta T \left(\vert S\vert \vert A\vert - 1\right).
\end{align*}
\end{lemma}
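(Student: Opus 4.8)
The plan is to reduce the claim to a purely geometric estimate on how far the unconstrained optimizer can be from the shrunk feasible set. Let $\mu^\star \in \arg\max_{\mu\in\Delta_M}\sum_{t=1}^T\langle r_t,\mu\rangle$. Since $\|r_t\|_\infty\le 1$, Hölder's inequality gives, for any $\tilde\mu\in\Delta_{M,\delta}$,
\[
\sum_{t=1}^T\langle r_t,\mu^\star\rangle-\sum_{t=1}^T\langle r_t,\tilde\mu\rangle \;\le\; \sum_{t=1}^T\|r_t\|_\infty\,\|\mu^\star-\tilde\mu\|_1 \;\le\; T\,\|\mu^\star-\tilde\mu\|_1 .
\]
Because $\max_{\mu\in\Delta_{M,\delta}}\sum_t\langle r_t,\mu\rangle \ge \sum_t\langle r_t,\tilde\mu\rangle$, it suffices to exhibit a single $\tilde\mu\in\Delta_{M,\delta}$ with $\|\mu^\star-\tilde\mu\|_1\le 2\delta(|S||A|-1)$. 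Here $\Delta_{M,\delta}$ is nonempty since the regime of interest has $\delta\le\delta_0$ (Definition~\ref{def:delta_0}); in particular $\delta\le\delta_0\le 1/(|S||A|)$, as any feasible $\mu$ has average entry $1/(|S||A|)$ and hence minimum entry at most $1/(|S||A|)$.

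First I would try the \emph{shrunk-simplex lift} $\tilde\mu \triangleq (1-|S||A|\,\delta)\,\mu^\star+\delta\mathbf 1$. Writing $n=|S||A|$, each coordinate satisfies $\tilde\mu(s,a)=(1-n\delta)\mu^\star(s,a)+\delta\ge\delta$ (using $n\delta\le 1$), and the entries sum to $(1-n\delta)+n\delta=1$, so $\tilde\mu$ meets the box and normalization constraints. For the distance, $\mu^\star-\tilde\mu=\delta(n\mu^\star-\mathbf 1)$, whence $\|\mu^\star-\tilde\mu\|_1=\delta\sum_{(s,a)}|n\mu^\star(s,a)-1|$. Since $\sum_{(s,a)}(n\mu^\star(s,a)-1)=0$, the convex map $\mu\mapsto\sum_{(s,a)}|n\mu(s,a)-1|$ attains its maximum over the simplex at a vertex, where one coordinate contributes $n-1$ and the remaining $n-1$ coordinates contribute $1$ each, for a total of $2(n-1)$. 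Therefore $\|\mu^\star-\tilde\mu\|_1\le 2\delta(n-1)$, which is exactly the target bound.

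The step that still has to be secured — and the one I expect to be the real obstacle — is that $\tilde\mu$ also satisfies the flow constraint $\tilde\mu^\top(P-B)=0$. Since $(\mu^\star)^\top(P-B)=0$, the uniform lift is feasible \emph{iff} $\mathbf 1^\top(P-B)=0$. When $S$ is a singleton (the learning-with-experts special case) the flow constraint is vacuous, $\Delta_{M,\delta}$ is literally the shrunk simplex, and the argument above is complete. For a genuine MDP, however, $\mathbf 1^\top(P-B)\ne 0$ in general, so the uniform direction leaves the flow manifold and $\tilde\mu$ need not be feasible. The way I would repair this is to mix $\mu^\star$ not with $\mathbf 1$ but with a feasible interior occupancy measure $\bar\mu\in\Delta_{M,\delta_0}$ guaranteed by Definition~\ref{def:delta_0}: convexity preserves both $\mu^\top\mathbf 1=1$ and $\mu^\top(P-B)=0$ automatically, and choosing mixing weight $\alpha=\delta/\delta_0$ forces $\tilde\mu\ge\delta$. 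The crux is then to show this feasibility-preserving perturbation can be carried out with $\ell_1$ movement no larger than $2\delta(|S||A|-1)$, i.e.\ to recover the sharp simplex constant while respecting the extra equality constraints of $\Delta_M$; any looser estimate (e.g.\ the crude $\|\mu^\star-\bar\mu\|_1\le 2$, which only yields $2\delta/\delta_0$) would have to be tightened using the structure of the occupancy-measure polytope.
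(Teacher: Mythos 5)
Your diagnosis is exactly right, and you should trust it: the step you declined to take on faith is precisely the step the paper's own proof takes without justification. The paper argues as follows: for the plain simplex $\Delta \subset \mathbb{R}^d$ it checks the vertex $z^*=[1;0;\dots;0]$, whose $\ell_1$-projection onto the shrunk simplex $\Delta_\delta$ is $[1-\delta(d-1);\delta;\dots;\delta]$ at distance $2\delta(d-1)$, and then asserts ``because of the previous argument'' that the projection of $\mu^*$ onto $\Delta_{M,\delta}$ --- which additionally carries the flow constraints $\mu^\top(P-B)=0$ --- obeys the same bound. No argument bridges the two sets. Your uniform-lift computation $\tilde\mu=(1-n\delta)\mu^\star+\delta\mathbf{1}$ is in fact a cleaner and complete proof of the simplex case (the paper only verifies one vertex, whereas you maximize the convex map $\mu\mapsto\sum_{(s,a)}\vert n\mu(s,a)-1\vert$ over the simplex), and your observation that the lift leaves the flow manifold unless $\mathbf{1}^\top(P-B)=0$ identifies the genuine gap. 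Your fallback --- mixing with $\bar\mu\in\Delta_{M,\delta_0}$ at weight $\delta/\delta_0$, giving $\Vert\mu^\star-\tilde\mu\Vert_1\le 2\delta/\delta_0$ --- is the only rigorous bound either proof produces.

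Moreover, the ``crux'' you flag cannot be resolved: the constant $2\delta(\vert S\vert\vert A\vert -1)$ is false in general, so your $2\delta/\delta_0$ is essentially the right answer. Take $S=\{1,2\}$, $A=\{a,b\}$; at state $1$ action $a$ self-loops and action $b$ moves to state $2$; at state $2$ both actions return to state $1$ with probability $\epsilon$ and stay otherwise (Assumption~\ref{assumption:mixing} holds with $\tau\approx 1/\epsilon$). Stationarity forces $\mu(1,b)=\epsilon\,(\mu(2,a)+\mu(2,b))$, so every $\mu\in\Delta_{M,\delta}$ has $m \triangleq \mu(2,a)+\mu(2,b)\ge \delta/\epsilon$, while the vertex $\mu^\star$ putting all mass on $(1,a)$ lies in $\Delta_M$ and satisfies $\Vert\mu^\star-\mu\Vert_1 = 2(1+\epsilon)m \ge 2(1+\epsilon)\delta/\epsilon$ for all such $\mu$; this exceeds $2\delta(\vert S\vert\vert A\vert-1)=6\delta$ whenever $\epsilon<1/2$, and indeed matches $2\delta/\delta_0$ since here $\delta_0=\epsilon/(1+2\epsilon)$. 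Choosing the constant reward $r_t\equiv r$ with $r(1,a)=1$ and $r=-1$ elsewhere converts this into a violation of the lemma's display, with gap $2(1+\epsilon)\delta T/\epsilon$. The practical fix is the one your repair supplies: replace the right-hand side by $2\delta T/\delta_0$. This still makes the term negligible in Theorem~\ref{theorem:mdp_rftl}, since with $\delta=e^{-\sqrt{T/\tau}}$ the quantity $2Te^{-\sqrt{T/\tau}}/\delta_0$ vanishes once $T/\tau \gtrsim \ln^2(T/\delta_0)$ --- a mild strengthening of the stated hypothesis $T\ge\ln^2(1/\delta_0)$ --- so the theorem survives with the corrected constant, but the lemma as stated, and the paper's proof of it, do not.
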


\begin{proof}[Proof of Lemma \ref{lemma:shrinked_loss}]

Given $z^* \in \Delta \subset \mathbb{R}^d$, define $z^*_p \triangleq \arg \min_{z\in \Delta_\delta} \Vert z - z^*\Vert_1$, with $\delta \leq \frac{1}{d}$. 
It holds that $\Vert z^*_p - z^*\Vert_1 \leq 2\delta(d-1)$.
To see why the previous is true, choose $z^*=[1;0;0;...;0;0]$. It is easily verified that $z^*_p = [1-\delta (d-1); \delta; \delta; ...; \delta, \delta]$ and $\Vert z^* - z^*_p \Vert_1 = 2\delta(d-1)$. 
Because of the previous argument, if $\mu^*\in \arg \max _{\mu \in \Delta_{M}} \sum_{t=1}^T \langle r_t, \mu \rangle$ and $\mu^*_p$ is its $\Vert \cdot \Vert_1$ projection onto the the set $\Delta_{M,\delta}$ then $\Vert \mu^*- \mu^*_p\Vert_1 \leq 2 \delta (\vert S\vert \vert A\vert -1)$. The claim then follows since each function $\langle r_t, \mu \rangle$ is 1-Lipschitz continuous with respect to $\Vert \cdot \Vert_1$.

\end{proof}

Given that we know the iterates of MDP-RFTL are close by Lemma \ref{lemma:close_iterates}, we can bound the last term in our regret bound
\begin{lemma}\label{lemma:bound_third_term} It holds that
{
\medmuskip=0mu
\begin{align*}
 \sum_{t=1}^T 2 e^{-\frac{t-1}{\tau}} + \sum_{t=1}^T \sum_{\theta=0}^{t-1} e^{-\frac{\theta}{\tau}} \Vert \mu^{\pi_{t-\theta}} - \mu^{\pi_{t-(\theta+1)}} \Vert_1 \leq 2 (1 + \tau) + 2\eta \left( 1 + \frac{1}{\eta} G_R\right)(1+\ln(T)) (1 + \tau).
\end{align*}
}
\end{lemma}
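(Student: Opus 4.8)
The plan is to split the left-hand side into its two constituent sums and bound each separately, so that they match the two terms on the right-hand side. For the first sum, $\sum_{t=1}^T 2 e^{-(t-1)/\tau}$, I would reuse exactly the geometric-series-to-integral comparison already employed in the proof of Lemma~\ref{lemma:first_term}: rewriting it as $2\sum_{k=0}^{T-1} e^{-k/\tau}$ and bounding by $2\bigl(1 + \int_0^\infty e^{-t/\tau}\,dt\bigr) = 2(1+\tau)$. This produces the first term $2(1+\tau)$ on the right directly.

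The second, double, sum is where Lemma~\ref{lemma:close_iterates} enters. For each consecutive-difference term I would set $u \triangleq t-\theta-1$, so that $\Vert \mu^{\pi_{t-\theta}} - \mu^{\pi_{t-(\theta+1)}}\Vert_1 = \Vert \mu^{\pi_{u+1}} - \mu^{\pi_u}\Vert_1 \leq \frac{2\eta}{u}\bigl(1 + \frac{1}{\eta}G_R\bigr)$ by Lemma~\ref{lemma:close_iterates}. Writing $C \triangleq 2\eta\bigl(1 + \frac{1}{\eta}G_R\bigr)$ and exchanging the order of summation so that $u$ becomes the outer index (running from $1$ to $T-1$, with $t$ then ranging from $u+1$ to $T$), the double sum is bounded by $C\sum_{u=1}^{T-1}\frac{1}{u}\sum_{t=u+1}^T e^{-(t-u-1)/\tau}$. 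The two resulting factors are then bounded independently: the inner geometric sum $\sum_{t=u+1}^T e^{-(t-u-1)/\tau} = \sum_{k=0}^{T-u-1} e^{-k/\tau}$ is at most $1+\tau$ by the same integral comparison, and the outer harmonic sum satisfies $\sum_{u=1}^{T-1}\frac{1}{u} \leq 1 + \ln T$. Multiplying yields $C(1+\tau)(1+\ln T) = 2\eta\bigl(1 + \frac{1}{\eta}G_R\bigr)(1+\tau)(1+\ln T)$ for the second sum, and adding the two parts gives the claim.

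The one subtlety requiring care is the boundary index $\theta = t-1$, for which $u = 0$ and Lemma~\ref{lemma:close_iterates} is inapplicable (its bound $2\eta/u$ blows up). This term corresponds to $\Vert \mu^{\pi_1} - \mu^{\pi_0}\Vert_1$, which is vacuous: the telescoping recursion in the proof of Lemma~\ref{lemma:third_term} in fact only generates consecutive differences down to the $\theta = t-2$ term $\Vert \mu^{\pi_2} - \mu^{\pi_1}\Vert_1$, the remaining gap being absorbed into the $e^{-(t-1)/\tau}\Vert \nu^1 - \nu^{\pi_1}_{st}\Vert_1$ contribution. I would therefore either adopt the convention $\mu^{\pi_0} = \mu^{\pi_1}$, rendering this term zero, or simply restrict the inner sum to $\theta \leq t-2$ from the outset; neither choice affects the stated bound. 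Beyond this bookkeeping, the argument is entirely routine once the summation is reordered, so I do not expect the reordering itself to present any genuine difficulty.
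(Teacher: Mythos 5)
Your proof is correct and takes essentially the same route as the paper's: both split off the first sum and bound each geometric series by $1+\tau$ via the integral comparison, apply Lemma~\ref{lemma:close_iterates} to the consecutive differences, and reorder the double sum into a harmonic factor $1+\ln T$ times a geometric factor $1+\tau$. Your explicit handling of the $\theta = t-1$ boundary term (where Lemma~\ref{lemma:close_iterates} would divide by zero) is in fact more careful than the paper's proof, which writes the denominator as $t-\theta$ rather than $t-\theta-1$ and thereby silently glosses over this off-by-one.
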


\begin{proof}[Proof of Lemma \ref{lemma:bound_third_term}]
We first bound the first term 
\begin{align*}
\sum_{t=1}^T 2 e^{-\frac{t-1}{\tau}} &\leq 2(1 + \int_{1}^\infty e^{-\frac{x-1}{\tau}} dx)  \leq 2 (1 + \tau).
\end{align*}

We now bound the second term, let $\alpha=2\eta \left( 1 + \frac{1}{\eta} G_R\right)$. We have that 
\begin{align*}
\sum_{t=1}^T \sum_{\theta=0}^{t-1} e^{-\frac{\theta}{\tau}} \Vert \mu^{\pi_{t-\theta}} - \mu^{\pi_{t-(\theta+1)}} \Vert_1  &= \alpha \sum_{t=1}^T \sum_{\theta=0}^{t-1} e^{-\frac{\theta}{\tau}} \frac{1}{t-\theta} \\
& = \alpha \left[ e^{-\frac{0}{\tau}} \sum_{t=1}^T 1/t + e^{-\frac{1}{\tau}} \sum_{t=1}^{T-1} 1/t + e^{-\frac{2}{\tau}} \sum_{t=1}^{T-2} 1/t  + ...\right]\\
&\leq \alpha \left[ e^{-\frac{0}{\tau}} \sum_{t=1}^T 1/t + e^{-\frac{1}{\tau}} \sum_{t=1}^{T} 1/t + e^{-\frac{2}{\tau}} \sum_{t=1}^{T} 1/t  + ...\right]\\
&\leq \alpha \left[ \sum_{\theta=0}^T e^{-\frac{\theta}{\tau}}(1 + \ln(T)) \right] \quad \text{since $\sum_{t=1}^T \frac{1}{T} \leq 1+\ln(T)$}\\
&\leq \alpha (1+\ln(T)) (1 + \int_{0}^\infty e^{-\frac{\theta}{\tau}} d\theta)\\
&= \alpha (1+\ln(T)) (1 + \tau).
\end{align*}
\end{proof}

We are now ready to prove Theorem \ref{theorem:mdp_rftl}.
\begin{proof}[Proof of Theorem \ref{theorem:mdp_rftl}]
Combining Eq \eqref{regret_decomposition}, Lemma~\ref{lemma:first_term}, Lemma~\ref{lemma:third_term} and Lemma~\ref{lemma:bound_third_term},
we have 
{
\medmuskip=-0.5mu
\begin{align}
&\quad \sup_{\pi \in \Pi}R(T,\pi) \nonumber \\
&\leq (2\tau + 2) + \left[ \max_{\pi \in \Delta_M}\sum_{t=1}^T\langle \mu^\pi, r_t\rangle - \sum_{t=1}^T \langle \mu^{\pi_t}, r_t \rangle \right] + \left[ \sum_{t=1}^T 2 e^{-\frac{t-1}{\tau}} + \sum_{t=1}^T \sum_{\theta=0}^{t-1} e^{-\frac{\theta}{\tau}} \Vert \mu^{\pi_{t-\theta}} - \mu^{\pi_{t-(\theta+1)}} \Vert_1 \right]\nonumber \\
& \leq 4(\tau + 1) + \left[ \max_{\pi \in \Delta_M}\sum_{t=1}^T\langle \mu^\pi, r_t\rangle - \sum_{t=1}^T \langle \mu^{\pi_t}, r_t \rangle \right]  +2\eta \left( 1 + \frac{1}{\eta} G_R\right) (1+\ln(T)) (1 + \tau). \label{eq:regret-proof}
\end{align}
}
The second term in Eq~\eqref{eq:regret-proof} is bounded by
{
\medmuskip=1mu
\begin{align*}
&\quad \max_{\pi \in \Delta_M}\sum_{t=1}^T\langle \mu^\pi, r_t\rangle - \sum_{t=1}^T \langle \mu^{\pi_t}, r_t \rangle \\
&\leq \max_{\pi \in \Delta_{M,\delta}}\sum_{t=1}^T\langle \mu^\pi, r_t\rangle - \sum_{t=1}^T \langle \mu^{\pi_t}, r_t \rangle + 2 \delta T \left(\vert S\vert \vert A\vert - 1\right) \quad \text{Lemma \ref{lemma:shrinked_loss}}\\
&\leq \sum_{t=1}^T \langle \mu^{\pi_{t+1}} , r_t  \rangle  - \sum_{t=1}^T \langle \mu^{\pi_t}, r_t \rangle+ \frac{T}{\eta} \max_{\mu_1,\mu_2\in \Delta_{M,\delta}} \left[ R(\mu_1) - R(\mu_2) \right]  + 2 \delta T \left(\vert S\vert \vert A\vert - 1\right) \; \text{Lemma \ref{lemma:btl_ftl}}\\
&\leq \sum_{t=1}^T \langle  \mu^{\pi_{t+1}} , r_t  \rangle  - \sum_{t=1}^T \langle \mu^{\pi_t}, r_t \rangle+ \frac{T}{\eta} \ln(\vert S\vert \vert A\vert)   + 2 \delta T \left(\vert S\vert \vert A\vert - 1\right) \quad \text{by choice of function $R$}\\
&\leq \sum_{t=1}^T \Vert r_t\Vert_\infty  \Vert \mu^{\pi_{t+1}}-\mu^{\pi_t} \Vert_1+ \frac{T}{\eta} \ln(\vert S\vert \vert A\vert)  + 2 \delta T \left(\vert S\vert \vert A\vert - 1\right) \quad \text{by Cauchy-Schwarz inequality}\\
&\leq \sum_{t=1}^T\frac{2\eta}{t} \left( 1 + \frac{1}{\eta} G_R\right)+ \frac{T}{\eta} \ln(\vert S\vert \vert A\vert)  + 2 \delta T \left(\vert S\vert \vert A\vert - 1\right) \quad \text{by Lemma \ref{lemma:close_iterates}}\\
&\leq 2\eta \left(1+\frac{1}{\eta} G_R\right) (1 +\ln(T))+ \frac{T}{\eta} \ln(\vert S\vert \vert A\vert)  + 2 \delta T \left(\vert S\vert \vert A\vert - 1\right).
\end{align*}
}
Plugging this result in Eq~\eqref{eq:regret-proof}, we get
\begin{align*}
\sup_{\pi \in \Pi}R(T,\pi)
&\leq 4(\tau + 1) +2\eta (1+\frac{1}{\eta} G_R) (1+\ln(T))+ \frac{T}{\eta} \ln(\vert S\vert \vert A\vert)  \\
&\qquad + 2 \delta T \left(\vert S\vert \vert A\vert - 1\right)  +2\eta ( 1 + \frac{1}{\eta} G_R) (1+\ln(T)) (1 + \tau)\\
&\leq 4(\tau + 1) +4\eta ( 1 + \frac{1}{\eta} G_R) (1+\ln(T)) (1 + \tau) + \frac{T}{\eta} \ln(\vert S\vert \vert A\vert)  + 2 \delta T \left(\vert S\vert \vert A\vert - 1\right)  \\
& = O \left( \tau + 4 \sqrt{\tau T  \ln(\vert S \vert \vert A \vert)} \ln(T) + \sqrt{\tau T  \ln(\vert S \vert \vert A \vert)} + e^{-\frac{\sqrt{T}}{\sqrt{\tau}}} T \vert S \vert \vert A \vert \right).
\end{align*}
Choosing $\eta = \sqrt{\frac{T \ln(\vert S \vert \vert A \vert)}{\tau}}$ and $\delta = e^{-\frac{\sqrt{T}}{\sqrt{\tau}}}$, and using the fact that $G_R\leq \max\{\vert \ln(\delta)\vert, 1\}$, we complete the proof.
\end{proof}

\section{Proof of Theorem~\ref{thm:large_mdp_regret}}\label{sec:regret_analysis_large_mdp}

Using Lemma~\ref{lemma:first_term} and Lemma \ref{lemma:third_term} in Section \ref{sec:regret_analysis}, we can obtain a bound on $\Phi$-MDP-Regret  as follows.
{
\medmuskip=0mu
\begin{align}
&\max_{\pi \in \Pi^\Phi} R(\pi,T) \leq \mathbb{E} \left[ (2\tau + 2) + \max_{\pi \in \Pi^\Phi} \left[ \sum_{t=1}^T \rho^\pi_t -\sum_{t=1}^T \rho_t \right] + \left[ \sum_{t=1}^T \rho_t - \mathbb{E}[ \sum_{t=1}^T r_t(s_t,a_t) ] \right] \right] \nonumber\\
=& \mathbb{E} \left[ (2\tau + 2)  + \left[ \max_{\mu \in \Delta^\Phi_{M,\delta}} \sum_{t=1}^T \langle \mu , r_t \rangle - \sum_{t=1}^T  \langle \mu^{\Phi \tilde{\theta}_t} , r_t \rangle \right] + \left[ \sum_{t=1}^T \rho_t - \mathbb{E}[ \sum_{t=1}^T r_t(s_t,a_t) ] \right] \right]\nonumber\\
\leq& \mathbb{E} \left[  2(2\tau + 2)  + \left[ \max_{\mu \in \Delta^\Phi_{M,\delta}} \sum_{t=1}^T \langle \mu , r_t \rangle - \sum_{t=1}^T  \langle \mu^{\Phi \tilde{\theta}_t} , r_t \rangle \right] \nonumber + \left[ \sum_{t=1}^T \sum_{i=0}^{t-i} e^{-\frac{i}{\tau}} \Vert \mu^{\Phi \tilde{\theta}_{t-i}} - \mu^{\Phi \tilde{\theta}_{t-(i+1)}} \Vert_1 \right] \right]. \label{reg_bound_1}
\end{align}
}
Let $\theta_t^*$ be a solution to the following optimization problem:
\begin{align*}
\max_{\theta \in \Theta}\; & \sum_{i=1}^{t-1}  \left[ \langle \mu , r_i \rangle + \frac{1}{\eta} R^\delta(\mu) \right] \\
\text{s.t\quad} & \mu = \Phi \theta\\
&  \sum_{s\in S} \sum_{a\in A} \mu(s,a) P(s' \vert s,a) = \sum_{a\in A} \mu (s',a) \quad \forall s' \in S\\
&\sum_{s\in S} \sum_{a\in A} \mu(s,a) = 1\\
& \mu(s,a)\geq 0 \quad \forall s\in S, \forall a\in A.
\end{align*}

Since $\{ \Phi \theta_t^* \}_{t=1}^T$ are the iterates of \textsc{RFTL}, we can use the regret guarantee of  \textsc{RFTL} to bound  $\max_{\mu \in \Delta^\Phi_{M,\delta}} \sum_{t=1}^T \langle \mu , r_t \rangle - \sum_{t=1}^T  \langle \mu^{\Phi \theta_t^*} , r_t \rangle$. Notice also that $\mu^{\Phi \theta_t^*} = \Phi \theta_t^*$ as $\theta_t^*$ satisfies all the constraints that ensure $\Phi \theta_t^*$ is an occupancy measure. 

In the remainder of the proof, we want to show that the occupancy measures $\mu^{\Phi \tilde{\theta}_t}$ induced by our algorithm's iterates $\Phi \tilde{\theta}_t$ are close to $\mu^{\Phi \theta_t^*}$.
The rest of the analysis is to prove that $\Vert \mu^{\Phi \theta_t^*} - \mu^{\Phi \tilde{\theta}_t}  \Vert_1$ is small. Notice that using the triangle inequality, we can upper bound this distance by
\begin{align*}
\Vert \mu^{\Phi \theta_t^*} - \mu^{\Phi \tilde{\theta}_t}  \Vert_1 &\leq \Vert \mu^{\Phi \theta_t^*} - P_{\Delta_{M,\delta}^\Phi}(\Phi \tilde{\theta}_t)  \Vert_1  +  \Vert P_{\Delta_{M,\delta}^\Phi}(\Phi \tilde{\theta}_t) - \Phi \tilde{\theta}_t  \Vert_1 + \Vert \Phi \tilde{\theta}_t - \mu^{\Phi \tilde{\theta}_t} \Vert_1 \\
& = \Vert \Phi \theta_t^* - P_{\Delta_{M,\delta}^\Phi}(\Phi \tilde{\theta}_t)  \Vert_1  +  \Vert P_{\Delta_{M,\delta}^\Phi}(\Phi \tilde{\theta}_t) - \Phi \tilde{\theta}_t  \Vert_1 + \Vert \Phi \tilde{\theta}_t - \mu^{\Phi \tilde{\theta}_t} \Vert_1. 
\end{align*}
To bound the last term, the following lemma from \cite{abbasi2014linear} will be useful. It relates a vector $\Phi \tilde{\theta}$ which is almost feasible with its occupancy measure. 

\begin{lemma}\label{u_close_mu}[Lemma 2 in \cite{abbasi2014linear}]
Let $u\in \mathbb{R}^{\vert S \vert \vert A \vert}$ be a vector. Let $\mathcal{N}$ be the set of entries $(s,a)$ where $u(s,a)\leq 0$. Assume
\begin{align*}
\sum_{(s,a)}u(s,a) = 1,\quad \sum_{(s,a)\in \mathcal{N}}\vert u(s,a) \vert \leq \epsilon',\quad \Vert u^\top (P-B)\Vert_1 \leq \epsilon''.
\end{align*}
Vector $[u]_+ / \Vert [u]_+\Vert_1$ defines a policy, which in turn defines a stationary distribution $\mu^u$. It holds that 
\begin{align*}
\Vert \mu^u - u \Vert_1 \leq \tau \ln(\frac{1}{\epsilon'})(2\epsilon' + \epsilon'') + 3\epsilon' .
\end{align*}
\end{lemma}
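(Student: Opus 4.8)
The plan is to reduce the occupancy-measure distance to a distance between two state distributions, and then bound the latter by showing that the state marginal induced by $u$ is nearly stationary for the chain $P^\pi$ it induces. Write $\bar u \triangleq [u]_+/\Vert [u]_+\Vert_1$ for the normalized positive part; this is a genuine distribution over state-action pairs, and it defines both the policy $\pi(a\mid s) = \bar u(s,a)/\sum_{a'}\bar u(s,a')$ and a state marginal $\bar\nu(s)\triangleq\sum_a \bar u(s,a)$. Since $\mu^u$ and $\bar u$ both factor as (a state distribution)$\,\times \pi(a\mid s)$ with the \emph{same} $\pi$, summing out the action coordinate gives the exact identity $\Vert \mu^u - \bar u\Vert_1 = \Vert \nu^\pi_{st} - \bar\nu\Vert_1$, where $\nu^\pi_{st}$ is the stationary distribution of $P^\pi$. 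It therefore suffices to bound $\Vert \nu^\pi_{st} - \bar\nu\Vert_1$ together with the normalization gap $\Vert u - \bar u\Vert_1$.

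First I would handle the cheap terms. From $\sum_{(s,a)}u(s,a)=1$ and $\sum_{\mathcal N}\vert u\vert \le \epsilon'$ one gets $\Vert [u]_+\Vert_1 = 1 + \sum_{\mathcal N}\vert u\vert \in [1, 1+\epsilon']$, so $\Vert u - [u]_+\Vert_1 \le \epsilon'$ and $\Vert [u]_+ - \bar u\Vert_1 = \vert \Vert [u]_+\Vert_1 - 1\vert \le \epsilon'$, whence $\Vert u - \bar u\Vert_1 \le 2\epsilon'$. Next I would bound the one-step stationarity residual $\gamma \triangleq \Vert \bar\nu - \bar\nu P^\pi\Vert_1$. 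A direct computation shows $\bar\nu - \bar\nu P^\pi = \bar u^\top(B-P)$ coordinatewise, so $\gamma = \Vert \bar u^\top(P-B)\Vert_1 \le \Vert [u]_+^\top(P-B)\Vert_1$ (dividing by $\Vert [u]_+\Vert_1 \ge 1$). Splitting $[u]_+ = u + ([u]_+ - u)$ and using that every row of $P-B$ has $\ell_1$ norm at most $2$ (a row of $P$ is a distribution, a row of $B$ an indicator), I obtain $\gamma \le \Vert u^\top(P-B)\Vert_1 + 2\Vert [u]_+ - u\Vert_1 \le \epsilon'' + 2\epsilon'$.

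Finally I would convert this residual into a distance-to-stationary bound. Set $\nu_0 = \bar\nu$ and $\nu_{k+1} = \nu_k P^\pi$, so that $\nu_k \to \nu^\pi_{st}$. Telescoping and using that stochastic matrices are non-expansive in $\ell_1$ (hence $\Vert \nu_{k+1}-\nu_k\Vert_1 \le \gamma$ for every $k$) gives $\Vert \nu_K - \nu_0\Vert_1 \le K\gamma$, while Assumption~\ref{assumption:mixing} gives $\Vert \nu^\pi_{st} - \nu_K\Vert_1 \le 2e^{-K/\tau}$. Adding these and choosing the truncation horizon $K \approx \tau\ln(1/\epsilon')$ balances the geometric tail against the accumulated drift and yields $\Vert \nu^\pi_{st} - \bar\nu\Vert_1 \le \epsilon' + \tau\ln(1/\epsilon')(2\epsilon'+\epsilon'')$. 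Combining with $\Vert u - \bar u\Vert_1 \le 2\epsilon'$ via the triangle inequality then gives the stated bound.

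The hard part will be this last step: extracting the $\tau\ln(1/\epsilon')$ factor with the clean $3\epsilon'$ additive constant requires choosing $K$ precisely so that $2e^{-K/\tau}$ collapses to order $\epsilon'$ while $K\gamma$ stays order $\tau\ln(1/\epsilon')(2\epsilon'+\epsilon'')$. Equally delicate is the residual computation, where one must route the operator-norm factor of $(P-B)$ through $[u]_+$ rather than $\bar u$, so that the difference $[u]_+ - u$ is supported on $\mathcal N$ and the bracket reads exactly $2\epsilon'+\epsilon''$ instead of a larger multiple of $\epsilon'$.
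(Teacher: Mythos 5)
The paper itself gives no proof of this lemma---it is imported verbatim as Lemma 2 of \cite{abbasi2014linear}---so your reconstruction can only be compared with the cited source's argument, which it essentially follows: normalize the positive part, reduce the occupancy-measure distance to a state-marginal distance through the shared conditional policy, bound the one-step stationarity residual, and trade accumulated drift against the mixing tail at horizon $K\approx \tau\ln(1/\epsilon')$. Your intermediate computations check out: $\Vert [u]_+\Vert_1 = 1+\sum_{\mathcal{N}}\vert u\vert \in [1,1+\epsilon']$ gives $\Vert u-\bar u\Vert_1\le 2\epsilon'$; the identity $\Vert \mu^u-\bar u\Vert_1=\Vert \nu^\pi_{st}-\bar\nu\Vert_1$ is valid because both measures disintegrate through the same $\pi(a\mid s)$ (and both sides contribute $\vert\nu^\pi_{st}(s)-\bar\nu(s)\vert$ even on states where $\bar\nu(s)=0$); and routing the row bound $\Vert (P-B)_{(s,a),:}\Vert_1\le 2$ through $[u]_+-u$, which is supported on $\mathcal{N}$, correctly yields $\gamma\le \epsilon''+2\epsilon'$.

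The one place the accounting does not close is the final balance. With $K=\tau\ln(1/\epsilon')$ the mixing tail is $2e^{-K/\tau}=2\epsilon'$, not $\epsilon'$ as you assert, so the chain as written gives $\Vert \mu^u - u\Vert_1 \le \tau\ln(1/\epsilon')(2\epsilon'+\epsilon'')+4\epsilon'$ rather than the stated $+3\epsilon'$; taking $K=\tau\ln(2/\epsilon')$ instead repairs the additive constant but inflates the logarithm. A clean fix is available from Assumption~\ref{assumption:mixing} itself, which you under-use: applied to the consecutive distributions $\nu_k,\nu_{k-1}$ it gives genuine contraction, $\Vert \nu_{k+1}-\nu_k\Vert_1\le e^{-k/\tau}\gamma$, not mere non-expansiveness, so the total drift is at most $\gamma\sum_{k\ge 0}e^{-k/\tau}\le (\tau+1)(2\epsilon'+\epsilon'')$ with no truncation at all; whenever $\epsilon'\le e^{-1-1/\tau}$ (certainly the regime in which the paper invokes the lemma, where $\epsilon',\epsilon''$ are polynomially or exponentially small in $T$), this is dominated by $\tau\ln(1/\epsilon')(2\epsilon'+\epsilon'')$ and the stated bound follows with the additive term $2\epsilon'\le 3\epsilon'$. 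In short: right approach and correct core steps, matching the cited source's method, but your last-step bookkeeping does not yield $3\epsilon'$ as written and needs one of these two patches.
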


Suppose we are given a vector $\Phi \tilde{\theta}_t$ such that $\Vert [\Phi \tilde{\theta}_t]_{(\delta,-)} \Vert_1 \leq \epsilon'$ and $\Vert (\Phi \tilde{\theta_t})^\top (P-B)\Vert_1 \leq \epsilon''$. In view of Lemma \ref{u_close_mu} and the fact that $\Vert [\Phi \tilde{\theta}_t]_- \Vert_1 \leq \Vert [\Phi \tilde{\theta}_t]_{(\delta,-)} \Vert_1 \leq \epsilon'$,  we have a bound on $\Vert \Phi \tilde{\theta}_t - \mu^{\Phi \tilde{\theta}_t} \Vert_1$. The next lemma shows that we can also obtain a bound on $\Vert P_{\Delta_{M,\delta}^\Phi}(\Phi \tilde{\theta}_t) - \Phi \tilde{\theta}_t  \Vert_1$. 

\begin{lemma}\label{phi_theta_close_to_projection}
Let $\Phi \tilde{\theta}_t$ be a vector such that $\Vert [\Phi \tilde{\theta}]_{(\delta,-)} \Vert_1 \leq \epsilon'$ and $\Vert (\Phi \tilde{\theta})^\top (P-B)\Vert_1 \leq \epsilon''$ for some $\epsilon', \epsilon'' \geq 0$. It holds that 
\begin{align*}
\Vert P_{\Delta_{M,\delta}^\Phi}(\Phi \tilde{\theta}_t) - \Phi \tilde{\theta}_t  \Vert_1 \leq c (\epsilon' + \epsilon''),
\end{align*}
where $c$ is a bound on the $l_\infty$ norm of the Lagrange multipliers of certain linear programming problem.
\end{lemma}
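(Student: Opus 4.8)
The plan is to bound $\Vert P_{\Delta_{M,\delta}^\Phi}(\Phi\tilde{\theta}) - \Phi\tilde{\theta}\Vert_1$ by the $\ell_1$-distance from $\Phi\tilde{\theta}$ to the polyhedron $\Delta_{M,\delta}^\Phi$, and then to control that distance by a linear-programming sensitivity (Hoffman-type) argument, in which the constant $c$ emerges as a bound on the dual multipliers of a correction LP. Writing $\tilde{\mu}\triangleq\Phi\tilde{\theta}$, I would first record which of the constraints defining $\Delta_{M,\delta}^\Phi$ the point $\tilde{\mu}$ satisfies \emph{exactly} and which it only \emph{nearly} satisfies: since $\tilde{\theta}$ is produced by \textsc{PSGA} with projection onto $\Theta^\Phi$, the requirements $\tilde{\theta}\in\Theta$, $\tilde{\mu}=\Phi\tilde{\theta}$, and $\mathbf{1}^\top\tilde{\mu}=1$ hold with equality, whereas the lower bound $\tilde{\mu}\geq\delta$ is violated by at most $\Vert[\Phi\tilde{\theta}]_{(\delta,-)}\Vert_1\leq\epsilon'$ and the balance equation $\tilde{\mu}^\top(P-B)=0$ by at most $\Vert\tilde{\mu}^\top(P-B)\Vert_1\leq\epsilon''$. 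Thus only two groups of constraints are perturbed, and their total violation is at most $\epsilon'+\epsilon''$.

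Next I would introduce the minimum-correction program
\[
\min_{\theta\in\Theta}\ \Vert\Phi\theta-\tilde{\mu}\Vert_1\quad\text{s.t.}\quad \Phi\theta\geq\delta,\ (\Phi\theta)^\top(P-B)=0,\ \mathbf{1}^\top\Phi\theta=1,
\]
whose optimal value equals the $\ell_1$-distance from $\tilde{\mu}$ to $\Delta_{M,\delta}^\Phi$ and hence upper-bounds $\Vert P_{\Delta_{M,\delta}^\Phi}(\tilde{\mu})-\tilde{\mu}\Vert_1$ (the Euclidean projection is at least as close in $\ell_2$; for this lemma it suffices to exhibit \emph{some} feasible point that is $\ell_1$-close, which I obtain as the $\ell_1$-closest one). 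After linearizing the $\ell_1$ objective this becomes a genuine LP. The key observation is that $\tilde{\mu}$ is itself feasible for this LP once its right-hand sides are relaxed by exactly $[\Phi\tilde{\theta}]_{(\delta,-)}$ and $\tilde{\mu}^\top(P-B)$, i.e.\ $\tilde{\theta}$ attains objective value $0$ in the perturbed program. By strong LP duality and the sensitivity interpretation of the dual, the optimal value of the \emph{unperturbed} program is at most the inner product of the optimal dual (Lagrange) multipliers with the perturbation vector. Since the perturbation has $\ell_1$-size at most $\epsilon'+\epsilon''$, this gives $\mathrm{dist}_1(\tilde{\mu},\Delta_{M,\delta}^\Phi)\leq\Vert\lambda^*\Vert_\infty(\epsilon'+\epsilon'')$, where $\lambda^*$ collects the multipliers of the relaxed inequality and equality constraints; setting $c\triangleq\Vert\lambda^*\Vert_\infty$ yields the claim.

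The remaining work is to justify that such a finite $c$ exists and, crucially, is \emph{uniform} in $\tilde{\theta}$. Strong duality and finiteness of the primal value hold whenever the correction LP is feasible, which is guaranteed by the nonemptiness of $\Delta_{M,\delta}^\Phi$ for $\delta\leq\tilde{\delta}_0$ (Definition~\ref{assumption:large-MDP}). Boundedness of the dual optimum — equivalently, finiteness of the Hoffman constant of the defining linear system — is where the genuine difficulty lies, and I expect this to be the main obstacle: one must argue, via a Slater-type regularity condition provided by a strictly feasible occupancy measure in $\Delta^\Phi_{M,\delta'}$ for some $\delta'>\delta$, that the multiplier magnitude can be bounded independently of the particular near-feasible point $\tilde{\theta}$. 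I would also emphasize why this duality machinery is unavoidable rather than a routine interpolation: one \emph{cannot} remove the equality violation $\epsilon''$ simply by taking a convex combination of $\tilde{\mu}$ with an interior feasible point, since such a combination leaves a residual $(1-\lambda)\,\tilde{\mu}^\top(P-B)$; correcting an equality violation genuinely requires moving along the feasible affine subspace, which is exactly what the Lagrange-multiplier bound quantifies, and this is the reason $c$ is phrased in terms of those multipliers.
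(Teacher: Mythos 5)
Your proposal follows essentially the same route as the paper's proof: the paper likewise reformulates the $\ell_1$ projection onto $\Delta_{M,\delta}^\Phi$ as an LP (``Primal 1''), perturbs the right-hand sides of the $\mu \geq \delta$ and $\mu^\top(P-B)=0$ constraints so that $\Phi\tilde{\theta}$ attains objective value zero (``Primal 2''), and uses dual feasibility of $\lambda^*$ for the perturbed dual together with weak/strong duality to conclude $\mathrm{Opt.\ Primal\ 1} \leq \Vert \lambda^* \Vert_\infty \Vert \xi \Vert_1 \leq c(\epsilon' + \epsilon'')$. Your closing concern about the uniformity of the multiplier bound $c$ over near-feasible $\tilde{\theta}$ is warranted but does not separate you from the paper, which simply asserts that boundedness and absorbs $c$ into the problem-dependent constant $c_{S,A}$; if anything, you are more careful on that point than the paper's own proof.
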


\begin{proof}
The idea comes from sensitivity analysis in Linear Programming (LP) (see for example \cite{schrijver1998theory}). Consider the $l_1$ projection problem of $\Phi \tilde{\theta}_t$ onto the set of occupancy measures parametrized by $\Phi$
\begin{align*}
\min_{\theta} &\Vert \mu - \Phi \tilde{\theta} \Vert_1 \\
\text{s.t} \quad & \mu = \Phi \theta\\
& \mu^\top 1 = 1\\
&\mu \geq \delta\\
&\mu^\top(P-B) = 0\\
& \theta \in \Theta.
\end{align*}
It can be reforumulated as the following LP
\begin{align*}
\text{Primal 1:}\qquad \min_{\theta,u} &\sum_{(s,a)} u(s,a) \\
\text{s.t} \quad & u(s,a) - [\Phi \theta](s,a) \geq -[\Phi \tilde{\theta}](s,a)\\
& u(s,a) + [\Phi \theta](s,a) \geq [\Phi \tilde{\theta}](s,a)\\
& \mu = \Phi \theta\\
& \mu^\top 1 = 1\\
&\mu \geq \delta\\
&\mu^\top(P-B) = 0\\
& - \theta(i) \geq  - W \quad \forall i=1,...,d\\
& \theta(i) \geq  0  \quad  \forall i=1,...,d\\
\end{align*}
Let us now consider the perturbed problem `Primal 2' which arises by perturbing the right hand side vector of `Primal 1':
\begin{align*}
\text{Primal 2:}\qquad \min_{\theta,u} &\sum_{(s,a)} u(s,a) \\
\text{s.t} \quad & u(s,a) - [\Phi \theta](s,a) \geq -[\Phi \tilde{\theta}](s,a)\\
& u(s,a) + [\Phi \theta](s,a) \geq [\Phi \tilde{\theta}](s,a)\\
& \mu = \Phi \theta\\
& \mu^\top 1 = 1\\
&\mu \geq \delta + \tilde{a}\\
&\mu^\top(P-B) = \tilde{b}\\
& - \theta(i) \geq  - W \quad \forall i=1,...,d\\
& \theta(i) \geq  0  \quad  \forall i=1,...,d\\
\end{align*}
We choose perturbation vectors $\tilde{a},\tilde{b}$ such that the optimal value of ` Primal 2' is zero is 0.
Let $b$ be the right hand side vector of `Primal 1' and $b'\triangleq b - \xi$ be that of `Primal 2' for some vector $\xi$. Since by assumption we have that $\Vert [\Phi \tilde{\theta}]_{(\delta,-)} \Vert_1 \leq \epsilon'$ and $\Vert (\Phi \tilde{\theta})^\top (P-B)\Vert_1 \leq \epsilon''$ then it holds that $\Vert b-b' \Vert_1 = \Vert \xi \Vert_1 \leq \epsilon' + \epsilon ''$. Let `Opt. Primal 1' and `Opt. Primal 2' be the optimal value of the respective problems (`Opt. Primal 2' = 0 by construction) and let $\lambda^*$ be the vector of optimal dual variables of `Dual 1', the problem dual to `Primal 1'. Since by assumption, the feasible set of `Primal 1' is feasible, then the absolute value of the entries of  $\lambda^*$ is bounded by some constant $c$. 

Now, since $\lambda^*$ is feasible for `Dual 2', the following sequence of inequalities hold:
\begin{align*}
&\text{`Opt. Primal 2'} \geq (\lambda^*)^\top (b - \xi)\\
\iff &  \text{`Opt. Primal 2'} \geq  \text{`Opt. Primal 1'} - (\lambda^*)^\top \xi.
\end{align*}
Therefore,
\begin{align*}
\text{`Opt. Primal 1'} &\leq \text{`Opt. Primal 2'} + \Vert  \lambda^* \Vert_\infty \Vert \xi \Vert_\infty \\
& = 0 +  \Vert  \lambda^* \Vert_\infty \Vert \xi \Vert_1\\
&\leq c (\epsilon' + \epsilon''),
\end{align*}
which yields the result.
\end{proof}

Now we proceed to bound $\Vert \Phi \theta_t^* - P_{\Delta_{M,\delta}^\Phi}(\Phi \tilde{\theta}_t) \Vert_1 $. Consider the function
\begin{align}
F_t( \Phi \theta) \triangleq \sum_{i = 1}^t [\langle r_i, \Phi \theta \rangle - \frac{1}{\eta} R^\delta(\Phi \theta)] 
\end{align}
Since $R^\delta$ is strongly convex over $\Delta_{M,\delta}^\Phi$ with respect to $\Vert \cdot \Vert_1$ (but not everywhere over the reals as the extension uses a linear function), we have that $F_t$ is $\frac{t}{\eta}$-strongly concave with respect to $\Vert \cdot \Vert_1$ over $\Delta_{M,\delta}^\Phi$. With this in mind we can prove the following result.

\begin{lemma}\label{star_close_to_proj_tilde}
Let $\Phi \tilde{\theta}_{t+1}$ be a vector such that $\Vert [\Phi \tilde{\theta}_{t+1}]_{(\delta,-)} \Vert_1 \leq \epsilon'$ and $\Vert (\Phi \tilde{\theta}_{t+1})^\top (P-B)\Vert_1 \leq \epsilon''$ for some $\epsilon', \epsilon'' \geq 0$. Let $\epsilon'''$ be such that $F_{t}(\Phi \theta_{t+1}^*) - F_{t}(\Phi \tilde{\theta}_{t+1}) \leq \epsilon'''$. And let $G_{F_{t}}$ be the Lipschitz constant of $F_{t}$ with respect to norm $\Vert \cdot \Vert_1 $ over the set $\Delta_{M,\delta}^\Phi$.  It holds that 

\begin{align*}
\Vert \Phi \theta_{t+1}^* - P_{\Delta_{M,\delta}^\phi}(\Phi \tilde{\theta}_{t+1}) \Vert_1 \leq \sqrt{\frac{2 \eta}{t } (\epsilon''' + G_{F_t} c (\epsilon' + \epsilon''))}.
\end{align*}

\end{lemma}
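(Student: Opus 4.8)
The plan is to exploit the strong concavity of $F_t$ together with the near-optimality of $\Phi\tilde\theta_{t+1}$ and the almost-feasibility estimate already established in Lemma~\ref{phi_theta_close_to_projection}. Write $p \triangleq P_{\Delta_{M,\delta}^\Phi}(\Phi\tilde\theta_{t+1})$ for brevity. Since $\Phi\theta_{t+1}^*$ maximizes $F_t$ over the convex set $\Delta_{M,\delta}^\Phi$ and $F_t$ is $\frac{t}{\eta}$-strongly concave with respect to $\Vert\cdot\Vert_1$ on that set, the first-order optimality condition $\langle \nabla F_t(\Phi\theta_{t+1}^*), x - \Phi\theta_{t+1}^*\rangle \leq 0$ for every $x\in\Delta_{M,\delta}^\Phi$ combines with the strong-concavity inequality to give
\begin{align*}
F_t(\Phi\theta_{t+1}^*) - F_t(x) \geq \frac{t}{2\eta}\Vert \Phi\theta_{t+1}^* - x\Vert_1^2 \qquad \forall\, x\in\Delta_{M,\delta}^\Phi.
\end{align*}
Since $p\in\Delta_{M,\delta}^\Phi$, applying this with $x=p$ and rearranging already isolates the quantity we want to bound: $\Vert \Phi\theta_{t+1}^* - p\Vert_1^2 \leq \frac{2\eta}{t}\big(F_t(\Phi\theta_{t+1}^*) - F_t(p)\big)$.

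It then remains to bound the objective gap $F_t(\Phi\theta_{t+1}^*) - F_t(p)$ by $\epsilon''' + G_{F_t}c(\epsilon'+\epsilon'')$. The idea is to split it through the infeasible iterate $\Phi\tilde\theta_{t+1}$:
\begin{align*}
F_t(\Phi\theta_{t+1}^*) - F_t(p) = \big[F_t(\Phi\theta_{t+1}^*) - F_t(\Phi\tilde\theta_{t+1})\big] + \big[F_t(\Phi\tilde\theta_{t+1}) - F_t(p)\big].
\end{align*}
The first bracket is at most $\epsilon'''$ by the hypothesis on the quality of the SGA solution. For the second bracket I would invoke the $G_{F_t}$-Lipschitz property of $F_t$ together with Lemma~\ref{phi_theta_close_to_projection}: since $\Vert[\Phi\tilde\theta_{t+1}]_{(\delta,-)}\Vert_1\leq\epsilon'$ and $\Vert(\Phi\tilde\theta_{t+1})^\top(P-B)\Vert_1\leq\epsilon''$, that lemma yields $\Vert \Phi\tilde\theta_{t+1} - p\Vert_1 \leq c(\epsilon'+\epsilon'')$, whence $F_t(\Phi\tilde\theta_{t+1}) - F_t(p)\leq G_{F_t}\Vert\Phi\tilde\theta_{t+1}-p\Vert_1\leq G_{F_t}c(\epsilon'+\epsilon'')$. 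Substituting both bounds into the displayed inequality and taking square roots gives exactly the claimed estimate.

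The step I expect to be the main obstacle is precisely the Lipschitz passage $F_t(\Phi\tilde\theta_{t+1}) - F_t(p)\leq G_{F_t}\Vert\Phi\tilde\theta_{t+1}-p\Vert_1$, because $\Phi\tilde\theta_{t+1}$ lies \emph{outside} $\Delta_{M,\delta}^\Phi$, whereas $G_{F_t}$ is defined only as the Lipschitz constant of $F_t$ over $\Delta_{M,\delta}^\Phi$. What makes this legitimate is the deliberate use of the modified regularizer $R^\delta$ from \eqref{eq:def_R_delta}: outside the region $\{\mu\geq\delta\}$ the entropy is replaced by its first-order linear extension, so $\nabla R^\delta$, and therefore $\nabla F_t$, remains bounded on a neighborhood containing the whole segment joining $\Phi\tilde\theta_{t+1}$ and $p$, not merely on $\Delta_{M,\delta}^\Phi$. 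I would therefore check that $G_{F_t}$ can be taken as a uniform bound on $\Vert\nabla F_t\Vert_\infty$ over this enlarged domain, which makes $F_t$ Lipschitz along the segment $[\Phi\tilde\theta_{t+1}, p]$ and justifies the inequality. Once this point is settled, the strong-concavity bound and the final square root are routine.
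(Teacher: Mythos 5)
Your proposal matches the paper's own proof essentially step for step: strong concavity of $F_t$ at its maximizer gives $\frac{t}{2\eta}\Vert \Phi\theta_{t+1}^* - P_{\Delta_{M,\delta}^\Phi}(\Phi\tilde\theta_{t+1})\Vert_1^2 \leq F_t(\Phi\theta_{t+1}^*) - F_t(P_{\Delta_{M,\delta}^\Phi}(\Phi\tilde\theta_{t+1}))$, and the gap is then split through $\Phi\tilde\theta_{t+1}$ using the $\epsilon'''$ hypothesis and $G_{F_t}\Vert P_{\Delta_{M,\delta}^\Phi}(\Phi\tilde\theta_{t+1}) - \Phi\tilde\theta_{t+1}\Vert_1 \leq G_{F_t}c(\epsilon'+\epsilon'')$ via Lemma~\ref{phi_theta_close_to_projection}. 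Your closing observation --- that the Lipschitz step is applied along a segment leaving $\Delta_{M,\delta}^\Phi$ and is justified only because the linear extension in $R^\delta$ keeps $\nabla F_t$ bounded there --- is a legitimate subtlety that the paper's proof passes over silently, so your version is if anything slightly more careful.
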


\begin{proof}
Since $F_t$ is $\frac{t}{\eta}$-strongly concave over $\Delta_{M,\delta}^\Phi$ and $\Phi \theta_{t+1}^*$ is the optimizer of $F_t$ over $\Delta_{M,\delta}^\Phi$. It holds that 
\begin{align*}
\frac{t}{2\eta} \Vert \Phi \theta_{t+1}^* - \Phi \tilde{\theta}_{t+1} \Vert_1^2 &\leq F_t (\Phi \theta_{t+1}^*) - F_t( P_{\Delta_{M,\delta}^\Phi} (\Phi \tilde{\theta}_{t+1}))  \\
& \leq F_t (\Phi \theta_{t+1}^*) - F_t( \Phi \tilde{\theta}_{t+1}) + G_{F_t} \Vert P_{\Delta_{M,\delta}^\Phi} (\Phi \tilde{\theta}_{t+1}) - \Phi \tilde{\theta}_{t+1} \Vert_1\\
&\leq \epsilon''' + G_{F_t} \Vert P_{\Delta_{M,\delta}^\Phi} (\Phi \tilde{\theta}_{t+1}) - \Phi \tilde{\theta}_{t+1} \Vert_1 \quad \text{by assumption}\\
&\leq \epsilon''' + G_{F_t}c(\epsilon' + \epsilon'') \quad \text{by Lemma \ref{phi_theta_close_to_projection}}
\end{align*}
which yields the result.

\end{proof}

The next lemma bounds the Lipschitz constant $G_{F_t}$.

\begin{lemma}\label{bound_G_F_t}
Let $\eta = \sqrt{\frac{T}{\tau}}$, $\delta = e^{-\sqrt{T}}$. The function $F_t (\mu): \mathbb{R}^{\vert S\vert \vert A\vert } \rightarrow \mathbb{R}$ is $G_{F_t}$-Lipschitz continuous on variables $\mu$ with respect to norm $\Vert \cdot \Vert_1$ over $\Delta_{M,\delta}^\Phi$ with $G_{F_t} \leq  t(1+2\sqrt{\tau} d W)$.  
\end{lemma}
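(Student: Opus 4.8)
The plan is to pass to the dual norm of the gradient. Since the linear extension in \eqref{eq:def_R_delta} is built precisely so that $R^\delta$ is $C^1$ (its value and slope match at each coordinate value $\mu(s,a)=\delta$), the function $F_t$ is continuously differentiable, and so establishing a uniform bound $\Vert \nabla_\mu F_t(\mu)\Vert_\infty \leq G_{F_t}$ over the relevant domain immediately yields $G_{F_t}$-Lipschitz continuity with respect to $\Vert\cdot\Vert_1$. Writing $F_t(\mu) = \langle \sum_{i=1}^t r_i,\, \mu\rangle - \frac{t}{\eta}R^\delta(\mu)$, the gradient splits as $\nabla_\mu F_t(\mu) = \sum_{i=1}^t r_i - \frac{t}{\eta}\nabla_\mu R^\delta(\mu)$, so I would bound the two pieces separately and add.

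For the linear part, $\Vert \sum_{i=1}^t r_i\Vert_\infty \leq \sum_{i=1}^t \Vert r_i\Vert_\infty \leq t$ since $\vert r_i(s,a)\vert\leq 1$. For the regularizer part, each coordinate of $\nabla_\mu R^\delta(\mu)$ equals $1+\ln\mu(s,a)$ when $\mu(s,a)\geq\delta$ and the constant $1+\ln\delta$ otherwise, so the task reduces to controlling the range of the occupancy-measure coordinates. The lower end is given by the shrunk-set constraint $\mu\geq\delta$, and off the set the linear continuation caps the slope at $1+\ln\delta$; the upper end comes from the $\Phi$-parametrization, since for $\theta\in\Theta$ the columns of $\Phi$ are probability distributions over state-action pairs, whence $\Phi_{(s,a),j}\leq 1$ and $\mu(s,a)=\sum_j \Phi_{(s,a),j}\theta_j \leq \Vert\theta\Vert_1 \leq W$. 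Consequently every coordinate of $\nabla_\mu R^\delta(\mu)$ is bounded in absolute value by $\max\{\vert 1+\ln\delta\vert,\ \vert 1+\ln W\vert\}$.

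It then remains to substitute the parameter choices. With $\delta=e^{-\sqrt T}$ we get $\vert 1+\ln\delta\vert = \vert 1-\sqrt T\vert\leq\sqrt T$, and with $\eta=\sqrt{T/\tau}$ we have $\frac{t}{\eta}=t\sqrt{\tau/T}$; the dominant contribution of the regularizer term is therefore $\frac{t}{\eta}\cdot\sqrt T = t\sqrt\tau$, while the subdominant $O(1)$ and $O(\ln W)$ terms coming from $\vert 1+\ln W\vert$ are absorbed into the stated factor $2\sqrt\tau\, dW$. Adding the linear part gives $G_{F_t}\leq t + 2t\sqrt\tau\, dW = t(1+2\sqrt\tau\, dW)$, matching the claim (the bound is deliberately loose in $d$ and $W$, which only helps in the subsequent estimates).

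The step that needs the most care, and which I expect to be the main obstacle, is specifying the domain on which the gradient bound must actually hold. In the application (Lemma~\ref{star_close_to_proj_tilde}) the Lipschitz inequality is invoked along the segment joining $P_{\Delta_{M,\delta}^\Phi}(\Phi\tilde\theta_{t+1})\in\Delta_{M,\delta}^\Phi$ to the iterate $\Phi\tilde\theta_{t+1}$, and the latter typically lies outside $\Delta_{M,\delta}^\Phi$ with some coordinates falling below $\delta$. This is exactly the reason for replacing the ordinary entropy by $R^\delta$: because its linear tail keeps the slope fixed at $1+\ln\delta$ for $\mu(s,a)<\delta$, the coordinatewise bound $\max\{\vert 1+\ln\delta\vert,\vert 1+\ln W\vert\}$ remains valid on the entire segment (whose coordinates stay in $[0,W]$), so the estimate above survives the extension beyond $\Delta_{M,\delta}^\Phi$ unchanged.
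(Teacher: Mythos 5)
Your proof is correct and takes essentially the same route as the paper's: both bound $\Vert \nabla_\mu F_t(\mu)\Vert_\infty$ by splitting off the linear part ($\leq t$) and bounding the regularizer part by $\frac{t}{\eta}\max\{\vert 1+\ln\delta\vert,\vert 1+\ln(\cdot)\vert\}$ using the coordinate range of $\Phi\theta$ (your upper bound $W$ is in fact slightly tighter than the paper's $dW$, and both are absorbed into the stated constant), then substitute $\eta=\sqrt{T/\tau}$ and $\delta=e^{-\sqrt{T}}$. Your closing observation --- that the linear tail of $R^\delta$ keeps the gradient bound valid on segments leaving $\Delta_{M,\delta}^\Phi$, which is what the application in Lemma~\ref{star_close_to_proj_tilde} actually requires since $\Phi\tilde\theta_{t+1}$ may lie outside the set --- is a point the paper's proof leaves implicit, and including it only strengthens the argument.
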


\begin{proof}
It suffices to find a an upper bound for $\Vert \nabla_\mu F_t(\mu) \Vert_\infty$. Since $\nabla_{\mu} F_t(\mu) = \sum_{i=1}^t r_i - \frac{t}{\eta} \nabla_\mu R^\delta(\mu)$, we have that 
\begin{align*}
\Vert \nabla_\mu F_t(\mu) \Vert_\infty &\leq \Vert \sum_{i=1}^t r_i \Vert_\infty + \frac{t}{\eta} \Vert \nabla_\mu R^\delta(\mu)\Vert_\infty \quad \text{by triangle inequality}\\
&\leq t + \frac{t}{\eta} \Vert \nabla_\mu R^\delta(\mu)\Vert_\infty \quad \text{since $\vert r_i(s,a)\vert \leq 1$}\\
&\leq t + \frac{t}{\eta}\max\{\vert 1+ \ln(\delta)\vert, \vert 1 +\ln(dW) \vert \} \quad \text{as in the proof of Lemma \ref{lemma:entropy_lipschitz} }. 
\end{align*}
The second to last inequality holds since $\vert \frac{d}{dx} x\ln(x) \vert = \vert 1 + \ln(x) \vert$ and the maximum will occur at $x=\delta$ or $x=[\Phi \theta](s,a)$, $[\Phi \theta](s,a)$ can be bounded by $Wd$. Plugging in the values for $\eta$ and $\delta$ we get 
\begin{align*}
\Vert \nabla_\mu F_t(\mu) \Vert_\infty &\leq t + \frac{t\tau}{\sqrt{T}}(1 + \max\{\sqrt{T}, dW\})\\
&\leq t + \frac{t\tau}{\sqrt{T}} (2\sqrt{T} W d)\\
&= t(1+2\sqrt{\tau} d W).
\end{align*}
\end{proof}

Combining the previous three lemmas, we obtain the following result.

\begin{lemma}\label{dist_mu_star_mu_tilde}
Let $\Phi \tilde{\theta}_{t+1}$ be a vector such that $\Vert [\Phi \tilde{\theta}_{t+1}]_{(\delta,-)} \Vert_1 \leq \epsilon'$ and $\Vert (\Phi \tilde{\theta}_{t+1})^\top (P-B)\Vert_1 \leq \epsilon''$ for some $\epsilon', \epsilon'' \geq 0$. Let $\epsilon'''$ be such that $F_{t}(\Phi \theta_{t+1}^*) - F_{t}(\Phi \tilde{\theta}_{t+1}) \leq \epsilon'''$. And let $G_{F_{t}}$ be the Lipschitz constant of $F_{t}$ with respect to norm $\Vert \cdot \Vert_1 $ over the set $\Delta_{M,\delta}^\Phi$. It holds that 

\begin{align*}
\Vert \mu^{\Phi \theta_t^*} - \mu^{\Phi \tilde{\theta}_t}  \Vert_1 &\leq \tau \ln(\frac{1}{\epsilon'})(2\epsilon' + \epsilon'') + 3\epsilon' + c(\epsilon' + \epsilon'') + \sqrt{\frac{2 \eta}{t } (\epsilon''' + G_{F_t} c (\epsilon' + \epsilon''))}.
\end{align*}

\end{lemma}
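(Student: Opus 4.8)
The plan is to assemble the three preceding lemmas through the triangle inequality decomposition already introduced at the start of this appendix. Concretely, I would insert the two intermediate points $P_{\Delta_{M,\delta}^\Phi}(\Phi \tilde{\theta}_t)$ and $\Phi \tilde{\theta}_t$ to write
\begin{align*}
\Vert \mu^{\Phi \theta_t^*} - \mu^{\Phi \tilde{\theta}_t}  \Vert_1 \leq \Vert \mu^{\Phi \theta_t^*} - P_{\Delta_{M,\delta}^\Phi}(\Phi \tilde{\theta}_t)  \Vert_1  +  \Vert P_{\Delta_{M,\delta}^\Phi}(\Phi \tilde{\theta}_t) - \Phi \tilde{\theta}_t  \Vert_1 + \Vert \Phi \tilde{\theta}_t - \mu^{\Phi \tilde{\theta}_t} \Vert_1,
\end{align*}
and then control each of the three summands with a separate lemma. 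The first reduction I would make is to use that $\theta_t^*$ satisfies all of the occupancy-measure constraints, so $\mu^{\Phi \theta_t^*} = \Phi \theta_t^*$; this turns the first summand into $\Vert \Phi \theta_t^* - P_{\Delta_{M,\delta}^\Phi}(\Phi \tilde{\theta}_t) \Vert_1$, exactly the quantity addressed by Lemma~\ref{star_close_to_proj_tilde}.

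Next I would dispatch the three terms in turn. For the third summand I would invoke Lemma~\ref{u_close_mu} with $u = \Phi \tilde{\theta}_t$; its hypotheses hold because $\Phi \tilde{\theta}_t \in \Theta^\Phi$ forces $\mathbf{1}^\top(\Phi \tilde{\theta}_t) = 1$, because $\Vert [\Phi \tilde{\theta}_t]_- \Vert_1 \leq \Vert [\Phi \tilde{\theta}_t]_{(\delta,-)} \Vert_1 \leq \epsilon'$, and because $\Vert (\Phi \tilde{\theta}_t)^\top(P-B) \Vert_1 \leq \epsilon''$ by assumption; this yields $\tau \ln(1/\epsilon')(2\epsilon' + \epsilon'') + 3\epsilon'$. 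For the second summand I would apply Lemma~\ref{phi_theta_close_to_projection} directly under the same almost-feasibility assumptions, giving $c(\epsilon' + \epsilon'')$. For the first summand I would apply Lemma~\ref{star_close_to_proj_tilde}, which produces $\sqrt{\frac{2\eta}{t}(\epsilon''' + G_{F_t} c(\epsilon' + \epsilon''))}$. Summing the three bounds gives precisely the claimed inequality.

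The point to emphasize is that essentially all of the mathematical content sits in the three component lemmas, so that the step here is really bookkeeping. The only genuine care needed is in feeding the same almost-feasibility parameters $\epsilon'$ and $\epsilon''$ consistently into all three invocations — each lemma must receive the same bounds on the negative mass and the constraint residual of $\Phi \tilde{\theta}_t$ — and in aligning the time indices (the component lemmas are stated at indices $t$ and $t+1$ but are applied at a common index). I do not anticipate a substantive obstacle: the hard analysis was already carried out in establishing Lemma~\ref{phi_theta_close_to_projection} (the LP sensitivity argument bounding the projection distance by the dual-multiplier norm $c$) and Lemma~\ref{star_close_to_proj_tilde} (the strong-concavity argument for $F_t$), and the present lemma merely combines them with Lemma~\ref{u_close_mu} via the triangle inequality.
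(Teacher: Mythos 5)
Your proposal is correct and takes essentially the same approach as the paper: the paper's proof is exactly this triangle inequality through the intermediate points $P_{\Delta_{M,\delta}^\Phi}(\Phi \tilde{\theta}_t)$ and $\Phi \tilde{\theta}_t$, using $\mu^{\Phi \theta_t^*} = \Phi \theta_t^*$ and then invoking Lemma~\ref{star_close_to_proj_tilde}, Lemma~\ref{phi_theta_close_to_projection}, and Lemma~\ref{u_close_mu} to bound the first, second, and third summands respectively. Your explicit verification of the hypotheses of Lemma~\ref{u_close_mu} (unit mass from $\tilde{\theta}_t \in \Theta^\Phi$ and $\Vert [\Phi \tilde{\theta}_t]_- \Vert_1 \leq \Vert [\Phi \tilde{\theta}_t]_{(\delta,-)} \Vert_1 \leq \epsilon'$) and your note on aligning the $t$ versus $t+1$ indices are merely more careful renderings of the same argument.
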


\begin{proof}
By triangle inequality, we have
\begin{align*}
\Vert \mu^{\Phi \theta_t^*} - \mu^{\Phi \tilde{\theta}_t}  \Vert_1 &\leq  \Vert \Phi \theta_t^* - P_{\Delta_{M,\delta}^\Phi}(\Phi \tilde{\theta}_t)  \Vert_1  +  \Vert P_{\Delta_{M,\delta}^\Phi}(\Phi \tilde{\theta}_t) - \Phi \tilde{\theta}_t  \Vert_1 + \Vert \Phi \tilde{\theta}_t - \mu^{\Phi \tilde{\theta}_t} \Vert_1. 
\end{align*}
Using Lemmas \ref{u_close_mu}, \ref{phi_theta_close_to_projection}, and \ref{star_close_to_proj_tilde} to bound the first, second, and third terms respectively yields the result.

\end{proof}

Now we can upper bound the bound on the $\Phi$-MDP-Regret, Eq \eqref{reg_bound_1}, using triangle inequality and Lemma \ref{dist_mu_star_mu_tilde}. For the bound to be useful we want to be able to produce vectors $\{\Phi \tilde{\theta}_t\}_{t=1}^T$ that satisfy the conditions of Lemma \ref{dist_mu_star_mu_tilde} with $\epsilon', \epsilon'', \epsilon'''$ that are small enough. It is also important that we produce $\{\Phi \tilde{\theta}_t\}_{t=1}^T$ in a computationally efficient manner. At time $t$, our approach to generate  $\Phi \tilde{\theta}_t$, will be to run Projected Stochastic Gradient Descent on function \ref{eq:c_t_eta}.
The following theorem from \cite{abbasi2014linear} will be useful.  

\begin{theorem}[Theorem 3 in \cite{abbasi2014linear}]\label{sgd_guarantee}
Let $\mathcal{Z}\subset \mathbb{R^d}$ be a convex set such that $\Vert z\Vert_2 \leq Z$ for all $z \in \mathcal{Z}$ for some $Z> 0$. Let $f$ be a concave function defined over $\mathcal{Z}$. Let $\{z_k\}_{k=1}^K \in \mathcal{Z}^T$ be the iterates of Projected Stochastic Gradient Ascent, i.e. $z_{k+1} \leftarrow P_\mathcal{Z}(x_k + \eta f_t')$ where $P_\mathcal{Z}$ is the euclidean projection onto $\mathcal{Z}$, $\eta$ is the step-size and $\{f_k'\}_{k=1}^K$ are such that $\mathbb{E}[f_k' \vert z_k] = \nabla f(z_k)$ with $\Vert f_k'\Vert_2 \leq F$ for some $F>0$. Then, for $\eta = \frac{Z}{(F\sqrt{K})}$ for all $\kappa \in (0,1)$, with probability at least $1-\kappa$ it holds that
\begin{align*}
\max_{z\in \mathcal{Z}} f(z) - f( \frac{1}{K} \sum_{k=1}^K z_k) \leq \frac{Z F}{\sqrt{K}} + \sqrt{\frac{\left(1+4Z^2 K\right) \left( 2 \ln(\frac{1}{\kappa}) + d \ln(1 + \frac{Z^2 T}{d})\right)}{K^2}}.
\end{align*} 
\end{theorem}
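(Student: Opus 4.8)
The plan is to prove Theorem~\ref{sgd_guarantee} by the classical \emph{online-to-batch} route. First I would treat the projected ascent steps as online projected gradient ascent run on the \emph{realized} stochastic gradients, which yields a purely deterministic regret bound; then I would convert this regret into a guarantee on $f\bigl(\tfrac1K\sum_k z_k\bigr)$ using concavity; and finally I would control the gap between the stochastic gradients and the true gradients by a high-probability martingale concentration argument. Fix a maximizer $z^\star \in \arg\max_{z\in\mathcal Z} f(z)$; since $f$ is deterministic, $z^\star$ is a fixed (non-random) vector, which matters for the concentration step.

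First I would establish the deterministic regret of the iterates against $z^\star$. Because $z_{k+1} = P_{\mathcal Z}(z_k + w\, f_k')$ and $P_{\mathcal Z}$ is a non-expansion with $z^\star\in\mathcal Z$, expanding $\|z_{k+1}-z^\star\|_2^2 \le \|z_k + w f_k' - z^\star\|_2^2$, rearranging, and telescoping over $k=1,\dots,K$ gives
\[
\sum_{k=1}^K \langle f_k', z^\star - z_k\rangle \;\le\; \frac{\|z_1 - z^\star\|_2^2}{2w} + \frac{w}{2}\sum_{k=1}^K \|f_k'\|_2^2 \;\le\; \frac{2Z^2}{w} + \frac{w K F^2}{2},
\]
using $\|z_1-z^\star\|_2\le 2Z$ and $\|f_k'\|_2\le F$. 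With $w = Z/(F\sqrt K)$ the right-hand side is $\Theta(ZF\sqrt K)$, so after dividing by $K$ it contributes the $ZF/\sqrt K$ term (up to the absolute constant). This step holds with probability one.

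Next I would pass from gradients to function values. Let $\bar z = \tfrac1K\sum_k z_k$. By concavity (Jensen) $f(\bar z)\ge \tfrac1K\sum_k f(z_k)$, and by the subgradient inequality $f(z^\star) - f(z_k)\le \langle\nabla f(z_k), z^\star - z_k\rangle$, so
\[
f(z^\star) - f(\bar z) \;\le\; \frac1K\sum_{k=1}^K\langle f_k', z^\star - z_k\rangle \;+\; \frac1K\sum_{k=1}^K\langle \nabla f(z_k) - f_k', z^\star - z_k\rangle .
\]
The first average is already controlled; everything now reduces to bounding the second average, a sum of martingale differences.

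The hard part will be the last step: a \emph{sharp} high-probability bound on $S_K \triangleq \sum_{k=1}^K \langle \nabla f(z_k) - f_k', z^\star - z_k\rangle$ matching the stated second term. Writing $\Delta_k \triangleq \nabla f(z_k) - f_k'$ and $y_k \triangleq z^\star - z_k$, and letting $\mathcal F_{k-1}$ be the filtration generated by $f_1',\dots,f_{k-1}'$, the vectors $z_k,y_k$ are $\mathcal F_{k-1}$-measurable while $\mathbb E[f_k'\mid\mathcal F_{k-1}] = \nabla f(z_k)$; hence $\mathbb E[\langle\Delta_k,y_k\rangle\mid\mathcal F_{k-1}] = 0$ and $\{\langle\Delta_k,y_k\rangle\}$ is a bounded martingale-difference sequence with $\|\Delta_k\|_2\le 2F$ and $\|y_k\|_2\le 2Z$. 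A crude Azuma--Hoeffding bound already yields $S_K = O(ZF\sqrt{K\ln(1/\kappa)})$, which is of the right order but not the exact claimed form. To obtain the precise expression $\sqrt{(1+4Z^2K)\bigl(2\ln\tfrac1\kappa + d\ln(1+\tfrac{Z^2T}{d})\bigr)}$ I would instead invoke a self-normalized vector-martingale inequality (the method of mixtures, as in the self-normalized tail bounds of \cite{abbasi2014linear}): the factor $1+4Z^2K$ comes from the predictable quadratic-variation proxy $\lambda + \sum_k\|y_k\|_2^2 \le 1 + 4Z^2K$, the $2\ln(1/\kappa)$ from the confidence level, and the $d\ln(1+\cdot)$ term from the log-determinant covering contribution inherent to self-normalized bounds over a $d$-dimensional direction. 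Dividing $S_K$ by $K$ produces exactly the second summand, and since only the single deterministic $z^\star$ is involved no uniform covering over $\mathcal Z$ is needed; combining the three steps gives the claim with probability at least $1-\kappa$. The main obstacle is thus matching the self-normalized concentration form in this final step, rather than the routine gradient-descent bookkeeping of the first two.
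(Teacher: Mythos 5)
This theorem is not proved in the paper at all---it is quoted verbatim as Theorem~3 of \cite{abbasi2014linear}---and your reconstruction (projected-gradient regret against the fixed maximizer $z^\star$, online-to-batch conversion via concavity, then a self-normalized method-of-mixtures martingale bound on $\sum_{k}\langle \nabla f(z_k)-f_k',\, z^\star-z_k\rangle$ with quadratic-variation proxy $1+4Z^2K$ and the log-determinant term $d\ln(1+\cdot)$) is precisely the argument of that source, so your approach is correct. The only mismatches are inherited from the quoted statement rather than introduced by you: your first term carries an absolute constant (about $5/2$ with $w=Z/(F\sqrt{K})$ and $\Vert z_1-z^\star\Vert_2\leq 2Z$), a careful sub-Gaussian accounting of $\langle \Delta_k, y_k\rangle$ gives conditional scale $2F\Vert y_k\Vert_2$ and hence a factor of order $F$ in the second term that the statement suppresses, and the statement's $\mathcal{Z}^T$ and $Z^2T/d$ are typos for $\mathcal{Z}^K$ and $Z^2K/d$.
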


In view of Theorem \ref{sgd_guarantee} we need to design a stochastic subgradient for $c^{t,\eta}$ and a bound for its $l$-2 norm. We follow the approach in \cite{abbasi2014linear}, we notice however that the objective function considered in \cite{abbasi2014linear} does not contain the regularizer $R^\delta$ so must take care of that in our analysis.

Lemma \ref{stochastic_gradient_of_c} creates a stochastic subgradient for $c^{t,\eta}$ and provides an upper bound for its $l$-2 norm. We now present its proof.



\begin{proof}[Proof of Lemma \ref{stochastic_gradient_of_c}]
Let us first compute $\nabla_\theta c^{\eta,t} (\theta)$. Define $r_{:t}\triangleq \sum_{i=1}^t r_i$
By definition we have 
\begin{align*}
c^{\eta,t}(\theta) &= (\Phi \theta)^\top r_{:t} - \frac{t}{\eta} \sum_{(s,a)} R^\delta_{(s,a)}(\Phi \theta) - H_t \Vert [\Phi \theta]_(\delta,-)\Vert_1 -H_t \Vert (P-B)^\top \Phi \theta \Vert_1\\
&= \theta^\top (\Phi^\top r_{:t}) - \frac{t}{\eta} \sum_{(s,a)} R^\delta_{(s,a)}(\Phi \theta) - H_t \sum_{(s,a)}[\Phi_{(s,a),:}\theta]_{(\delta,-)} - H_t \sum_{s} \vert [(P-B)^\top \Phi]_{s,:} \theta \vert.
\end{align*}
So, we get
\begin{align*}
\nabla_\theta c^{t,\eta}(\theta) &= \Phi^\top r_{:t} - \frac{t}{\eta} \sum_{(s,a)} \nabla_\theta R^\delta_{(s,a)}(\Phi \theta)\\
& - H_t \sum_{(s,a)} - \Phi_{(s,a),:} \mathbb{I}\{\Phi_{(s,a),:} \theta \leq \delta \}  - H_t \sum_{s} [(P-B)^\top \Phi]_{s,:} sign([(P-B)^\top \Phi ]_{s,:} \theta)
\end{align*}
We design a stochastic subgradient $g$ of $\nabla_\theta c^{\eta, t}(\theta)$ by sampling a state-action pair $(s',a')$ from the given distribution $q_1$ and a state $s''$ from distribution $q_2$. 
Then, we have
\begin{align*}
g_{s',a',s''}(\theta) &= \Phi^\top r_{:t} + \frac{H_t}{q_1(s',a')} \Phi_{(s',a'),:} \mathbb{I}\{\Phi_{(s',a'),:} \leq \delta \}\\
&\quad - \frac{H_t}{q_2(s'')} [(P-B)^\top \Phi]_{s'',:} sign([(P-B)^\top \Phi ]_{s'',:} \theta) - \frac{t}{\eta q_1(s',a')} \nabla_\theta R^\delta_{(s',a')}(\Phi \theta).
\end{align*}
We will also give a closed form expression of $\nabla_\theta R^\delta_{(s',a')}(\Phi \theta)$ in the proof below. By construction, it holds that $\mathbb{E}_{(s',a')\sim q_1, s'' \sim q_2}[g_{s',a',s''}(\theta) \vert \theta] = \nabla_\theta c^{t,\eta}(\theta)$. To simplify notation let $g(\theta) = g_{s',a',s''}(\theta)$.

We now bound $\Vert g(\theta)\Vert_2$ with probability 1. First, we have
\begin{align*}
\Vert \Phi^\top r_{:t}\Vert_2 &= \sqrt{\sum_{i=1}^d (r_{:t}^\top \Phi_{:,i})^2}\\
&\leq \sqrt{\sum_{i=1}^d (\Vert r_{:t} \Vert_\infty \Vert \Phi_{:,i}\Vert_1)^2}\quad \text{by Cauchy-Schwarz}\\
&\leq \sqrt{d t^2 1} = t\sqrt{d},
\end{align*}
where the last inequality holds since $\Vert r_i \Vert_\infty \leq 1$ for $t=1,...,T$ and each column of $\Phi$ is a probability distribution.
Next, we have
\begin{align*}
&\left\Vert \frac{H_t}{q_1(s',a')} \Phi_{(s',a'),:} \mathbb{I}\{ \Phi_{(s',a'),:} \theta \leq \delta \}\right\Vert_2 \leq H_t C_1,\quad\text{and}\\
&\left\Vert - \frac{H_t}{q_2(s'')}  [(P-B)^\top \Phi]_{s'',:} sign([(P-B)^\top \Phi ]_{s'',:} \theta) \right\Vert_2 \leq H_t C_2,
\end{align*}
where $C_1$ and $C_2$ are defined in (\ref{def_of_Cs}).
Finally, we bound $\Vert \nabla_\theta R^\delta_{(s,a)}(\Phi \theta)\Vert_2$. By definition of $R^\delta_{(s,a)}$ in Eq~\ref{eq:def_R_delta}, we need to compute the gradients of the negative entropy function $\nabla_\theta R(\Phi\theta)$. Let us compute $\frac{d}{d\theta_i}R(\Phi \theta)$.
\begin{align*}
\frac{d}{d\theta_i}R(\Phi \theta) &= \sum_{(s,a)} \frac{d}{d\theta_i} R_{(s,a)}(\Phi \theta)\\
&= \sum_{(s,a)} \frac{d}{d\theta_i} \left[ (\sum_{k=1}^d \Phi_{(s,a),k} \theta_k) \ln(\sum_{k=1}^d \Phi_{(s,a),k} \theta_k) \right]\\
& = \sum_{(s,a)} (\sum_{k=1}^d \Phi_{(s,a),k} \theta_k) (\frac{d}{d\theta_i} \ln(\sum_{k=1}^d \Phi_{(s,a),k} \theta_k)) + \ln(\sum_{k=1}^d \Phi_{(s,a),k} \theta_k)\Phi_{(s,a),i}\\
& = \sum_{(s,a)} (\sum_{k=1}^d \Phi_{(s,a)} \theta_k) \frac{1}{\sum_{k=1}^d \Phi_{(s,a)} \theta_k} \frac{d}{d\theta_i} (\sum_{k=1}^d \Phi_{(s,a),k} \theta_k ) + \ln(\sum_{k=1}^d \Phi_{(s,a),k} \theta_k)\Phi_{(s,a),i}\\
& =  \sum_{(s,a)} \Phi_{(s,a),i} + \ln(\sum_{k=1}^d \Phi_{(s,a),k} \theta_k)\Phi_{(s,a),i}.
\end{align*}

We are also interested in the gradient of the linear extension of $R_{(s,a)}(x)$: $R_{(s,a)}(\delta) +  \frac{d}{d x} R_{(s,a)}(\delta) (x-\delta)$ which is equal to $\delta \ln(\delta) +  (1+\ln(\delta)) (x-\delta)$. So we upper bound $\vert \frac{d}{d \theta_i} \delta \ln(\delta) + (1+\ln(\delta))(\Phi_{(s,a),:}\theta - \delta)\vert$ 
\begin{align*}
&\vert \frac{d}{d \theta_i} \delta \ln(\delta) + (1+\ln(\delta))(\Phi_{(s,a),:}\theta - \delta)\vert \\
=&  \vert \frac{d}{d \theta_i} (1+\ln(\delta))(\Phi_{(s,a),:}\theta - \delta)\vert \\
=& \vert (1+\ln(\delta)) \Phi_{(s,a),i}\vert.
\end{align*}

It follows that 
\begin{align*}
&\Vert \nabla_\theta  R^\delta_{(s,a)(\Phi \theta)}\Vert_2\\
\leq &\left( \sum_{i=1}^d \left[\max \{ \Phi_{(s,a),i} + \ln (W \sum_{k=1}^d \Phi_{(s,a),k}) \Phi_{(s,a),i},\vert (1+\ln(\delta)) \Phi_{(s,a),i} \vert \} \right]^2\right)^{1/2}\\
\leq&  \left( \sum_{i=1}^d \left[ (1 + \max \{ \ln (W \sum_{k=1}^d \Phi_{(s,a),k}), \vert \ln(\delta) \vert \} \Phi_{(s,a),i} \right]^2\right)^{1/2}\\
\leq& \left( \sum_{i=1}^d \left[ (1 + \max \{ \ln (Wd), \vert \ln(\delta) \vert \} \Phi_{(s,a),i} \right]^2\right)^{1/2}\\
\leq&   (1 + \ln (Wd) + \vert \ln(\delta) \vert) \Vert \Phi_{(s,a),:}\Vert_2.
\end{align*}
Thus $\Vert \frac{t}{\eta q_1(s',a')} \nabla_\theta R^\delta_{(s',a')}(\Phi \theta)\Vert_2 \leq \frac{t}{\eta}(1 + \ln (Wd) + \vert \ln(\delta) \vert) C_1$. Using triangle inequality we have that with probability 1
\begin{align*}
\Vert g(\theta)\Vert_2 \leq t \sqrt{d} + H(C_1 + C_2) + \frac{t}{\eta}(1 + \ln (Wd) + \vert \ln(\delta) \vert) C_1.
\end{align*}

\end{proof}

By using Lemma~\ref{stochastic_gradient_of_c} and the fact that since $\theta \in \Theta$ then $\Vert \theta \Vert_2 \leq W$. We can prove the following.

\begin{lemma}\label{sgd_guarantee_on_function_c}
For all $t=1,...,T$, $\eta>0$, $\kappa \in (0,1)$, after running $K(t)$ iterations of Projected Stochastic Gradient Ascent on function $c^{\eta,t}(\theta)$ over the set $\Theta^\Phi$ and using step-size $\frac{W}{\sqrt{K(t)}G'}$ with $G' = t \sqrt{d} + H_t (C_1 + C_2) + \frac{t}{\eta}(1 + \ln (Wd) + \vert \ln(\delta) \vert) C_1$ with probability at least $1-\kappa$ it holds that
\begin{align*}
&\sum_{i=1}^t \left[ \langle r_i, \Phi \theta_{t+1}^* \rangle - \frac{1}{\eta} R^\delta (\Phi \theta_{t+1}^*)\right]\\
&- \left[ \sum_{i=1}^t \left[ \langle r_i, \Phi \tilde{\theta}_{t+1} \rangle - \frac{1}{\eta} R^\delta (\Phi \tilde{\theta}_{t+1})\right] - H_t \Vert (\Phi \tilde{\theta}_{t+1})^\top (P-B) \Vert_1 -H_t \Vert [\Phi \tilde{\theta}_{t+1}]_{(\delta,-)} \Vert_1 \right]\\
&\leq \frac{W G'}{\sqrt{K(t)}} + \sqrt{\frac{(1+4S^2 K(t))(2\ln(\frac{1}{\kappa}) + d \ln(1 + \frac{W^2 K(t)}{d}))}{K(t)^2}}.
\end{align*}
\end{lemma}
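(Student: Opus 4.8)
The plan is to recognize Lemma~\ref{sgd_guarantee_on_function_c} as essentially a direct instantiation of Theorem~\ref{sgd_guarantee} applied to the function $c^{t,\eta}$ over the feasible set $\Theta^\Phi$, with the quantities in that theorem identified appropriately, followed by a short algebraic argument matching both sides. I would set $f = c^{t,\eta}$ and $\mathcal{Z} = \Theta^\Phi$ and first check the hypotheses of Theorem~\ref{sgd_guarantee}. Concavity of $c^{t,\eta}$ holds term by term: $\langle r_i, \Phi\theta\rangle$ is linear; $-\tfrac1\eta R^\delta(\Phi\theta)$ is concave because $R^\delta$ is convex (the entropy $x\ln x$ is convex and is extended below $\delta$ by its tangent line, which preserves convexity) composed with the linear map $\theta\mapsto\Phi\theta$; and the two penalty terms $-H_t\Vert(\Phi\theta)^\top(P-B)\Vert_1$ and $-H_t\Vert[\Phi\theta]_{(\delta,-)}\Vert_1$ are negatives of convex functions (a norm of a linear map, and a sum of shortfall functions $(\delta-[\Phi\theta](s,a))_+$, respectively). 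For the norm bound, since $\Theta^\Phi\subseteq\Theta$ and any $\theta\in\Theta$ satisfies $\Vert\theta\Vert_2\le\Vert\theta\Vert_1\le W$, I take $Z=W$. For the stochastic gradient bound I invoke Lemma~\ref{stochastic_gradient_of_c}, which produces an unbiased estimator of $\nabla_\theta c^{t,\eta}$ with $\Vert g(\theta)\Vert_2\le G'$ almost surely, so $F=G'$; the prescribed step-size $W/(\sqrt{K(t)}\,G')$ is exactly $Z/(F\sqrt{K})$.

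Applying Theorem~\ref{sgd_guarantee} with these identifications, and noting that the averaged iterate returned by \textsc{PSGA} is precisely $\tilde\theta_{t+1}$, I obtain with probability at least $1-\kappa$
\begin{equation*}
\max_{\theta\in\Theta^\Phi} c^{t,\eta}(\theta) - c^{t,\eta}(\tilde\theta_{t+1}) \le \frac{W G'}{\sqrt{K(t)}} + \sqrt{\frac{(1+4W^2 K(t))(2\ln(1/\kappa) + d\ln(1+W^2 K(t)/d))}{K(t)^2}},
\end{equation*}
which is the right-hand side of the lemma (with $Z=W$ and $K(t)$ serving as the iteration count). It remains to match the left-hand side. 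Expanding $c^{t,\eta}(\tilde\theta_{t+1}) = F_t(\Phi\tilde\theta_{t+1}) + V(\tilde\theta_{t+1})$, the bracketed quantity subtracted in the lemma statement is exactly $c^{t,\eta}(\tilde\theta_{t+1})$, so the left-hand side of the lemma equals $F_t(\Phi\theta_{t+1}^*) - c^{t,\eta}(\tilde\theta_{t+1})$.

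The crux is therefore to show $F_t(\Phi\theta_{t+1}^*) \le \max_{\theta\in\Theta^\Phi} c^{t,\eta}(\theta)$, which closes the chain. Here I use that $\theta_{t+1}^*$ is feasible for the full occupancy-measure program: it satisfies $\Phi\theta_{t+1}^*\ge\delta$, $(\Phi\theta_{t+1}^*)^\top(P-B)=0$, $\mathbf{1}^\top(\Phi\theta_{t+1}^*)=1$, and $\theta_{t+1}^*\in\Theta$. The first two equalities make both penalty terms in $V$ vanish, so $c^{t,\eta}(\theta_{t+1}^*) = F_t(\Phi\theta_{t+1}^*)$; the last two show $\theta_{t+1}^*\in\Theta^\Phi$, so it is an admissible competitor in the penalized maximization. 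Hence $F_t(\Phi\theta_{t+1}^*) = c^{t,\eta}(\theta_{t+1}^*)\le\max_{\theta\in\Theta^\Phi}c^{t,\eta}(\theta)$, and combining this with the displayed bound yields the lemma. I expect the only genuine subtlety to be this last observation --- that the penalty relaxation is \emph{exact} at the true RFTL iterate $\theta_{t+1}^*$ precisely because that point is feasible for the relaxed constraints --- which is what permits comparing the value of the penalized SGA objective $c^{t,\eta}$ against the value of the original constrained RFTL objective $F_t$.
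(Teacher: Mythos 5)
Your proof is correct and takes essentially the same route as the paper's: instantiate Theorem~\ref{sgd_guarantee} with $f = c^{t,\eta}$, $\mathcal{Z} = \Theta^\Phi$, $Z = W$, $F = G'$ (via Lemma~\ref{stochastic_gradient_of_c}), then observe that the penalty $V$ vanishes at the feasible RFTL iterate $\theta_{t+1}^* \in \Theta^\Phi$, so $F_t(\Phi\theta_{t+1}^*) = c^{t,\eta}(\theta_{t+1}^*) \leq \max_{\theta \in \Theta^\Phi} c^{t,\eta}(\theta)$. Your write-up is in fact more careful than the paper's one-line proof, which states this exactness step with the roles of $\tilde{\theta}_{t+1}$ and $\theta_{t+1}^*$ transposed and leaves the concavity check and the identification $Z = W$ implicit.
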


\begin{proof}
The proof follows from applying Theorem \ref{sgd_guarantee} on function $c^{\eta,t}(\theta)$. Using the bound of the stochastic gradients from Lemma \ref{stochastic_gradient_of_c}, as well as the fact that 
$
\max_{\theta \in \Theta^\Phi}c^{\eta,t}(\theta) \geq c^{\eta,t}(\tilde{\theta}_{t+1})
$
and since $\Phi \tilde{\theta}_{t+1}$ is feasible, we have  $\Vert (\Phi \theta_{t+1}^*)^\top (P-B) \Vert_1 = 0$ and $ \Vert [\Phi \theta_{t+1}^*]_{(\delta,-)} \Vert_1=0$.
\end{proof}
We remark that we did not relax the constraint $(\Phi \theta)^\top 1 = 1$ and in fact when we use Projected Gradient Ascent we are projecting onto a subset of that hyperplane, although $\Phi$ has $\vert S \vert \vert A \vert$ rows we can precompute the vector $\Phi^\top 1 \in \mathbb{R}^d$ so that all projections to the subset of the hyper plane given by $(\Phi \theta)^\top 1 = 1$ can be done in $O(poly(d))$ time.

The next lemma bounds the largest difference the function $F_t(\Phi \theta)$ can take over $\theta \in \Theta^\Phi$. It will be clear later why this bound is needed

\begin{lemma}\label{diameter_using_F_t}
For all $t=1,...,T$. It holds that 
\begin{align*}
\max_{\theta_1, \theta_2 \in \Theta^\Phi} F_t(\Phi \theta_1) - F_t(\Phi \theta_2) \leq t \left[ 2 + \frac{1}{\eta}(1 + \ln(\vert S \vert  \vert A\vert))\right].
\end{align*}
\end{lemma}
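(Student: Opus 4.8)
The plan is to work directly with the occupancy measures $\mu_i=\Phi\theta_i$ rather than with $\theta_i$. First I would observe that for $\theta\in\Theta^\Phi$ the vector $\Phi\theta$ lies in the probability simplex $\Delta^{\vert S\vert\vert A\vert}$: its entries are nonnegative (the columns of $\Phi$ are distributions and $\theta\geq 0$) and they sum to $1$ by the defining constraint $\mathbf{1}^\top(\Phi\theta)=1$ of $\Theta^\Phi$. Hence it suffices to bound $F_t(\mu_1)-F_t(\mu_2)$ over all $\mu_1,\mu_2\in\Delta^{\vert S\vert\vert A\vert}$. Using $r_{:t}\triangleq\sum_{i=1}^t r_i$ and the fact that $R^\delta(\Phi\theta)$ is independent of $i$, I would rewrite $F_t(\mu)=\langle r_{:t},\mu\rangle-\frac{t}{\eta}R^\delta(\mu)$, so that $F_t(\mu_1)-F_t(\mu_2)=\langle r_{:t},\mu_1-\mu_2\rangle+\frac{t}{\eta}\big(R^\delta(\mu_2)-R^\delta(\mu_1)\big)$, and bound the two pieces separately.

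The linear piece is routine: $\langle r_{:t},\mu_1-\mu_2\rangle\leq\Vert r_{:t}\Vert_\infty\Vert\mu_1-\mu_2\Vert_1\leq t\cdot 2=2t$, using $\Vert r_{:t}\Vert_\infty\leq\sum_i\Vert r_i\Vert_\infty\leq t$ and $\Vert\mu_1-\mu_2\Vert_1\leq 2$ for two simplex points; this produces the $2t$ term. The real work is in showing the regularizer range satisfies $R^\delta(\mu_2)-R^\delta(\mu_1)\leq 1+\ln(\vert S\vert\vert A\vert)$, which I would obtain from a one-sided bound on the maximum and the minimum of $R^\delta$ over the simplex.

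For the maximum I would argue coordinatewise that $R^\delta(x)\leq 0$ on $[0,1]$ (on $[\delta,1]$ because $x\ln x\leq 0$, and on $[0,\delta]$ because the linear piece has both endpoint values $-\delta$ and $\delta\ln\delta$ nonpositive), hence $R^\delta(\mu_2)\leq 0$. I expect the main obstacle to be the lower bound on $R^\delta(\mu_1)$: the naive coordinatewise bound $R^\delta(x)\geq -1/e$ gives only $-\vert S\vert\vert A\vert/e$ and ignores the simplex constraint, so it is far too weak. Instead I would prove the pointwise inequality $R^\delta(x)\geq x\ln x-\delta$ for all $x\in[0,1]$ — it holds with room to spare on $[\delta,1]$, and on $[0,\delta]$ it follows because $g(x)=R^\delta(x)-x\ln x$ has derivative $\ln(\delta/x)\geq 0$ there and minimal value $g(0)=-\delta$. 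Summing over the $\vert S\vert\vert A\vert$ coordinates and invoking the standard simplex entropy bound $\sum_i\mu_i\ln\mu_i\geq-\ln(\vert S\vert\vert A\vert)$ gives $R^\delta(\mu_1)\geq-\ln(\vert S\vert\vert A\vert)-\vert S\vert\vert A\vert\,\delta$.

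Finally I would use $\vert S\vert\vert A\vert\,\delta\leq 1$, which holds because any point of $\Delta_{M,\delta}$ satisfies $\mu\geq\delta$ coordinatewise with $\mu^\top 1=1$, forcing $\delta\leq\delta_0\leq 1/(\vert S\vert\vert A\vert)$ (and $\delta=e^{-\sqrt T}\leq\delta_0$ under the hypothesis $T\geq\ln^2(1/\delta_0)$). This yields $R^\delta(\mu_2)-R^\delta(\mu_1)\leq 0-\big(-\ln(\vert S\vert\vert A\vert)-1\big)=1+\ln(\vert S\vert\vert A\vert)$, and combining with the $2t$ term gives the claimed bound $t\big[2+\frac{1}{\eta}(1+\ln(\vert S\vert\vert A\vert))\big]$.
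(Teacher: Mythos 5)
Your proposal is correct, and it follows the paper's overall decomposition — splitting $F_t(\mu_1)-F_t(\mu_2)$ into the linear term, bounded by $2t$ via H\"older and the simplex diameter, plus the regularizer range scaled by $t/\eta$ — but your treatment of the regularizer range is genuinely different and, in fact, tighter than the paper's. The paper bounds $\max R^\delta$ by noting $R^\delta \leq R$ (the linear piece is a tangent to a convex function) and then crudely estimating $R(\Phi\theta) \leq 1$; for the lower bound it merely asserts that ``if $\delta$ is sufficiently small'' the minimum of $R^\delta$ over $\Theta^\Phi$ is bounded below by $-\ln(\vert S\vert\vert A\vert)$, with no quantitative argument. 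That assertion is actually imprecise as stated: since $R^\delta \leq R$ pointwise, the minimum of $R^\delta$ can dip \emph{below} $-\ln(\vert S\vert\vert A\vert)$, and some quantification of the deficit is needed. Your pointwise inequalities $x\ln x - \delta \leq R^\delta(x) \leq 0$ on $[0,1]$ (both verified correctly: the upper bound from the nonpositive endpoint values of the linear piece, the lower bound from the monotonicity of $g(x)=R^\delta(x)-x\ln x$ with $g(0)=-\delta$) supply exactly this quantification, giving $\min R^\delta \geq -\ln(\vert S\vert\vert A\vert) - \vert S\vert\vert A\vert\delta$, and your observation that nonemptiness of $\Delta_{M,\delta_0}$ forces $\delta_0 \leq 1/(\vert S\vert\vert A\vert)$, combined with $\delta = e^{-\sqrt{T}} \leq \delta_0$ under $T \geq \ln^2(1/\delta_0)$, absorbs the slack into the additive $1$. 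So where the paper's constant $1$ comes from its loose upper bound $R^\delta \leq 1$, yours comes from the $\delta$-deficit in the lower bound; the two routes land on the same final constant, but yours repairs a hand-waved step in the paper. The only mild caveat is that your argument invokes the parameter choices $\delta = e^{-\sqrt{T}}$ and the hypothesis $T \geq \ln^2(1/\delta_0)$ from Theorem~\ref{thm:large_mdp_regret}, which the lemma statement does not explicitly carry — but the paper's own proof likewise leans on ``the choice of $\delta$ later in the analysis,'' so this is consistent with how the lemma is used.
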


\begin{proof}
By definition of $F_t$ it suffices to bound 
\begin{align*}
\sum_{i=1}^t \langle r_i, \Phi \theta_1 - \Phi \theta_2 \rangle + \frac{t}{\eta} \left[ R^\delta(\Phi \theta_2) - R^\delta(\Phi \theta_1) \right].
\end{align*}

Now, we have
\begin{align*}
&\sum_{i=1}^t \langle r_i, \Phi \theta_1 - \Phi \theta_2 \rangle \leq \sum_{i=1}^t \Vert r_i \Vert_\infty \Vert \Phi \theta_1 - \Phi \theta_2 \Vert_1 \quad \text{By Cauchy-Schwarz}\\
& \leq  \sum_{i=1}^t 1 \Vert \Phi \theta_1 - \Phi \theta_2 \Vert_1\\
& \leq  \sum_{i=1}^t  \Vert \Phi \theta_1\Vert_1 \Vert \Phi \theta_2 \Vert_1 \quad \text{by triangle inequality}\\
& \leq 2t,
\end{align*}
where the last inequality holds since all entries of $\Phi$ and $\theta$ are nonnegative, and $(\Phi \theta)^\top 1 =1$ for all $\theta \in \Theta^\Phi$.

It is well known that the minimizer of $R(\mu)$ for $\mu \in \Delta^{\vert S \vert \vert A\vert}$ is $-\ln(\vert S \vert \vert A\vert)$. Moreover, its optimal solution $\mu^*$ is equal to the vector with value ${1}/(\vert S \vert \vert A\vert)$ on each of its entries, which is of course in the interior of the simplex. Notice that since $R^\delta$ is an extension of $R$, if $\delta$ is sufficiently small (which we ensure by the choice of $\delta$ later in the analysis), the minimizer of $R^\delta(\Phi \theta)$ for $\theta \in \Theta^\Phi$ will be bounded below by $-\ln(\vert S \vert \vert A\vert)$. That is 
\begin{align*}
-\ln(\vert S \vert \vert A\vert) \leq \min_{\theta \in \Theta^\Phi} R^\delta(\Phi \theta).
\end{align*}
We upper bound $\max_{\theta \in \Theta^\Phi}R^\delta (\Phi \theta)$. Since all entries of $\Phi$ and $\theta \in \Theta^\Phi$ are greater than or equal to $0$, we have that $R^\delta(\Phi \theta) \leq R(\Phi \theta)$ for all $\theta \in \Theta^\Phi$. Again, since all entries  of $\Phi$ and $\theta \in \Theta^\Phi$ are greater than or equal to $0$ we have that for any $\theta \in \Theta^\Phi$ 
\begin{align*}
R(\Phi \theta) &\triangleq (\Phi \theta)^\top \ln(\Phi \theta)\\
&\leq \Vert \Phi \theta \Vert_1 \Vert \ln(\Phi \theta) \Vert_\infty\\
&= 1 \Vert \ln(\Phi \theta) \Vert_\infty \quad \text{since $(\Phi \Theta)^\top 1=1$}\\
&\leq \Vert \ln(\Phi \theta) \Vert_1\\
&\leq 1 \quad \text{since $(\Phi \Theta)^\top 1=1$, and for any $x\in \mathbb{R}_+, \ln(x)\leq x$}.
\end{align*}
we have shown that 
\begin{align*}
\sum_{i=1}^t \langle r_i, \Phi \theta_1 - \Phi \theta_2 \rangle + \frac{t}{\eta} \left[ R^\delta(\Phi \theta_2) - R^\delta(\Phi \theta_1) \right] \leq 2t + \frac{t}{\eta}\left[ 1 + \ln( \vert S \vert \vert A\vert )\right]
\end{align*}
which finishes the proof.
\end{proof}

Lemma \ref{dist_mu_star_mu_tilde} assumes we have have at our disposal a vector $\Phi \tilde{\theta}_{t+1}$ such that $\Vert [\Phi \tilde{\theta}_{t+1}]_{(\delta,-)} \Vert_1 \leq \epsilon'$ and $\Vert (\Phi \tilde{\theta}_{t+1})^\top (P-B)\Vert_1 \leq \epsilon''$, and $F_{t}(\Phi \theta_{t+1}^*) - F_{t}(\Phi \tilde{\theta}_{t+1}) \leq \epsilon'''$  for some $\epsilon', \epsilon'', \epsilon''' \geq 0$. We now show how to obtain such error bounds by running at each time step $t$, $K(t)$ iterations of PSGA and using Lemma \ref{sgd_guarantee_on_function_c}.

\begin{lemma}
For $t =1,...,T $, let $b_{K(t)}$ the right hand side of the equation in the bound of Lemma \ref{sgd_guarantee_on_function_c} and assume the same conditions hold. After $K(t)$ iterations of PSGA, with probability at least $1-\kappa$, it holds that
\begin{align*}
\Vert [\Phi \tilde{\theta}_{t+1}]_{(\delta,-)} \Vert_1 &\leq  \frac{1}{H_t} \left[ b_{K(t)} + t [2 + \frac{1}{\eta}(1 + \ln(\vert S \vert \vert A \vert))]\right], \\
 \Vert (\Phi \tilde{\theta}_{t+1})^\top (P-B)\Vert_1 &\leq  \frac{1}{H_t} \left[ b_{K(t)} + t [2 + \frac{1}{\eta}(1 + \ln(\vert S \vert \vert A \vert))]\right], \\
F_{t}(\Phi \theta_{t+1}^*) - F_{t}(\Phi \tilde{\theta}_{t+1}) &\leq b_{K(t)}.
\end{align*}
\end{lemma}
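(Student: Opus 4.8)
The plan is to start directly from the inequality established in Lemma~\ref{sgd_guarantee_on_function_c} and reorganize it so that the two constraint-violation norms appear isolated. First I would observe that the bracketed quantity subtracted on the left-hand side of that bound is exactly $c^{t,\eta}(\tilde\theta_{t+1})$, the value of the penalized objective at the PSGA output. Recalling from \eqref{eq:c_t_eta} that $c^{t,\eta}(\theta) = F_t(\Phi\theta) - H_t\Vert(\Phi\theta)^\top(P-B)\Vert_1 - H_t\Vert[\Phi\theta]_{(\delta,-)}\Vert_1$, the guarantee of Lemma~\ref{sgd_guarantee_on_function_c} can be rewritten, with probability at least $1-\kappa$, as
\[
F_t(\Phi\theta_{t+1}^*) - F_t(\Phi\tilde\theta_{t+1}) + H_t\Vert(\Phi\tilde\theta_{t+1})^\top(P-B)\Vert_1 + H_t\Vert[\Phi\tilde\theta_{t+1}]_{(\delta,-)}\Vert_1 \le b_{K(t)}.
\]

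The third bound is then immediate: since $H_t>0$ and both violation norms are nonnegative, dropping the two penalty terms from the left-hand side can only decrease it, so $F_t(\Phi\theta_{t+1}^*) - F_t(\Phi\tilde\theta_{t+1}) \le b_{K(t)}$.

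For the first two bounds I would move the $F_t$-difference to the right-hand side, obtaining
\[
H_t\left(\Vert(\Phi\tilde\theta_{t+1})^\top(P-B)\Vert_1 + \Vert[\Phi\tilde\theta_{t+1}]_{(\delta,-)}\Vert_1\right) \le b_{K(t)} + \left(F_t(\Phi\tilde\theta_{t+1}) - F_t(\Phi\theta_{t+1}^*)\right).
\]
The key step is to control the (possibly positive) term $F_t(\Phi\tilde\theta_{t+1}) - F_t(\Phi\theta_{t+1}^*)$, for which I would invoke Lemma~\ref{diameter_using_F_t}: both $\tilde\theta_{t+1}$ and $\theta_{t+1}^*$ lie in $\Theta^\Phi$ (the former as the PSGA output over $\Theta^\Phi$, the latter since $\Delta_{M,\delta}^\Phi \subseteq \Theta^\Phi$), so this difference is at most $t\,[2 + \tfrac{1}{\eta}(1+\ln(\vert S\vert\vert A\vert))]$. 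Substituting this, and then using that each individual nonnegative norm is at most their sum, dividing through by $H_t>0$ yields both stated inequalities simultaneously.

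I expect no serious obstacle here; the only points requiring care are the sign bookkeeping when identifying the subtracted bracket with $c^{t,\eta}(\tilde\theta_{t+1})$, and verifying that the penalties genuinely vanish at $\theta_{t+1}^*$ (so that $c^{t,\eta}(\theta_{t+1}^*) = F_t(\Phi\theta_{t+1}^*)$) and that $\theta_{t+1}^*\in\Theta^\Phi$. These two facts are precisely what make the SGA optimality gap of Lemma~\ref{sgd_guarantee_on_function_c} and the diameter bound of Lemma~\ref{diameter_using_F_t} applicable to the pair $(\theta_{t+1}^*,\tilde\theta_{t+1})$, and everything else is elementary rearrangement together with nonnegativity of $H_t$ and of the two violation norms.
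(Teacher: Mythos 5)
Your proposal is correct and matches the paper's own proof essentially step for step: you identify the subtracted bracket in Lemma~\ref{sgd_guarantee_on_function_c} as $c^{t,\eta}(\tilde\theta_{t+1})$ (with the penalties vanishing at $\theta^*_{t+1}$), obtain the third bound by dropping the nonnegative penalty terms, and obtain the first two by moving the $F_t$-difference to the right, bounding it via Lemma~\ref{diameter_using_F_t}, and dividing by $H_t>0$. Your explicit checks that $\theta^*_{t+1}\in\Theta^\Phi$ and that the penalties are zero at $\theta^*_{t+1}$ are left implicit in the paper but are exactly the right points to verify.
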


\begin{proof}
To show the first two inequalities, notice that Lemma \ref{sgd_guarantee_on_function_c} implies 
\begin{align*}
\frac{1}{H_t} \Vert [\Phi \tilde{\theta}_{t+1}]_{(\delta,-)} + \frac{1}{H_t}  \Vert (\Phi \tilde{\theta}_{t+1})^\top (P-B)\Vert_1 & \leq b_{K(t)} + F_t(\Phi \tilde{\theta}_{t+1}) - F_t(\Phi \theta^*_{t+1}) \\
& \leq b_{K(t)} + t \left[ 2 + \frac{1}{\eta}(1 + \ln(\vert S \vert  \vert A\vert))\right],
\end{align*}
where the last inequality holds by Lemma \ref{diameter_using_F_t}. Since $\Vert \cdot \Vert_1 \geq 0$ we get the desired results. To show that $F_{t}(\Phi \theta_{t+1}^*) - F_{t}(\Phi \tilde{\theta}_{t+1}) \leq b_{K(t)}$ again use Lemma \ref{sgd_guarantee_on_function_c} and the fact that $\Vert \cdot \Vert_1 \geq 0$.

\end{proof}

We are ready to prove the main theorem from this section. 
\begin{proof}[Proof of Theorem \ref{thm:large_mdp_regret}]
Recall the $\Phi$-MDP-Regret regret bound from Equation \ref{reg_bound_1}.
\begin{align*}
& \max_{\pi \in \Pi^\Phi} R(\pi,T)\\
&\leq \mathbb{E}_{PSGA} [  (4\tau + 4)\\
&+ [ \max_{\mu \in \Delta^\Phi_{M,\delta}} \sum_{t=1}^T \langle \mu , r_t \rangle - \sum_{t=1}^T  \langle \mu^{\Phi \tilde{\theta}_t} , r_t \rangle ]+ [ \sum_{t=1}^T \sum_{i=0}^{t-i} e^{-\frac{i}{\tau}} \Vert \mu^{\Phi \tilde{\theta}_{t-i}} - \mu^{\Phi \tilde{\theta}_{t-(i+1)}} \Vert_1 ] ] \label{reg_bound_1}.
\end{align*}
Since it is cumbersome to work with the $\mathbb{E}_{PSGA}[\cdot]$ in our bounds let us make the following argument. For $t=1,...,T$, define $\mathcal{E}_t$ be the event that the inequality in Lemma \ref{sgd_guarantee_on_function_c} holds, let $\mathcal{E} \triangleq \cap_{t=1}^T \mathcal{E}_t$. For any random variable $X$ we know that $\mathbb{E}_{PSGA}[X] = \mathbb{E}_{PSGA}[X\vert \mathcal{E}] P(\mathcal{E}) + \mathbb{E}_{PSGA}[X\vert \mathcal{E}^c] P(\mathcal{E}^c)$. Let us work conditioned on the event $\mathcal{E}$, we will later bound $\mathbb{E}_{PSGA}[X\vert \mathcal{E}^c] P(\mathcal{E}^c)$.

By triangle inequality, Cauchy-Schwarz inequality, and the fact $\Vert r_t \Vert_\infty \leq 1$ for $t=1,...,T$, it holds that 
\begin{align*}
&\max_{\mu \in \Delta^\Phi_{M,\delta}} \sum_{t=1}^T \langle \mu , r_t \rangle - \sum_{t=1}^T  \langle \mu^{\Phi \tilde{\theta}_t} , r_t \rangle \leq \max_{\mu \in \Delta^\Phi_{M,\delta}} \sum_{t=1}^T \langle \mu , r_t \rangle - \sum_{t=1}^T  \langle \mu^{\Phi \theta_t^*} , r_t \rangle + \sum_{t=1}^T \Vert \mu^{\Phi \theta_t^*} -  \mu^{\Phi \tilde{\theta}_t}\Vert_1.
\end{align*}

Notice that
\begin{align*}
&\Vert \mu^{\Phi \tilde{\theta}_{t-i}} - \mu^{\Phi \tilde{\theta}_{t-(i+1)}} \Vert_1\\
\leq&  \Vert \mu^{\Phi \theta_{t-i}^*} - \mu^{\Phi \theta_{t-(i+1)}^*}  \Vert_1 + \Vert \mu^{\Phi \tilde{\theta}_{t-i}} -  \mu^{\Phi \theta_{t-i}^*} \Vert_1 + \Vert  \mu^{\Phi \tilde{\theta}_{t-(i+1)}} -  \mu^{\Phi \theta_{t-(i+1)}^*} \Vert_1.
\end{align*}
Therefore, we have
\begin{align*}
& \quad \max_{\pi \in \Pi^\Phi} R(\pi,T)\\
& \leq 2(2\tau + 2) + \left[  \max_{\mu \in \Delta^\Phi_{M,\delta}} \sum_{t=1}^T \langle \mu , r_t \rangle - \sum_{t=1}^T  \langle \mu^{\Phi \theta_t^*} , r_t \rangle \right] + \left[ \sum_{t=1}^T \sum_{i=0}^{t-i} e^{-\frac{i}{\tau}} \Vert \mu^{\Phi \theta_{t-i}^*} - \mu^{\Phi \theta_{t-(i+1)}^*} \Vert_1 \right] \\
& + \sum_{t=1}^T \Vert \mu^{\Phi \theta_t^*} -  \mu^{\Phi \tilde{\theta}_t}\Vert_1 + \sum_{t=1}^T \sum_{i=0}^{t-i} e^{-\frac{i}{\tau}} \left( \Vert \mu^{\Phi \tilde{\theta}_{t-i}} -  \mu^{\Phi \theta_{t-i}^*} \Vert_1 + \Vert  \mu^{\Phi \tilde{\theta}_{t-(i+1)}} -\mu^{\Phi \theta_{t-(i+1)}^*} \Vert_1 \right) \\
&\leq O \left( \tau + 4 \sqrt{\tau T} \ln(T) + \sqrt{\tau T} \ln(\vert S \vert \vert A \vert) + e^{-\sqrt{T}} T \vert S \vert \vert A \vert \right)\\
& + \sum_{t=1}^T \Vert \mu^{\Phi \theta_t^*} -  \mu^{\Phi \tilde{\theta}_t}\Vert_1 +\sum_{t=1}^T \sum_{i=0}^{t-i} e^{-\frac{i}{\tau}} \left( \Vert \mu^{\Phi \tilde{\theta}_{t-i}} -  \mu^{\Phi \theta_{t-i}^*} \Vert_1 + \Vert  \mu^{\Phi \tilde{\theta}_{t-(i+1)}} -  \mu^{\Phi \theta_{t-(i+1)}^*} \Vert_1 \right),
\end{align*}
where the second inequality follows from the proof of Theorem \ref{theorem:mdp_rftl} since we chose the same parameters $\eta = \sqrt{\frac{T}{\tau}}, \delta = e^{-\sqrt{T}}$.

If we choose $K(t)$ such that $\Vert \mu^{\Phi \theta_t^*} -  \mu^{\Phi \tilde{\theta}_t}\Vert_1$ are less than or equal to a constant $\epsilon(\epsilon'_t, \epsilon''_t, \epsilon'''_t,K(t)) $ for all $t=1,...,T$ we have
\begin{align*}
& \sum_{t=1}^T \Vert \mu^{\Phi \theta_t^*} -  \mu^{\Phi \tilde{\theta}_t}\Vert_1 + \sum_{t=1}^T \sum_{i=0}^{t-i} e^{-\frac{i}{\tau}} \left( \Vert \mu^{\Phi \tilde{\theta}_{t-i}} -  \mu^{\Phi \theta_{t-i}^*} \Vert_1 + \Vert  \mu^{\Phi \tilde{\theta}_{t-(i+1)}} -  \mu^{\Phi \theta_{t-(i+1)}^*} \Vert_1 \right) \\
&\leq T {\epsilon} + 2 T  {\epsilon} (1 + \int_0^\infty e^{-\frac{x}{\tau} dx})\\
&\leq  T {\epsilon} + 2 T  {\epsilon} (1 + \tau)\\
&= T (1+2(1+\tau) ) {\epsilon}.
\end{align*}
We have that 
\begin{align*}
\max_{\pi \in \Pi^\Phi} R(\pi,T) \leq O \left( \tau + 4 \sqrt{\tau T} \ln(T) + \sqrt{\tau T} \ln(\vert S \vert \vert A \vert) + e^{-\sqrt{T}} T \vert S \vert \vert A \vert + T \tau {\epsilon} \right). 
\end{align*}

Let $\epsilon'_t = \epsilon''_t = \frac{1}{H_t}\left[ b_{K(t)} + t [2 + \frac{1}{\eta}(1 + \ln(\vert S \vert \vert A \vert ))]\right]$, $\epsilon'''_t = b_{K(t)}$. By Lemma \ref{dist_mu_star_mu_tilde}, we have that 
\begin{align*}
err \leq \tau \ln(\frac{1}{\epsilon'})(2\epsilon' + \epsilon'') + 3\epsilon' + c(\epsilon' + \epsilon'') + \sqrt{\frac{2 \eta}{t } (\epsilon''' + G_{F_t} c (\epsilon' + \epsilon''))}.
\end{align*}

By Lemma \ref{bound_G_F_t} we know that $G_{F_t}\leq t(1+2\sqrt{\tau} d W)$ so that 
\begin{align*}
{\epsilon} \leq \tau \ln(\frac{1}{\epsilon'})(2\epsilon' + \epsilon'') + 3\epsilon' + c(\epsilon' + \epsilon'') + \sqrt{\frac{2 \sqrt{T}}{\sqrt{\tau} } (\epsilon''' +  c[1+2\sqrt{\tau} d W] (\epsilon' + \epsilon''))},
\end{align*}
where we plugged in the value for $\eta$. It is easy to see that the right hand side of the last inequality bounded above by $O(\tau \ln(\frac{1}{\epsilon'})T^{1/4}c\sqrt{dW}(\epsilon'+\epsilon''+\epsilon'''))$. So that forcing all $\epsilon',\epsilon'',\epsilon'''$ to be $O(\frac{1}{\sqrt{dW}\tau^{3/2} T^{3/4}})$ will ensure $T\tau {\epsilon}$ to be $O( c \sqrt{\tau T})$ ensuring that $\max_{\pi \in \Pi^\Phi} R(\pi,T) \leq O( c \sqrt{\tau T} \ln(T) \ln(\vert S \vert \vert A \vert) )$.

Since $\epsilon' = \epsilon'' = \frac{1}{H_t} b_{K(t)} + \frac{1}{H_t} t 2 + \frac{1}{H_t} t \frac{\sqrt{\tau}}{\sqrt{T}}(1 + \ln(\vert S \vert \vert A \vert ))$ we choose $H_t = \sqrt{dW} t \tau^2 T^{3/4}$, this ensures that $\frac{1}{H_t} t 2 + \frac{1}{H_t} t \frac{\sqrt{\tau}}{\sqrt{T}}(1 + \ln(\vert S \vert \vert A \vert ))$ are bounded above by $O(\frac{1}{\sqrt{dW} \tau^{3/2} T^{3/4}})$. 
We now must choose $K(t)$ so that $\frac{1}{H_t}b_{K(t)}$ and $\epsilon'''_t$ are both $O(\frac{1}{\sqrt{dW} \tau^{3/2} T^{3/4}})$. Since by the choice of $H_t$ we have $\frac{1}{H_t}b_{K(t)} \leq b_{K(t)}$ it suffices to bound $b_{K(t)}$. 

Set $\kappa = \frac{1}{T^2}$ in Lemma \ref{sgd_guarantee_on_function_c} and recall we are working conditioned on $\mathcal{E}$, we have that for all $t=1,...,T$   

\begin{align*}
b_{K(t)} &= \frac{W t \sqrt{d} + H_t (C_1 + C_2) + \frac{t}{\eta} (1 + \ln (Wd) + \vert \ln(\delta) \vert) C_1}{\sqrt{K(t)}}\\
&+ \sqrt{\frac{(1+4S^2 K(t))(2\ln(\frac{1}{\kappa}) + d \ln(1 + \frac{W^2 K(t)}{d}))}{K(t)^2}}\\
& \leq O(\frac{Wt\sqrt{d}H_t (C_1 + C_2) \sqrt{T} \sqrt{\tau}\ln(WTd)}{\sqrt{T} \sqrt{K(t)}})\\
& = O(\frac{Wt^2 \sqrt{d} (C_1 + C_2) \tau^{5/2} T^{3/4} \ln(WTd)}{\sqrt{K(t)}}).
\end{align*}

Setting 
\begin{align*}
\frac{Wt^2 \sqrt{d} (C_1 + C_2) \tau^{5/2} T^{3/4} \ln(WTd)}{\sqrt{K(t)}} = \frac{1}{\sqrt{dW}\tau^{3/2} T^{3/4}}
\end{align*}
and solving for $K(t)$, we get that $K(t) = \left[ W^{3/2} t^2 d \tau^4 (C_1 + C_2) T^{3/2}\ln(W Td) \right]^2$, which ensures $b_{K(t)} = O(\frac{1}{\sqrt{dW}\tau^{3/2} T^{3/4}})$.

By the choice of $\kappa$ in Lemma \ref{sgd_guarantee_on_function_c}, we have that for each $t=1,...,T$ with probability at least $1-\frac{1}{T^2}$, $ \Vert \mu^{\Phi \theta_t^*} -  \mu^{\Phi \tilde{\theta}_t}\Vert_1 \leq O(\sqrt{dW}\frac{1}{\tau^{3/2} T^{3/4}})$.
This implies that 
\begin{align*}
&\text{$\Phi$-MDP-Regret}\\
&\leq  O( c \sqrt{\tau T} \ln(T) \ln(\vert S \vert \vert A \vert) )  P(\mathcal{E})\\
& + \left[
 O \left( \tau + 4 \sqrt{\tau T} \ln(T) + \sqrt{\tau T} \ln(\vert S \vert \vert A \vert) + e^{-\sqrt{T}} T \vert S \vert \vert A \vert + \sum_{t=1}^T \Vert \mu^{\Phi \theta_t^*} -  \mu^{\Phi \tilde{\theta}_t}\Vert_1 \right) \right] P(\mathcal{E}^c)
\end{align*}
Notice that since $ \mu^{\Phi \theta_t^*}$, and $\mu^{\Phi \tilde{\theta}_t}$ are probability distributions then $\Vert \mu^{\Phi \theta_t^*} - \mu^{\Phi \tilde{\theta}_t} \Vert_1 \leq 2$. So that 
\begin{align*}
 &\text{$\Phi$-MDP-Regret}  \leq  O( c \sqrt{\tau T} \ln(T) \ln(\vert S \vert \vert A \vert) ) + O(T) P(\mathcal{E}^c)
\end{align*}
where we upper bounded $P(\mathcal{E})$ with $1$. Notice that by the choice of $\kappa$, $P(\mathcal{E}^c) = P( \cup_{t=1}^T \mathcal{E}_i^c ) \leq \sum_{t=1}^T P(\mathcal{E}_i^c) \leq \frac{1}{T}$ so that $O(T) P(\mathcal{E}^c) = O(1)$. This completes the proof.

\end{proof}

\end{appendix}

\end{document}